\newtheorem{thm}{Theorem}
\newtheorem{lem}{Lemma}
\newtheorem{assum}{Assumption}
\newtheorem{rem}{Remark}
\newtheorem{cor}{Corollary}
	\providecommand\BibTeX{{%
			\normalfont B\kern-0.5em{\scshape i\kern-0.25em b}\kern-0.8em\TeX}}}
\journal{~}
\begin{document}
\begin{frontmatter}
\title{Latent class analysis by regularized spectral clustering}

\author[label1]{Huan Qing\corref{cor1}}
\ead{qinghuan@cqut.edu.cn\&qinghuan@u.nus.edu\&qinghuan07131995@163.com}
\cortext[cor1]{Corresponding author.}
\address[label1]{School of Economics and Finance, Chongqing University of Technology, Chongqing, 400054, China}
\begin{abstract}
The latent class model is a powerful tool for identifying latent classes within populations that share common characteristics for categorical data in social, psychological, and behavioral sciences. In this article, we propose two new algorithms to estimate a latent class model for categorical data. Our algorithms are developed by using a newly defined regularized Laplacian matrix calculated from the response matrix. We provide theoretical convergence rates of our algorithms by considering a sparsity parameter and show that our algorithms stably yield consistent latent class analysis under mild conditions. Additionally, we propose a metric to capture the strength of latent class analysis and several procedures designed based on this metric to infer how many latent classes one should use for real-world categorical data. The efficiency and accuracy of our algorithms are verified by extensive simulated experiments, and we further apply our algorithms to real-world categorical data with promising results.
\end{abstract}
\begin{keyword}
Categorical data\sep latent class model \sep modularity \sep sparsity\sep spectral clustering
\end{keyword}
\end{frontmatter}
\section{Introduction}\label{sec1}
Categorical data is prevalent in social, educational, and psychological sciences, such as political surveys, psychological tests, and educational assessments \citep{sloane1996introduction,agresti2012categorical}. This type of data generally involves subjects, items, and the observed responses of subjects to items. For instance, in psychological tests and educational assessments, the subjects are usually individuals, the items are psychological and educational questions, and the responses are the choices made by individuals to each question. For categorical data, generally, the responses can be divided into binary responses and categorical responses. Binary responses typically include yes/no responses in psychological tests and political surveys, right/wrong responses in educational assessments, and so on. Categorical responses usually consist of never true/ rarely true/sometimes true/often true/always true and strongly disagree/disagree/neutral/agree/strongly agree in psychological tests, A/B/C/D choices in educational assessments and political surveys, and so on. The observed responses are usually denoted by a finite number of non-negative integers and they are recorded by an $N\times J$ observed response matrix $R$ such that $R(i,j)$ ranges in $\{0,1,2,\ldots, M\}$ and $R(i,j)$ denotes subject $i$'s response to item $j$, where $N$ is the number of subjects, $J$ is the number of items, and $m$ denotes the different choices of responses for $m=0,1,2,\ldots, M$. For binary responses, $M$ is 1 and $R\in\{0,1\}^{N\times J}$. For categorical responses, $M$ is a positive integer which is usually no smaller than 2 and $R\in\{0,1,2,\ldots,M\}^{N\times J}$. For example, for responses like strongly disagree/disagree/neutral/agree/strongly agree, psychometric researchers usually use 1, 2, 3, 4, and 5 to denote the five choices, respectively, i.e., $M=5$ for this case. For responses like A/B/C/D, researchers usually use 1, 2, 3, and 4 to denote the four choices, respectively, i.e., $M=4$ for this case. Note that for categorical data when $M\geq2$, $0$ usually means no-response.

Latent class analysis (LCA) is a crucial problem in categorical data analysis across various fields, including machine learning and statistical, social, and psychological analysis \citep{mccutcheon1987latent, hagenaars2002applied, lanza2013latent, vermunt2004latent, lanza2016latent, nylund2018ten}. The objective of LCA is to identify latent classes (or subgroups) in categorical data such that individuals within the same latent class share some common characteristics. For instance, latent classes can represent different types of personalities in psychological tests, such as shy/bold, selfishness/generosity, hesitation/resolution, or different levels of abilities in educational tests, such as basic/intermediate/skill/advanced/expert levels. These examples demonstrate that it is natural to assume that individuals within the same latent class exhibit some common response patterns.

The latent class model (LCM) \citep{goodman1974exploratory} is a widely used generative model in latent class analysis. Prior methods developed to infer latent classes under LCM include Bayesian inference methods \citep{garrett2000latent,asparouhov2011using,white2014bayeslca,li2018bayesian}, maximum likelihood estimation (MLE) approaches \citep{bakk2016robustness,chen2022beyond,gu2023joint}, and tensor-based algorithm \citep{zeng2023tensor}. On the one hand, these previous methods face several limitations. Firstly, none of them consider the influence of sparsity in categorical data, which can be characterized by a parameter in the LCM. Secondly, Bayesian inference and MLE methods are computationally expensive \citep{chen2023spectral}. Thirdly, all methods except the MLE approach in \citep{gu2023joint} lack theoretical convergence guarantees. Fourthly, most methods only consider binary responses and neglect categorical responses. Finally, there is no metric to evaluate the quality of estimated classes in latent class analysis of real-world categorical data with unknown ground truth. Therefore, it is urgent to establish theoretically-guaranteed, efficient, and easy-to-implement methods for LCM-based latent class analysis, along with criteria to evaluate algorithm performance in categorical data. On the other hand, spectral clustering algorithms based on eigen-decomposition or singular value decomposition (SVD) of certain matrices are popular techniques in machine learning, pattern recognition, statistical learning, and social network analysis (see \citep{ng2001spectral,von2007tutorial,KSB2011,qin2013regularized,lei2015consistency,SCORE,rohe2016co,binkiewicz2017covariate,mao2018overlapping,mao2021estimating,jin2023mixed} and references therein) for their good theoretical properties, ease of implementation, and computational efficiency. However, they are rarely used for the problem of latent class analysis. In this article, we take advantage of spectral clustering for the problem of latent class analysis to develop efficient algorithms. Our contributions are threefold:
\begin{itemize}
  \item We propose two new regularized spectral clustering methods for estimating a latent class model in categorical data. The proposed methods utilize the singular value decomposition of a newly defined regularized Laplacian matrix. To the best of our knowledge, we are the first to introduce this newly defined regularized Laplacian matrix to the problem of latent class analysis.
  \item By considering a sparsity parameter under LCM, we establish the error rates of our methods. We show that our methods yield consistent estimations for latent classes and other latent class model parameters under a mild condition on the sparsity of a response matrix.
  \item To measure the quality of latent class analysis, we propose a metric by using the well-known Newman-Girvan modularity \citep{newman2004finding,newman2006modularity} in social network analysis to a matrix computed from the observed response matrix $R$. We then propose some algorithms to estimate the number of latent classes by maximizing the modularity. The effectiveness of our metric is guaranteed by the high accuracies of our algorithms for substantial computer-generated categorical data under LCM.
\end{itemize}

We organize the rest of this article as follows. In Section \ref{sec2}, we briefly introduce the latent class model for categorical data. In Section \ref{sec3}, we propose our methods. Section \ref{sec4} establishes error bounds for our methods. Section \ref{sec5} demonstrates our metric for assessing the quality of latent class analysis. Section \ref{sec6} presents the introductions to the other four alternative methods for latent class analysis, substantial simulations, and two real-data applications. Section \ref{sec7} concludes this article. \ref{SecProofs} contains all proofs.
\section{The latent class model (LCM)}\label{sec2}
Here, we briefly introduce the latent class model for categorical data studied in this article. Let $\mathcal{C}=\{1,2,\ldots,N\}$ represent the collection of indices of the subjects. Suppose that all subjects are divided into $K$ disjoint latent classes, and we use $\mathcal{C}_{k}$ to represent the $k$-th latent class for $k\in[K]$, where $[m]$ for any inter $m$ means the set $\{1,2,\ldots,m\}$ in this article. Thus, $\mathcal{C}=\mathcal{C}_{1}\bigcup\mathcal{C}_{2}\ldots\bigcup\mathcal{C}_{K}$.
Set $Z\in\{0,1\}^{N\times K}$ as the classification matrix such that $Z(i,k)=1$ if subject $i$ belongs to the $k$-th latent class and 0 otherwise for $i\in[N],k\in[K]$. Since LCM assumes that each subject belongs solely to a single latent class, only one entry of $Z(i,:)$ equals 1 while the other (K-1) entries are 0. For convenience, we define $\ell$ as a vector with dimension $N\times 1$ such that $\ell(i)=k$ if the $i$-th subject belongs to the $k$-th latent class for $i\in[N],k\in[K]$. Let $N_{k}$ denote the size of $k$-th latent class for $k\in[K]$. Thus, $\sum_{k=1}^{K}N_{k}=N$. Let $N_{\mathrm{max}}=\mathrm{max}_{k\in[K]}N_{k}$ and $N_{\mathrm{min}}=\mathrm{min}_{k\in[K]}N_{k}$. For convenience, define $\mathcal{I}$ as the index of subject corresponding to $K$ subjects, one from each latent class, i.e., $\mathcal{I}=\{s_{1}, s_{2}, \ldots,s_{K}\}$ with $s_{k}$ being a subject in $\mathcal{C}_{k}$ for $k\in[K]$. For the index set $\mathcal{I}$, we always have $Z(\mathcal{I},:)=I_{K\times K}$, where $Z(\mathcal{I},:)$ is a submatrix of $Z$ for rows in the index set $\mathcal{I}$ (same notation holds for other matrices) and $I_{m\times m}$ is a $m\times m$ identity matrix for any positive integer $m$ in this article.

Define $\Theta$ as a $J\times K$ matrix whose $(j,k)$-th entry $\Theta(j,k)\in[0,M]$ for $j\in[J], k\in[K]$. We call the $\Theta$ item parameter matrix in this article. $\Theta$ captures the conditional responses under the Binomial distribution. After introducing the classification matrix $Z$ and the item parameter matrix $\Theta$, now we are ready to present the latent class model. Given the latent class that subject $i$ belongs to, LCM for categorical data assumes that subject $i$'s response to item $j$ is conditionally independent of a Binomial distribution with probability $\frac{\Theta(j,\ell(i))}{M}$ and $M$ independent trials. That is, for $i\in[N], j\in[J]$, LCM assumes that
\begin{align}\label{LCMBinomial}
\mathbb{P}(R(i,j)=m)=\binom{M}{m}(\frac{\Theta(j,\ell(i))}{M})^{m}(1-\frac{\Theta(j,\ell(i))}{M})^{M-m} \qquad m=0,1,2,\ldots,M,
\end{align}
where $\mathbb{P}(\cdot)$ means the probability and $\binom{M}{m}$ denotes the binomial coefficient $\frac{M!}{m!(M-m)!}$.
\begin{rem}\label{BinaryCase}
When $M=1$, we have $R$'s elements range in $\{0,1\}$. For this case, we have $\Theta\in[0,1]^{J\times K}$ and  Equation (\ref{LCMBinomial}) gives $\mathbb{P}(R(i,j)=1)=\Theta(j,\ell(i))$ and $\mathbb{P}(R(i,j)=0)=1-\Theta(j,\ell(i))$. This simple case has been considered in the literature of latent class analysis, see \citep{xu2017aos,zeng2023tensor}.
\end{rem}
Equation (\ref{LCMBinomial}) says that the expected response for the $i$-th subject to the $j$-th item (i.e., the expectation of $R(i,j)$) is $\Theta(j,\ell(i))$ under a Binomial distribution. Under the LCM model, define $\mathscr{R}$ as the expectation of $R$, i.e., $\mathscr{R}\equiv \mathbb{E}(R)$, where $\mathbb{E}(\cdot)$ denotes the expectation. We call $\mathscr{R}$ expectation response matrix (or call it population response matrix occasionally) in this article. Based on Equation (\ref{LCMBinomial}), the population response matrix $\mathscr{R}$  under the LCM model is
\begin{align}\label{R0ZTheta}
\mathscr{R}=Z\Theta',
\end{align}
where $\Theta'$ means the transpose of $\Theta$. Based on Equation (\ref{R0ZTheta}), we see that Equation (\ref{LCMBinomial}) can be simplified as $R(i,j)\sim\mathrm{Binomial}(M,\frac{\mathscr{R}(i,j)}{M})$. It is easy to see that LCM for categorical data only has two model parameters $Z$ and $\Theta$. For brevity, we denote LCM by $\mathrm{LCM}(Z,\Theta)$. Meanwhile, Equation (\ref{R0ZTheta}) also provides us a direct understanding on the item parameter matrix $\Theta$: from Equation (\ref{R0ZTheta}), we have $\sum_{\ell(i)\equiv k}\mathscr{R}(i,j)=\sum_{\ell(i)\equiv k}\Theta(j,k)=N_{k}\Theta(j,k)$, i.e., $\Theta(j,k)$ denotes the averaged response for subjects in latent class $\mathcal{C}_{k}$ to the $j$-th item for $j\in[J], k\in[K]$.
\section{Algorithms}\label{sec3}
In this article, we assume that each latent class has at least one subject. Thus, $Z$'s rank is $K$. To simplify our analysis, we assume that the rank of the $J$-by-$K$ item parameter matrix $\Theta$ is $K$. Based on Equation (\ref{R0ZTheta}), as a result, $\mathscr{R}$'s rank is $K$.

Define $\mathscr{D}$ as an $N\times N$ diagonal matrix such that $\mathscr{D}(i,i)=\sum_{j=1}^{J}\mathscr{R}(i,j)$ for $i\in[N]$. Define $\mathscr{D}_{\tau}=\mathscr{D}+\tau I_{N\times N}$, where $\tau\geq0$ is a  regularization parameter (we also call it regularizer or tuning parameter occasionally). Then, let the $N$-by-$J$ population regularized Laplacian matrix $\mathscr{L}_{\tau}$ be defined in the following way:
\begin{align}\label{PopulationL}
\mathscr{L}_{\tau}=\mathscr{D}^{-1/2}_{\tau}\mathscr{R}.
\end{align}

Since the diagonal matrix $\mathscr{D}_{\tau}$'s rank is $N$ and the rank of the expected response matrix $\mathscr{R}$ is $K$, $\mathscr{L}_{\tau}$ has a rank $K$. In this article, we assume that $K\ll\mathrm{min}(N, J)$, indicating that $\mathscr{L}_{\tau}$ enjoys a low-dimensional structure and it only has $K$ nonzero singular values, while the other $(\mathrm{min}(N, J)-K)$ singular values are 0. The following lemma demonstrates that the SVD of the population regularized Laplacian matrix $\mathscr{L}_{\tau}$ is intimately connected to the classification matrix $Z$ and serves as the foundation for our algorithms.
\begin{lem}\label{SVDPopulationLtau}
(SVD for $\mathscr{L}_{\tau}$) Under $\mathrm{LCM}(Z,\Theta)$, set $\mathscr{L}_{\tau}=U\Sigma V'$ as the compact SVD of $\mathscr{L}_{\tau}$ such that the $k$-th diagonal entry of the $K$-by-$K$ diagonal matrix $\Sigma$ is the $k$-th largest singular value of $\mathscr{L}_{\tau}$ for $k\in[K]$, $U\in\mathbb{R}^{N\times K}$ and $V\in\mathbb{R}^{J\times K}$ collects the corresponding left and right singular vectors, respectively, and they satisfy $U'U=I_{K\times K}$ and $V'V=I_{K\times K}$, where $\mathbb{R}$ denotes the real number set. Let $U_{*}$ be an $N\times K$ matrix such that its $i$-th row is $U_{*}(i,:)=\frac{U(i,:)}{\|U(i,:)\|_{F}}$ for $i\in[N]$, where $U(i,:)$ means the $i$-th row (same notation holds for other matrices) and $\|\cdot\|_{F}$ for a matrix denotes its Frobenius norm. We have:
\begin{description}
  \item[(1)] $\Theta$ can be rewritten as $\Theta=\mathscr{R}'Z(Z'Z)^{-1}$.
  \item[(2)] $U=ZX$ with $X=U(\mathcal{I},:)$, i.e., $U$ has $K$ distinct rows and $U(i,:)=U(\bar{i},:)$ if $\ell(i)=\ell(\bar{i})$ for $i,\bar{i}\in[N]$.
  \item[(3)] $U_{*}=ZY$ with $Y=U_{*}(\mathcal{I},:)$, i.e., $U_{*}$ has $K$ distinct rows and $U_{*}(i,:)=U_{*}(\bar{i},:)$ if $\ell(i)=\ell(\bar{i})$ for $i,\bar{i}\in[N]$.
  \item[(4)] For $k\in[K], l\in[K]$, and $k\neq l$, we have
  \begin{align}\label{XYDistance}
  \|X(k,:)-X(l,:)\|_{F}=\sqrt{\frac{1}{N_{k}}+\frac{1}{N_{l}}} \mathrm{~and~}\|Y(k,:)-Y(l,:)\|_{F}=\sqrt{2}.
  \end{align}
\end{description}
\end{lem}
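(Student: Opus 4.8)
The plan is to exploit the key structural fact that the diagonal regularizer $\mathscr{D}_{\tau}$ is constant within each latent class, so that $\mathscr{L}_{\tau}$ inherits the block structure encoded by $Z$. Part (1) is purely algebraic: starting from $\mathscr{R}=Z\Theta'$ I would transpose to get $\mathscr{R}'=\Theta Z'$, right-multiply by $Z$, and use that $Z'Z=\mathrm{diag}(N_{1},\ldots,N_{K})$ is invertible (since every latent class is nonempty) to solve $\Theta=\mathscr{R}'Z(Z'Z)^{-1}$.

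For Part (2), the crucial observation is that $\mathscr{D}(i,i)=\sum_{j}\mathscr{R}(i,j)=\sum_{j}\Theta(j,\ell(i))$ depends on $i$ only through its label $\ell(i)$; hence $\mathscr{D}_{\tau}(i,i)$ is constant on each latent class, and I can write $\mathscr{D}_{\tau}^{-1/2}Z=ZD_{K}^{-1/2}$ for a suitable $K\times K$ diagonal matrix $D_{K}$. Consequently $\mathscr{L}_{\tau}=ZD_{K}^{-1/2}\Theta'$, and since $D_{K}^{-1/2}$ is invertible and $\Theta'$ has full row rank $K$, the column space of $\mathscr{L}_{\tau}$ coincides with the column space of $Z$. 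Because the columns of $U$ span $\mathrm{col}(\mathscr{L}_{\tau})=\mathrm{col}(Z)$, every column of $U$ is a linear combination of the columns of $Z$, i.e.\ $U=ZX$ for some $K\times K$ matrix $X$; equivalently each row $U(i,:)$ depends only on $\ell(i)$. Evaluating at the representative index set and using $Z(\mathcal{I},:)=I_{K\times K}$ gives $X=U(\mathcal{I},:)$. Part (3) then follows at once: since $U(i,:)$ is class-constant, so is the normalized row $U_{*}(i,:)=U(i,:)/\|U(i,:)\|_{F}$, whence $U_{*}=ZY$ with $Y(k,:)=X(k,:)/\|X(k,:)\|_{F}$ and $Y=U_{*}(\mathcal{I},:)$.

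The main work is Part (4). I would substitute $U=ZX$ into the orthonormality relation $U'U=I_{K\times K}$ to obtain $X'(Z'Z)X=I_{K\times K}$. Writing $W=(Z'Z)^{1/2}X$, this says $W'W=I_{K\times K}$, so the square matrix $W$ is orthogonal, hence $WW'=I_{K\times K}$ as well, which rearranges to $XX'=(Z'Z)^{-1}=\mathrm{diag}(1/N_{1},\ldots,1/N_{K})$. Reading off entries, the rows of $X$ are mutually orthogonal with $\|X(k,:)\|_{F}^{2}=1/N_{k}$ and $X(k,:)X(l,:)'=0$ for $k\neq l$. Expanding $\|X(k,:)-X(l,:)\|_{F}^{2}=\|X(k,:)\|_{F}^{2}+\|X(l,:)\|_{F}^{2}-2X(k,:)X(l,:)'$ then yields $1/N_{k}+1/N_{l}$. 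Since each $Y(k,:)$ is the unit-normalized $X(k,:)$, the rows of $Y$ remain orthogonal and have unit norm, so $\|Y(k,:)-Y(l,:)\|_{F}^{2}=1+1-0=2$.

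The one point requiring care---and the step I expect to be the main obstacle to state cleanly---is justifying $\mathrm{col}(\mathscr{L}_{\tau})=\mathrm{col}(Z)$ together with the passage $X'(Z'Z)X=I_{K\times K}\Rightarrow XX'=(Z'Z)^{-1}$, both of which hinge on $X$ (equivalently $W$) being a genuine invertible, respectively orthogonal, $K\times K$ matrix. This is precisely where the full-rank assumptions on $Z$ and $\Theta$ (and hence on $\mathscr{R}$ and $\mathscr{L}_{\tau}$) are indispensable, and I would flag them explicitly at the start of the argument.
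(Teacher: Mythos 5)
Your proof is correct and follows essentially the same route as the paper's: both rest on the class-constancy of the diagonal entries of $\mathscr{D}_{\tau}$ to obtain $U=ZX$ with $X=U(\mathcal{I},:)$, and then derive $XX'=(Z'Z)^{-1}=\mathrm{diag}(1/N_{1},\ldots,1/N_{K})$ from $U'U=I_{K\times K}$ to compute the row distances for $X$ and the normalized rows $Y$. The only cosmetic differences are that the paper reaches $U=ZX$ via the explicit formula $U=\mathscr{D}_{\tau}^{-1/2}Z\Theta'V\Sigma^{-1}$ rather than your column-space identity $\mathrm{col}(\mathscr{L}_{\tau})=\mathrm{col}(Z)$, and that you make explicit (via the orthogonal matrix $W=(Z'Z)^{1/2}X$) the step $X'(Z'Z)X=I_{K\times K}\Rightarrow XX'=(Z'Z)^{-1}$, which the paper asserts without justification.
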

The 2nd (and 3rd) statement of Lemma \ref{SVDPopulationLtau} provides a useful fact: for the ideal case that the population response matrix $\mathscr{R}$ is known in advance, after computing the population regularized Laplacian matrix $\mathscr{L}_{\tau}$ and its compact SVD $U\Sigma V'$, applying K-means algorithm on all rows of $U$ (and $U_{*}$) will perfectly recover the classification matrix $Z$, where the exact form of K-means clustering is displayed in Equation (2.2) of \citep{lei2015consistency} and we omit it here. After obtaining $Z$, $\Theta$ can be recovered by setting $\Theta=\mathscr{R}'Z(Z'Z)^{-1}$ according to the 1st statement of Lemma \ref{SVDPopulationLtau}. The above analysis suggests the following two ideal algorithms, Ideal LCA-RSC and Ideal LCA-RSCn, where LCA-RSC is short for latent class analysis by regularized spectral clustering and $n$ means normalized.
\begin{algorithm}
\caption{\textbf{Ideal LCA-RSC}}
\label{alg:IdealRSC}
\begin{algorithmic}[1]
\Require Population response matrix $\mathscr{R}$, number of latent classes $K$, and regularizer $\tau$.
\Ensure Classification matrix $Z$ and item parameter matrix $\Theta$.
\State Calculate $\mathscr{L}_{\tau}$ by Equation (\ref{PopulationL}).
\State Obtain $U\Sigma V'$, the top $K$ SVD of $\mathscr{L}_{\tau}$.
\State Run K-means algorithm on all rows of $U$ with $K$ clusters to obtain $Z$.
\State Recover $\Theta$ by setting $\Theta=\mathscr{R}'Z(Z'Z)^{-1}$.
\end{algorithmic}
\end{algorithm}

\begin{algorithm}
\caption{\textbf{Ideal LCA-RSCn}}
\label{alg:IdealRSCn}
\begin{algorithmic}[1]
\Require $\mathscr{R}, K$, and $\tau$.
\Ensure $Z$ and $\Theta$.
\State Calculate  $\mathscr{L}_{\tau}$ by Equation (\ref{PopulationL}).
\State Obtain $U\Sigma V'$, the top $K$ SVD of $\mathscr{L}_{\tau}$. Then calculate $U_{*}$ from $U$.
\State Run K-means algorithm on all rows of $U_{*}$ with $K$ clusters to obtain $Z$.
\State Recover $\Theta$ by setting $\Theta=\mathscr{R}'Z(Z'Z)^{-1}$.
\end{algorithmic}
\end{algorithm}

For real-world categorical data, we only know the observed response matrix $R$ instead of its expectation $\mathscr{R}$. To estimate the classification matrix $Z$ and the item parameter matrix $\Theta$ from $R$, first we define the regularized Laplacian matrix $L_{\tau}$:
\begin{align}\label{realL}
L_{\tau}=D^{-1/2}_{\tau}R,
\end{align}
where $D_{\tau}=D+\tau I_{N\times N}$, and $D$ is an $N\times N$ diagonal matrix with $D(i,i)=\sum_{j=1}^{J}R(i,j)$ for $i\in[N]$. We then let $\hat{U}\hat{\Sigma}\hat{V}'$ be the top $K$ SVD of the regularized Laplacian matrix $L_{\tau}$ such that $\hat{\Sigma}$ is a $K\times K$ diagonal matrix containing the top $K$ singular values of $L_{\tau}$, $\hat{U}$ and $\hat{V}$ collect the corresponding left and right singular vectors, respectively, and they satisfy $\hat{U}'\hat{U}=I_{K\times K}$ and $\hat{V}'\hat{V}=I_{K\times K}$. Because $\mathscr{L}_{\tau}$'s rank is $K$ and it is the population version of $L_{\tau}$, we see that $L_{\tau}$'s top $K$ SVD should be a good approximation of $\mathscr{L}_{\tau}$. Let $\hat{U}_{*}$ be an $N\times K$ matrix whose $i$-th row is $\hat{U}_{*}(i,:)=\frac{\hat{U}(i,:)}{\|\hat{U}(i,:)\|_{F}}$ for $i\in[N]$. Then applying K-means to all rows of $\hat{U}$ (and $\hat{U}_{*}$) should return a good estimation of the classification matrix $Z$. The above analysis can be summarized using the following two algorithms, LCA-RSC and LCA-RSCn, which extend the Ideal LCA-RSC and the Ideal LCA-RSCn naturally to the real case. Note that in both algorithms, $\tilde{\Theta}$'s elements may not range in $[0,M]$ while $\Theta$'s entries range in $[0,M]$, therefore we should update $\tilde{\Theta}$'s elements in the range $[0,M]$.

\begin{algorithm}
\caption{\textbf{LCA-RSC}}
\label{alg:RSC}
\begin{algorithmic}[1]
\Require Observed response matrix $R\in\{0,1,2,\ldots, M\}^{N\times J}$, number of latent classes $K$, and regularizer $\tau$ (as we analyze later, a good default for $\tau$ is $M\mathrm{max}(N,J)$.).
\Ensure Estimated classification matrix $\hat{Z}$ and estimated item parameter matrix $\hat{\Theta}$.
\State Calculate the regularized Laplacian matrix $L_{\tau}$ by Equation (\ref{realL}).
\State Obtain $\hat{U}\hat{\Sigma}\hat{V}'$, the top $K$ SVD of $L_{\tau}$.
\State Run K-means algorithm on all rows of $\hat{U}$ with $K$ clusters to obtain $\hat{Z}$.
\State Set $\tilde{\Theta}=R'\hat{Z}(\hat{Z}'\hat{Z})^{-1}$.
\State For $j\in[J], k\in[K]$, write $\hat{\Theta}$ as $\hat{\Theta}(j,k)=\begin{cases}
0& \mbox{if~}\tilde{\Theta}(j,k)<0,\\
\tilde{\Theta}(j,k), & \mbox{if~}0\leq\tilde{\Theta}(j,k)\leq M,\\
M&\mbox{if~}\tilde{\Theta}(j,k)>M.
\end{cases}.$
\end{algorithmic}
\end{algorithm}

\begin{algorithm}
\caption{\textbf{LCA-RSCn}}
\label{alg:RSCn}
\begin{algorithmic}[1]
\Require $R, K$, and $\tau$ (the default $\tau$ is $M\mathrm{max}(N,J)$.).
\Ensure $\hat{Z}$ and $\hat{\Theta}$.
\State Calculate  $L_{\tau}$ by Equation (\ref{realL}).
\State Obtain $\hat{U}\hat{\Sigma}\hat{V}'$, the top $K$ SVD of $L_{\tau}$. Then calculate $\hat{U}_{*}$ from $\hat{U}$.
\State Run K-means algorithm on all rows of $\hat{U}_{*}$ with $K$ clusters to obtain $\hat{Z}$.
\State Obtain $\hat{\Theta}$ using Steps 4-5 of Algorithm \ref{alg:RSC}.
\end{algorithmic}
\end{algorithm}

For both algorithms, the complexities of the SVD step, the K-means step, and the last two steps are $O(\mathrm{max}(N^{2}, J^{2})K)$, $O(NlK^{2})$, and $O(JNK)$, respectively, where $l$ is the number of iterations in K-means and we set it as 100 in this article. Since $K\ll\mathrm{min}(N, J)$, as a result, both algorithms have a complexity of $O(\mathrm{max}(N^{2}, J^{2})K)$.
\section{Main results}\label{sec4}
In this section, we show the estimation consistencies of LCA-RSC and LCA-RSCn under the latent class model. Before presenting our theoretical results, we introduce the sparsity scaling. By Equation (\ref{LCMBinomial}), we see that $\mathbb{P}(R(i,j)=0)=(1-\frac{\Theta(j,\ell(i))}{M})^{M}$ for $i\in[N], j\in[J]$, which implies that the probability of $R(i,j)$ equals to 0 increases if we decrease $\Theta(j,\ell(i))$. For convenience, we set $\rho=\mathrm{max}_{j\in[J],k\in[K]}\Theta(j,k)$ and call $\rho$ the sparsity parameter in this article since it controls the number of 0s in $R$. Since $\Theta$'s elements range in $[0, M]$ under $\mathrm{LCM}(Z,\Theta)$, we see that the sparsity parameter $\rho$ ranges in $(0,M]$. In this article, we allow $\rho$ to go to zero and we will let it enter the theoretical bounds to study its influence on the performances of LCA-RSC and LCA-RSCn. For convenience, let $B=\frac{\Theta}{\rho}$. We see that $\mathrm{max}_{j\in[J],k\in[K]}B(j,k)=1$.

The following assumption controls the sparsity of a response matrix for our theoretical analysis.
\begin{assum}\label{Assum1}
 $\rho\mathrm{max}(N,J)\geq M^{2}\mathrm{log}(N+J)$.
\end{assum}
The following lemma bounds $\|L_{\tau}-\mathscr{L}_{\tau}\|$ under LCM.
\begin{lem}\label{boundLLCM}
Under $\mathrm{LCM}(Z,\Theta)$, suppose Assumption \ref{Assum1} is satisfied, with probability at least $1-o((N+J)^{-3})$,
\begin{align*}
\|L_{\tau}-\mathscr{L}_{\tau}\|=O((1+\sqrt{\frac{MN}{\tau+\delta_{\mathrm{min}}}})\sqrt{\frac{\rho\mathrm{max}(N, J)\mathrm{log}(N+J)}{\tau+\delta_{\mathrm{min}}}}),
\end{align*}
where $\delta_{\mathrm{min}}=\mathrm{min}_{i\in[N]}\mathscr{D}(i,i)$ and $\|\cdot\|$ for a matrix denotes its spectral norm.
\end{lem}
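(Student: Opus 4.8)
The plan is to split the error into a \emph{response-noise} part, where the randomness sits in $R-\mathscr{R}$, and a \emph{degree-fluctuation} part, where it sits in $D_\tau$ versus $\mathscr{D}_\tau$. Writing
\begin{align*}
L_\tau-\mathscr{L}_\tau=\mathscr{D}_\tau^{-1/2}(R-\mathscr{R})+(D_\tau^{-1/2}-\mathscr{D}_\tau^{-1/2})R,
\end{align*}
I would bound the first summand by a matrix concentration inequality and the second by combining a scalar degree-concentration bound with a deterministic control on $\|\mathscr{L}_\tau\|$. A virtue of this particular split is that $\mathscr{D}_\tau$ is the \emph{population} diagonal matrix, with smallest entry $\tau+\delta_{\min}$, so $\|\mathscr{D}_\tau^{-1/2}\|=(\tau+\delta_{\min})^{-1/2}$ exactly; this is the source of the $(\tau+\delta_{\min})^{-1/2}$ factors and requires no probabilistic argument.

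For the response-noise term, the entries of $R-\mathscr{R}$ are independent and mean-zero, with $|R(i,j)-\mathscr{R}(i,j)|\le M$ and $\mathrm{Var}(R(i,j))=\mathscr{R}(i,j)(1-\tfrac{\mathscr{R}(i,j)}{M})\le\rho$. Hence every row- and column-variance sum is at most $\rho\,\mathrm{max}(N,J)$, and a matrix Bernstein inequality gives, for $t=C\sqrt{\rho\,\mathrm{max}(N,J)\mathrm{log}(N+J)}$,
\begin{align*}
\mathbb{P}(\|R-\mathscr{R}\|\ge t)\le(N+J)\mathrm{exp}\Big(\frac{-t^{2}/2}{\rho\,\mathrm{max}(N,J)+Mt/3}\Big).
\end{align*}
Assumption \ref{Assum1} is exactly what forces $Mt\lesssim\rho\,\mathrm{max}(N,J)$, so the sub-Gaussian regime dominates and, for $C$ large enough, $\|R-\mathscr{R}\|=O(\sqrt{\rho\,\mathrm{max}(N,J)\mathrm{log}(N+J)})$ with probability $1-o((N+J)^{-3})$. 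Multiplying by $\|\mathscr{D}_\tau^{-1/2}\|$ yields the ``$1$'' term of the stated bound.

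The degree-fluctuation term is the crux. Since $\Delta:=D_\tau^{-1/2}-\mathscr{D}_\tau^{-1/2}$ is diagonal, I would split $R=\mathscr{R}+(R-\mathscr{R})$ and treat $\Delta\mathscr{R}$ as the main piece. Writing $\Delta\mathscr{R}=(\Delta\mathscr{D}_\tau^{1/2})\mathscr{L}_\tau$, its norm is at most $\|\Delta\mathscr{D}_\tau^{1/2}\|\,\|\mathscr{L}_\tau\|$. The factor $\|\mathscr{L}_\tau\|$ is controlled deterministically: because $\mathscr{L}_\tau\mathscr{L}_\tau'$ is positive semidefinite with $\mathrm{tr}(\mathscr{L}_\tau\mathscr{L}_\tau')=\sum_i\frac{\sum_j\mathscr{R}(i,j)^2}{\mathscr{D}(i,i)+\tau}\le\rho\sum_i\frac{\mathscr{D}(i,i)}{\mathscr{D}(i,i)+\tau}\le\rho N$, one gets $\|\mathscr{L}_\tau\|\le\sqrt{\rho N}$, which is where the extra $\sqrt{N}$ in the second term originates. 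For $\|\Delta\mathscr{D}_\tau^{1/2}\|=\mathrm{max}_i|\sqrt{(\mathscr{D}(i,i)+\tau)/(D(i,i)+\tau)}-1|$, I would apply a scalar Bernstein bound to $D(i,i)=\sum_j R(i,j)$ (mean $\mathscr{D}(i,i)$, variance $\le\mathscr{D}(i,i)\le M\,\mathrm{max}(N,J)$, summands bounded by $M$) with a union bound over $i\in[N]$, giving with probability $1-o((N+J)^{-3})$ both $|D(i,i)-\mathscr{D}(i,i)|\lesssim\sqrt{M\,\mathrm{max}(N,J)\mathrm{log}(N+J)}$ and the ratio control $D(i,i)+\tau\asymp\mathscr{D}(i,i)+\tau$. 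The ratio control permits the linearization $|\sqrt{(\mathscr{D}(i,i)+\tau)/(D(i,i)+\tau)}-1|\lesssim|D(i,i)-\mathscr{D}(i,i)|/(\tau+\delta_{\min})$, so $\|\Delta\mathscr{D}_\tau^{1/2}\|\lesssim\sqrt{M\,\mathrm{max}(N,J)\mathrm{log}(N+J)}/(\tau+\delta_{\min})$. Multiplying the two factors reproduces the second term $O\!\big(\sqrt{\tfrac{MN}{\tau+\delta_{\min}}}\sqrt{\tfrac{\rho\,\mathrm{max}(N,J)\mathrm{log}(N+J)}{\tau+\delta_{\min}}}\big)$.

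The residual cross term $\Delta(R-\mathscr{R})$ is then handled by $\|\Delta\|\,\|R-\mathscr{R}\|$ with $\|\Delta\|\lesssim\mathrm{max}_i|D(i,i)-\mathscr{D}(i,i)|/(\tau+\delta_{\min})^{3/2}$; the two concentration bounds show it carries an extra $(\tau+\delta_{\min})^{-1/2}$ relative to the leading terms and is absorbed into the $O(\cdot)$. A union bound over the three high-probability events preserves the overall probability $1-o((N+J)^{-3})$. I expect the main obstacle to be the degree-fluctuation analysis: keeping every estimate expressed through the \emph{population} quantities $\delta_{\min}$ and $\tau$ rather than the random $D(i,i)$ requires the ratio control $D(i,i)+\tau\asymp\mathscr{D}(i,i)+\tau$ to hold uniformly in $i$, which is precisely where the regularizer $\tau$ is essential and where Assumption \ref{Assum1} must be invoked to force the variance part of the Bernstein bound to dominate its tail term $M\,\mathrm{log}(N+J)$.
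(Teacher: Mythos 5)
Your overall architecture is the same as the paper's: the identical split $L_{\tau}-\mathscr{L}_{\tau}=\mathscr{D}_{\tau}^{-1/2}(R-\mathscr{R})+(D_{\tau}^{-1/2}-\mathscr{D}_{\tau}^{-1/2})R$, matrix Bernstein for the first summand, and scalar Bernstein with a union bound for the degrees. (Your variant of the first summand, bounding $\|R-\mathscr{R}\|$ and multiplying by $\|\mathscr{D}_{\tau}^{-1/2}\|=(\tau+\delta_{\mathrm{min}})^{-1/2}$, is equivalent to the paper's application of Bernstein directly to the scaled matrix; both give the ``$1$'' term.) Where you genuinely diverge is the degree-fluctuation term: writing $\Delta=D_{\tau}^{-1/2}-\mathscr{D}_{\tau}^{-1/2}$, you split $R=\mathscr{R}+(R-\mathscr{R})$, factor $\Delta\mathscr{R}=(\Delta\mathscr{D}_{\tau}^{1/2})\mathscr{L}_{\tau}$, and use the (correct and nice) trace bound $\|\mathscr{L}_{\tau}\|\leq\sqrt{\rho N}$, whereas the paper factors $(D_{\tau}^{-1/2}-\mathscr{D}_{\tau}^{-1/2})R$ through $\|D^{-1/2}R\tilde{D}^{-1/2}\|=1$ and $\|\tilde{D}^{1/2}\|\leq\sqrt{MN}$, with $\tilde{D}$ the diagonal matrix of column sums of $R$; that route needs no splitting of $R$ and generates no cross term.

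There is, however, a genuine gap in your route, and it sits exactly at the step you flag as the main obstacle. Your factorization puts the \emph{random} degrees in denominators: both $\|\Delta\mathscr{D}_{\tau}^{1/2}\|=\mathrm{max}_{i}|\sqrt{(\mathscr{D}(i,i)+\tau)/(D(i,i)+\tau)}-1|$ and your bound on $\|\Delta\|$ require the uniform ratio control $D(i,i)+\tau\asymp\mathscr{D}(i,i)+\tau$, i.e., essentially $\mathrm{max}_{i}|D(i,i)-\mathscr{D}(i,i)|\lesssim\tau+\delta_{\mathrm{min}}$. You assert this follows from Bernstein plus Assumption \ref{Assum1}, but it does not: Assumption \ref{Assum1} lower-bounds $\rho\,\mathrm{max}(N,J)$ against $M^{2}\mathrm{log}(N+J)$ and says nothing about $\tau+\delta_{\mathrm{min}}$, while the lemma is claimed for every $\tau\geq0$ and no lower bound on $\delta_{\mathrm{min}}$ is assumed. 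Concretely, take $\tau$ near zero and one latent class whose column of $\Theta$ sums to $O(1)$, so $\delta_{\mathrm{min}}=O(1)$; Assumption \ref{Assum1} can hold comfortably, yet every subject in that class has $D(i,i)=0$ with probability bounded away from zero, so $(\mathscr{D}(i,i)+\tau)/(D(i,i)+\tau)$ explodes, your linearization fails, and indeed your intermediate bound $\|\Delta\mathscr{D}_{\tau}^{1/2}\|\,\|\mathscr{L}_{\tau}\|$ diverges as $\tau\to0$ even though $\|L_{\tau}-\mathscr{L}_{\tau}\|$ itself stays bounded. The paper's arrangement dodges this entirely: it bounds $\|I-D_{\tau}^{1/2}\mathscr{D}_{\tau}^{-1/2}\|\leq\mathrm{max}_{i}|D(i,i)-\mathscr{D}(i,i)|/(\tau+\delta_{\mathrm{min}})$, with the random degree only in the numerator, valid for all $\tau\geq0$ with no ratio control. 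Your argument is repairable: when $\tau+\delta_{\mathrm{min}}\lesssim\sqrt{\rho\,\mathrm{max}(N,J)\mathrm{log}(N+J)}$ the claimed bound exceeds $\sqrt{MN}$, while deterministically $\|L_{\tau}-\mathscr{L}_{\tau}\|\leq\|L_{\tau}\|_{F}+\|\mathscr{L}_{\tau}\|_{F}\leq2\sqrt{MN}$, so the lemma is vacuously true in precisely the regime where ratio control can fail, and ratio control may be assumed in the complementary regime. But this case split (or a restriction to $\tau\gtrsim M\,\mathrm{max}(N,J)$) must be stated explicitly; the same caveat applies to your claim that the cross term $\Delta(R-\mathscr{R})$ is ``absorbed,'' since its ratio to the second leading term is $\sqrt{\mathrm{max}(N,J)\mathrm{log}(N+J)/(N(\tau+\delta_{\mathrm{min}}))}$, which is $O(1)$ only under a similar lower bound on $\tau+\delta_{\mathrm{min}}$.
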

\begin{rem}
\begin{figure}
\centering
\includegraphics[width=0.5\textwidth]{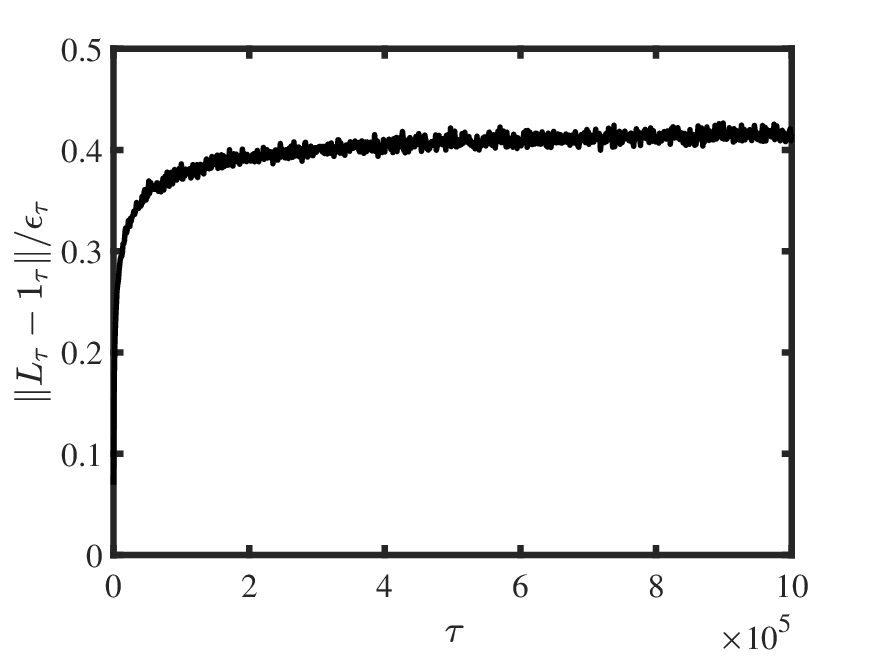}
\caption{$\frac{\|L_{\tau}-\mathscr{L}_{\tau}\|}{\epsilon_{\tau}}$ against the regularizer $\tau$.}
\label{Lepsilon} 
\end{figure}
Set $\epsilon_{\tau}=(1+\sqrt{\frac{MN}{\tau+\delta_{\mathrm{min}}}})\sqrt{\frac{\rho\mathrm{max}(N, J)\mathrm{log}(N+J)}{\tau+\delta_{\mathrm{min}}}}$, which can be seen as a function of the regularization parameter $\tau$. It is easy to see that when all model parameters are fixed, $\epsilon_{\tau}$ decreases to zero as $\tau$ increases to infinity. In fact, $\|L_{\tau}-\mathscr{L}_{\tau}\|$ under LCM is usually smaller than $\epsilon_{\tau}$. To explain this statement, we provide a toy example in Figure \ref{Lepsilon} by plotting $\frac{\|L_{\tau}-\mathscr{L}_{\tau}\|}{\epsilon_{\tau}}$ against the regularizer $\tau$. For this figure, we let $N=500, J=200, K=3, M=5, \rho=1$, all entries of $\Theta$ be random values from a Uniform distribution on $(0,1)$, and each subject belong to one of the three latent classes with equality probability. After setting $Z$ and $\Theta$, we can generate $R$ from LCM. Then we can plot $\frac{\|L_{\tau}-\mathscr{L}_{\tau}\|}{\epsilon_{\tau}}$ against $\tau$. From Figure \ref{Lepsilon}, we observe that $\frac{\|L_{\tau}-\mathscr{L}_{\tau}\|}{\epsilon_{\tau}}$ is always smaller than 0.5, which confirms our statement. We also observe that $\frac{\|L_{\tau}-\mathscr{L}_{\tau}\|}{\epsilon_{\tau}}$ increases as $\tau$ increases, which means that $\epsilon_{\tau}$ represents a good order of $\|L_{\tau}-\mathscr{L}_{\tau}\|$.
\end{rem}

Define $\hat{\mathcal{C}}=\hat{\mathcal{C}}_{1}\bigcup\hat{\mathcal{C}}_{2}\ldots\bigcup\hat{\mathcal{C}}_{K}$ as the set collecting all estimated partitions, where $\hat{\mathcal{C}}_{k}=\{i: \hat{Z}(i,k)=1 \mathrm{~for~}i\in[N]\}$ for $k\in[K]$. To evaluate the difference between the ground-truth classification set $\mathcal{C}$ and the estimated classification set $\hat{\mathcal{C}}$, we consider the \emph{Clustering error} used in \citep{joseph2016impact}. This measure is defined as
\begin{align}\label{clusteringerror}
\hat{f}=\mathrm{min}_{\pi\in S_{K}}\mathrm{max}_{k\in[K]}\frac{|\mathcal{C}_{k}\cap \mathcal{\hat{\mathcal{C}}}^{c}_{\pi(k)}|+|\mathcal{C}^{c}_{k}\cap \mathcal{\hat{\mathcal{C}}}_{\pi(k)}|}{N_{K}},
\end{align}
where $S_{K}$ means all permutations of $\{1,2,\ldots, K\}$ and the superscript $c$ means the complementary sets. According to the reference \citep{joseph2016impact}, we know that the Clustering error $\hat{f}$ quantifies the maximum proportion of subjects in the symmetric difference of $\mathcal{C}_{k}$ and $\hat{\mathcal{C}}_{\pi(k)}$.

Our main result, which bounds the Clustering errors of LCA-RSC and LCA-RSCn, is presented below.
\begin{thm}\label{mainRLCM}
	Under $\mathrm{LCM}(Z,\Theta)$, when Assumption \ref{Assum1} is satisfied, with probability at least $1-o((N+J)^{-3})$,
\begin{align*}
&\hat{f}=O((1+\frac{MN}{\tau+\delta_{\mathrm{min}}}+2\sqrt{\frac{MN}{\tau+\delta_{\mathrm{min}}}})\frac{K^{2}(\tau+\delta_{\mathrm{max}})N_{\mathrm{max}}\mathrm{max}(N,J)\mathrm{log}(N+J)}{(\tau+\delta_{\mathrm{min}})\rho\sigma^{2}_{K}(B)N^{2}_{\mathrm{min}}}) \mathrm{~and~}\frac{\|\hat{\Theta}-\Theta\|_{F}}{\|\Theta\|_{F}}=O(\sqrt{\frac{K~\mathrm{max}(N,J)\mathrm{log}(N+J)}{\rho N_{\mathrm{min}}\sigma^{2}_{K}(B)}}),
\end{align*}
where $\delta_{\mathrm{max}}=\mathrm{max}_{i\in[N]}\mathscr{D}(i,i)$ and $\sigma_{K}(B)$ denotes $B$'s $K$-th largest singular value.
\end{thm}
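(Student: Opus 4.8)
The plan is to run the standard three-stage analysis for spectral clustering, adapted to the regularized Laplacian $\mathscr{L}_{\tau}$, and to treat the two conclusions of Theorem \ref{mainRLCM} separately. For the clustering error I would (i) control the perturbation of the left singular subspace $\hat{U}$ by a Davis--Kahan/Wedin bound fed by Lemma \ref{boundLLCM}, (ii) lower bound $\sigma_{K}(\mathscr{L}_{\tau})$ in terms of $\rho\sigma_{K}(B)$, $N_{\mathrm{min}}$, $\tau$, and $\delta_{\mathrm{max}}$, and (iii) convert the subspace error into $\hat{f}$ through a K-means rounding lemma together with the row geometry of $U$ (resp.\ $U_{*}$) from Lemma \ref{SVDPopulationLtau}. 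For the item-parameter error, I would first observe that the clipping in Step 5 of Algorithm \ref{alg:RSC} is a coordinatewise projection onto $[0,M]$, a convex set containing every entry of the true $\Theta$, so $\|\hat{\Theta}-\Theta\|_{F}\le\|\tilde{\Theta}-\Theta\|_{F}$; it then suffices to bound $\|\tilde{\Theta}-\Theta\|_{F}$.

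Since $\mathscr{L}_{\tau}$ has rank $K$ and $\hat{U},U\in\mathbb{R}^{N\times K}$ collect the top $K$ left singular vectors of $L_{\tau}$ and $\mathscr{L}_{\tau}$, Wedin's theorem and Lemma \ref{boundLLCM} yield an orthogonal $O$ with
\begin{align*}
\|\hat{U}O-U\|_{F}=O\Big(\frac{\sqrt{K}\,\|L_{\tau}-\mathscr{L}_{\tau}\|}{\sigma_{K}(\mathscr{L}_{\tau})}\Big)=O\Big(\frac{\sqrt{K}\,\epsilon_{\tau}}{\sigma_{K}(\mathscr{L}_{\tau})}\Big),
\end{align*}
with $\epsilon_{\tau}$ as in the remark following Lemma \ref{boundLLCM}. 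To make this explicit I would write $\mathscr{L}_{\tau}=\mathscr{D}^{-1/2}_{\tau}Z\Theta'$ with $\Theta=\rho B$: because the rows of $Z$ are standard basis vectors, $(\mathscr{D}^{-1/2}_{\tau}Z)'\mathscr{D}^{-1/2}_{\tau}Z$ is diagonal with $k$-th entry $\sum_{i:\ell(i)=k}\mathscr{D}^{-1}_{\tau}(i,i)\ge N_{k}/(\tau+\delta_{\mathrm{max}})$, so $\sigma_{K}(\mathscr{D}^{-1/2}_{\tau}Z)\ge\sqrt{N_{\mathrm{min}}/(\tau+\delta_{\mathrm{max}})}$. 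Combining this with $\sigma_{K}(\Theta')=\rho\sigma_{K}(B)$ through $\sigma_{K}(AB)\ge\sigma_{K}(A)\sigma_{K}(B)$ (valid since $\mathscr{D}^{-1/2}_{\tau}Z$ has full column rank and $\Theta'$ full row rank) gives
\begin{align*}
\sigma^{2}_{K}(\mathscr{L}_{\tau})\ge\frac{N_{\mathrm{min}}\,\rho^{2}\sigma^{2}_{K}(B)}{\tau+\delta_{\mathrm{max}}},
\end{align*}
hence $\|\hat{U}O-U\|^{2}_{F}=O\big(K\epsilon^{2}_{\tau}(\tau+\delta_{\mathrm{max}})/(N_{\mathrm{min}}\rho^{2}\sigma^{2}_{K}(B))\big)$.

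Next I would invoke a K-means perturbation lemma in the spirit of Lemma 5.3 of \citep{lei2015consistency}. By Lemma \ref{SVDPopulationLtau}, $U=ZX$ has $K$ distinct rows with $\|X(k,:)-X(l,:)\|_{F}=\sqrt{1/N_{k}+1/N_{l}}\ge\sqrt{2/N_{\mathrm{max}}}$, so each misclustered subject must absorb row error at least the separation, giving $|S_{k}|\lesssim\|\hat{U}O-U\|^{2}_{F}/\mathrm{min}_{l\neq k}\|X(k,:)-X(l,:)\|^{2}_{F}\lesssim N_{\mathrm{max}}\|\hat{U}O-U\|^{2}_{F}$ for the error set $S_{k}$ of cluster $k$. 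Bounding $\hat{f}\le N^{-1}_{\mathrm{min}}\sum_{k}|S_{k}|\le KN^{-1}_{\mathrm{min}}\mathrm{max}_{k}|S_{k}|$ supplies the remaining factor $K$, and after substituting the two displayed bounds and expanding $(1+\sqrt{MN/(\tau+\delta_{\mathrm{min}})})^{2}=1+MN/(\tau+\delta_{\mathrm{min}})+2\sqrt{MN/(\tau+\delta_{\mathrm{min}})}$ inside $\epsilon^{2}_{\tau}$, the stated rate for $\hat{f}$ drops out. For LCA-RSCn I would use statement (3) of Lemma \ref{SVDPopulationLtau}: the normalized rows $U_{*}=ZY$ enjoy constant separation $\sqrt{2}$, but passing from $\hat{U}$ to $\hat{U}_{*}$ inflates the row error by a factor of order $1/\mathrm{min}_{i}\|U(i,:)\|_{F}=O(\sqrt{N_{\mathrm{max}}})$, so both variants obey the same bound.

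Finally, writing $\tilde{\Theta}=R'\hat{Z}(\hat{Z}'\hat{Z})^{-1}$ and $\Theta=\mathscr{R}'Z(Z'Z)^{-1}$, I would split
\begin{align*}
\tilde{\Theta}-\Theta=(R-\mathscr{R})'\hat{Z}(\hat{Z}'\hat{Z})^{-1}+\mathscr{R}'\big(\hat{Z}(\hat{Z}'\hat{Z})^{-1}-Z(Z'Z)^{-1}\big).
\end{align*}
The second term vanishes when $\hat{Z}=Z$ up to permutation and is driven by $\hat{f}$, hence lower order. For the first, $\|\hat{Z}(\hat{Z}'\hat{Z})^{-1}\|_{F}=\sqrt{\sum_{k}\hat{N}^{-1}_{k}}=O(\sqrt{K/N_{\mathrm{min}}})$, so it is at most $\|R-\mathscr{R}\|\cdot O(\sqrt{K/N_{\mathrm{min}}})$. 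A matrix Bernstein estimate for the independent, mean-zero, $[-M,M]$-bounded entries $R(i,j)-\mathscr{R}(i,j)$, with variances $\mathscr{R}(i,j)(1-\mathscr{R}(i,j)/M)\le\rho$, gives $\|R-\mathscr{R}\|=O(\sqrt{\rho\,\mathrm{max}(N,J)\log(N+J)})$ under Assumption \ref{Assum1}, which is exactly the concentration underpinning Lemma \ref{boundLLCM}; dividing by $\|\Theta\|_{F}=\rho\|B\|_{F}\ge\rho\sigma_{K}(B)$ yields the claimed relative bound. The step I expect to be the main obstacle is the sharp, regularizer-dependent lower bound on $\sigma_{K}(\mathscr{L}_{\tau})$, since all the $\tau$-, $\delta_{\mathrm{max}}$-, and $\delta_{\mathrm{min}}$-dependencies that make the final rate meaningful enter through it and through $\epsilon_{\tau}$; the K-means rounding is routine but must be executed carefully for both variants so that the $N_{\mathrm{max}}$ factor surfaces consistently.
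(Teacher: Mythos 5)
Your proposal is correct and follows essentially the same route as the paper's own proof: a Wedin/Davis--Kahan subspace bound fed by Lemma \ref{boundLLCM}, the identical lower bound $\sigma_{K}^{2}(\mathscr{L}_{\tau})\geq \rho^{2}N_{\mathrm{min}}\sigma_{K}^{2}(B)/(\tau+\delta_{\mathrm{max}})$ (the paper derives it via eigenvalue inequalities for $\mathscr{L}_{\tau}\mathscr{L}_{\tau}'$, you via $\sigma_{K}(AB)\geq\sigma_{K}(A)\sigma_{K}(B)$ for full-rank factors), a standard K-means rounding lemma (the paper cites Lemma 2 of Joseph and Yu rather than Lei and Rinaldo, with the same content), the same $O(\sqrt{N_{\mathrm{max}}})$ inflation for the row-normalized variant via $\mathrm{min}_{i}\|U(i,:)\|_{F}\geq 1/\sqrt{N_{\mathrm{max}}}$, and the same decomposition of $\tilde{\Theta}-\Theta$ with the concentration $\|R-\mathscr{R}\|=O(\sqrt{\rho\,\mathrm{max}(N,J)\mathrm{log}(N+J)})$. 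The only notable deviations are minor: your projection argument for the clipping step (Step 5 of Algorithm \ref{alg:RSC}) is actually more rigorous than the paper, which simply identifies $\hat{\Theta}$ with $\tilde{\Theta}$, while your dismissal of the term $\mathscr{R}'(\hat{Z}(\hat{Z}'\hat{Z})^{-1}-Z(Z'Z)^{-1})$ as ``driven by $\hat{f}$, hence lower order'' is heuristic at the same level as the paper's own treatment, which roughly approximates it and then drops an $O(\sqrt{KN_{\mathrm{max}}/N_{\mathrm{min}}})$ factor under balancedness conditions.
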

Note that LCA-RSC and LCA-RSCn have the same theoretical error rates. Theorem \ref{mainRLCM} says that the error rates are smaller when we increase the sparsity parameter $\rho$. We also find that if the minimize size of a latent class (and $\sigma_{K}(B)$) is too small, then the error rates are large, i.e., a harder scenario for latent class analysis. Sure, Theorem \ref{mainRLCM} also works for the binary response data case by simply setting $M=1$.

\textbf{Choice of $\tau$ :} Theoretical bounds in Theorem \ref{mainRLCM} depends on the regularization parameter $\tau$ through the term $\varrho_{\tau}=(1+\frac{MN}{\tau+\delta_{\mathrm{min}}}+2\sqrt{\frac{MN}{\tau+\delta_{\mathrm{min}}}})\frac{\tau+\delta_{\mathrm{max}}}{\tau+\delta_{\mathrm{min}}}$. Because $0<\delta_{\mathrm{min}}\leq\delta_{\mathrm{max}}=\mathrm{max}_{i\in[N]}\sum_{j=1}^{J}\mathscr{R}(i,j)\leq\rho J\leq M\mathrm{max}(N,J)$, Theorem \ref{mainRLCM} implies that increasing $\tau$ decreases the upper bounds of error rates. Suppose that $\tau\geq M\mathrm{max}(N,J)$, we have $\frac{MN}{\tau+\delta_{\mathrm{min}}}\leq 1,\frac{\tau+\delta_{\mathrm{max}}}{\tau+\delta_{\mathrm{min}}}=O(1)$, and $\varrho_{\tau}=O(1)$, which suggests that the results in Theorem \ref{mainRLCM} can be simplified as $\hat{f}=O(\frac{K^{2}N_{\mathrm{max}}\mathrm{max}(N,J)\mathrm{log}(N+J)}{\rho\sigma^{2}_{K}(B)N^{2}_{\mathrm{min}}})$. On the one hand, we cannot let $\tau$ too large because $\|L_{\tau}\|=\|D^{-1/2}_{\tau}R\|\leq\|D^{-1/2}_{\tau}\|\|R\|=\frac{\|R\|}{\sqrt{\tau+\mathrm{min}_{i\in[N]}D(i,i)}}$, which suggests that $L_{\tau}$'s top-$K$ singular values are close to zero for a too large $\tau$ and this causes that $L_{\tau}$'s top-$K$ singular vectors may lose valuable information about latent classes. On the other hand, by analyzing the form of $\varrho_{\tau}$, we find that though a larger $\tau$ is always preferred either for the case $\delta_{\mathrm{min}}\ll\delta_{\mathrm{max}}$ or the case $\delta_{\mathrm{min}}=O(\delta_{\mathrm{max}})$, continuing to increase the value of $\tau$ will not significantly reduce $\varrho_{\tau}$ since $\varrho_{\tau}$ is always at the order $O(1)$ when $\tau\geq M\mathrm{max}(N, J)$, i.e., $\varrho_{\tau}$ is insensitive for large $\tau$. Therefore, a moderate value of the regularizer $\tau$ is preferred. In numerical studies, we find that setting $\tau=M\mathrm{max}(N, J)$ provides satisfactory results.

From now on, unless specified, we set $\tau=M\mathrm{max}(N, J)$. If we consider more conditions on model parameters, Theorem \ref{mainRLCM} can be further simplified using the following corollary.
\begin{cor}\label{Corollary}
Under $\mathrm{LCM}(Z,\Theta)$, suppose that Assumption \ref{Assum1} holds. If $K=O(1), N_{\mathrm{min}}=O(\frac{N}{K}), N_{\mathrm{max}}=O(\frac{N}{K}), J=O(N)$, and $\sigma^{2}_{K}(B)=O(J)$, then with probability at least $1-o((N+J)^{-3})$,
\begin{align*}
&\hat{f}=O(\frac{\mathrm{log}(N)}{\rho N}) \mathrm{~and~}\frac{\|\hat{\Theta}-\Theta\|_{F}}{\|\Theta\|_{F}}=O(\sqrt{\frac{\mathrm{log}(N)}{\rho N}}).
\end{align*}
\end{cor}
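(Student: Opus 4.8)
The plan is to derive Corollary \ref{Corollary} directly from Theorem \ref{mainRLCM} by substituting the stated order conditions, so no fresh probabilistic argument is needed: the probability statement $1-o((N+J)^{-3})$ is inherited verbatim from Theorem \ref{mainRLCM}. First I would invoke the simplification already recorded in the discussion following Theorem \ref{mainRLCM}. With the prescribed choice $\tau=M\max(N,J)$ one has $\frac{MN}{\tau+\delta_{\min}}\le 1$ and $\frac{\tau+\delta_{\max}}{\tau+\delta_{\min}}=O(1)$, so the prefactor $\varrho_{\tau}$ collapses to $O(1)$ and the clustering-error bound reduces to
\begin{align*}
\hat{f}=O\Big(\frac{K^{2}N_{\max}\max(N,J)\log(N+J)}{\rho\,\sigma_{K}^{2}(B)\,N_{\min}^{2}}\Big).
\end{align*}
The relative-error bound for $\hat{\Theta}$ carries over from Theorem \ref{mainRLCM} unchanged, since it does not involve $\varrho_{\tau}$.

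Next I would substitute the five scaling assumptions, read as equalities of order (write $\asymp$ for ``of the same order''). From $K=O(1)$ we get $K^{2}=O(1)$ and $\log(N+J)=O(\log N)$; from $J=O(N)$ together with $N\le\max(N,J)$ we get $\max(N,J)\asymp N$; from $N_{\max}=O(N/K)$ with $K=O(1)$ we get $N_{\max}=O(N)$; from $N_{\min}=O(N/K)$ with $K=O(1)$ we get $N_{\min}\asymp N$, hence $N_{\min}^{2}\asymp N^{2}$; and from $\sigma_{K}^{2}(B)=O(J)$ with $J\asymp N$ we get $\sigma_{K}^{2}(B)\asymp N$. Plugging these into the two displays and cancelling powers of $N$ yields
\begin{align*}
\hat{f}=O\Big(\frac{N\cdot N\cdot\log N}{\rho\,N\cdot N^{2}}\Big)=O\Big(\frac{\log N}{\rho N}\Big)
\end{align*}
and, for the item-parameter matrix,
\begin{align*}
\frac{\|\hat{\Theta}-\Theta\|_{F}}{\|\Theta\|_{F}}=O\Big(\sqrt{\frac{N\log N}{\rho\,N\cdot N}}\Big)=O\Big(\sqrt{\frac{\log N}{\rho N}}\Big),
\end{align*}
which are exactly the claimed rates.

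The only point requiring care --- and the one I would flag as the real content of the argument rather than a genuine obstacle --- is the \emph{direction} of the order conditions on the quantities sitting in the denominators. Both $N_{\min}^{2}$ and $\sigma_{K}^{2}(B)$ appear in denominators, so a bare upper bound such as ``$\sigma_{K}^{2}(B)=O(J)$'' would in fact enlarge, not tighten, the bound; to propagate an $O(\cdot)$ upper bound on $\hat{f}$ one needs these quantities bounded \emph{below} by a constant multiple of their stated orders, i.e. $N_{\min}\asymp N/K$ (balanced latent classes) and $\sigma_{K}^{2}(B)\asymp J$ (a well-conditioned, ``spread-out'' normalized item-parameter matrix $B$). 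I would therefore state explicitly that the conditions are to be read as exact-order scalings, which is the natural regime and is consistent with the elementary bound $\sigma_{K}^{2}(B)\le\frac{1}{K}\|B\|_{F}^{2}\le J$ (using that $B$ has nonnegative entries bounded by $1$), so that $O(J)$ is already the best attainable order for $\sigma_{K}^{2}(B)$. With that reading the substitution above is unambiguous and the corollary follows.
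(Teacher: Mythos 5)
Your proposal is correct and follows essentially the same route the paper intends: the corollary has no separate proof in the paper and is obtained exactly as you do, by taking the simplified bound $\hat{f}=O(K^{2}N_{\mathrm{max}}\mathrm{max}(N,J)\mathrm{log}(N+J)/(\rho\sigma^{2}_{K}(B)N^{2}_{\mathrm{min}}))$ from the discussion of $\tau=M\mathrm{max}(N,J)$ after Theorem \ref{mainRLCM} and substituting the five scaling conditions. Your remark that the conditions on $N_{\mathrm{min}}$ and $\sigma^{2}_{K}(B)$ must be read as exact-order scalings (lower bounds, not mere $O(\cdot)$ upper bounds) since these quantities sit in denominators is a valid and careful refinement of the paper's loose notation, but it does not change the substance of the argument.
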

In Corollary \ref{Corollary}, all conditions are mild: $K=O(1)$ means that the number of latent classes is a small positive integer, $N_{\mathrm{min}}=O(\frac{N}{K})$ and $N_{\mathrm{max}}=O(\frac{N}{K})$ means that the sizes of each latent class are in the same order, $J=O(N)$ means that $J$ is in the same order as $N$, and $\sigma^{2}_{K}(B)=O(J)$ is also mild since $B$ is a $J$-by-$K$ matrix and $B$'s maximum entry is 1. By Corollary \ref{Corollary}, it is clear to see that the error rates tend to zero as the number of subjects $N$ goes to infinity. Thus, LCA-RSC and LCA-RSCn are consistent in latent class analysis. 
\section{Quantifying the quality of latent class analysis}\label{sec5}
Our algorithms will normally be applied to study the latent classes for real-world categorical data for which the ground-truth latent classes are usually unknown in advance. Recall that our algorithms can always return some partitions of all subjects into different classes, this naturally leads to the following problem: is there a way to tell us whether the estimated latent classes returned by our algorithms are good or not? Meanwhile, when there are many different algorithms in latent class analysis, we also want to compare their performances and choose the one that returns the best partition. Furthermore, the number of latent classes, $K$, is usually unknown for real data, which raises the following problem: can we design an efficient approach to estimate $K$?

To answer these problems, we now focus on the binary data case to simplify our analysis. For this case, $R(i,j)\in\{0,1\}$ for $i\in[N], j\in[J]$. Define $A$ as $A=RR'$ and call it the adjacency matrix in this article. From the definition of the adjacency matrix $A$, we have the following two facts:
\begin{itemize}
  \item $A(i,i)=\sum_{j=1}^{J}R(i,j)$, i.e., the summation of responses of subject $i$ to all items for $i\in[N]$.
  \item For $i\neq\bar{i}$, $A(i,\bar{i})=\sum_{j=1}^{J}R(i,j)R(\bar{i},j)$, i.e., the number of common responses between the two subjects $i, \bar{i}$.
\end{itemize}

Suppose that there are three subjects $i, \bar{i},$ and $\tilde{i}$ such that subject $i$ and subject $\bar{i}$ are in the same latent class while subject $\tilde{i}$ is from a different latent class. Because subjects in the same latent class usually have similar response patterns, generally, subjects $i$ and $\bar{i}$ have more common responses than that of subjects $i$ and $\tilde{i}$. For the categorical data case, we can also define the adjacency matrix $A$ as $RR'$, and a further understanding of $A$ is similar to that of the binary data case. Therefore, such adjacency matrix $A$ naturally forms an assortative weighted network \citep{newman2003mixing} in which nodes within the same class connect more than nodes across classes. Based on this observation, it is natural to use the Newman-Girvan modularity \citep{newman2004finding,newman2006modularity} to measure the quality of class partition for latent class analysis using the matrix $A$. Let $d_{i}=\sum_{j=1}^{N}A(i,j)$ for $i\in[N]$. Let $\omega=\frac{1}{2}\sum_{i=1}^{N}d_{i}$. Suppose that $\hat{Z}$ is an $N$-by-$k$ matrix obtained by applying a latent class analysis algorithm $\mathcal{M}$ on the observed response matrix $R$ with $k$ latent classes. The Newman-Girvan modularity is calculated as
\begin{align}\label{Modularity}
Q_{\mathcal{M}}(k)=\frac{1}{2\omega}\sum_{i=1}^{N}\sum_{j=1}^{N}(A(i,j)-\frac{d_{i}d_{j}}{2\omega})\hat{Z}(i,:)\hat{Z}'(j,:).
\end{align}

We always prefer a larger value of modularity since it usually indicates a better quality of latent class partitions \citep{newman2006modularity}. After defining the modularity, to infer the number of latent classes $K$, the idea introduced in \cite{nepusz2008fuzzy} is adopted. In detail, we choose the $k$ that maximizes the modularity defined in Equation (\ref{Modularity}), i.e.,
\begin{align}\label{EstimateK}
\hat{K}_{\mathcal{M}}=\mathrm{arg~max}_{k\in[\mathrm{min}(N, J)]}Q_{\mathcal{M}}(k).
\end{align}

It should be noted that the estimated classification matrix $\hat{Z}$ in Equation (\ref{Modularity}) is returned by an arbitrary algorithm $\mathcal{M}$, thus the estimated number of latent classes in Equation (\ref{EstimateK}) can be different for different algorithms. For convenience, when algorithm $\mathcal{M}$ is applied to Equation (\ref{EstimateK}), we call our approach for estimating $K$ as K$\mathcal{M}$.
\section{Experimental results}\label{sec6}
In this section, both numerical and empirical studies are conducted. Meanwhile, all experimental studies in this article are reported using MATLAB R2021b. Before starting our experimental studies, we provide several baseline methods and some criteria for measuring the performance of different approaches.
\subsection{Baseline methods}
Here, we provide four alternative spectral clustering algorithms that can also fit the latent class model.
\subsubsection{LCA-RSCORS}
We call the first alternative method LCA-RSCORS, where RSCORS is short for regularized spectral clustering on ratios-of-singular-vectors. Let $U=[\eta_{1}, \eta_{2}, \ldots, \eta_{K}]$, where $\eta_{k}$ is the left singular vector of the $k$-th largest singular value of $\mathscr{L}_{\tau}$. Define the $N\times (K-1)$ matrix $\Xi$ such that $\Xi(i,k)=\frac{\eta_{k+1}(i)}{\eta_{1}(i)}$ for $i\in[N], k\in[K-1]$. According to the proof of Lemma \ref{SVDPopulationLtau}, we get $U=\mathscr{D}^{-1/2}_{\tau}Z\Theta'V\Sigma^{-1}$, which gives that $\Xi=Z\Xi(\mathcal{I},:)$, i..e, $\Xi$ has $K$ distinct rows and $\Xi(i,:)=\Xi(\bar{i},:)$ if $\ell(i)=\ell(\bar{i})$. Thus running the K-means approach to all rows of $\Xi$ exactly recovers $Z$. The above analysis suggests the following algorithm called Ideal LCA-RSCORS. Input: $\mathscr{R}, K$, and $\tau$. Output: $Z$ and $\Theta$.
\begin{itemize}
  \item Calculate $\mathscr{L}_{\tau}$ by Equation (\ref{PopulationL}).
  \item Obtain $U\Sigma V'$, the top $K$ SVD of $\mathscr{L}_{\tau}$. Then calculate the $N\times (K-1)$ matrix $\Xi$ from $U.$
 \item Run K-means algorithm on all rows of $\Xi$ with $K$ clusters to obtain $Z$.
 \item Recover $\Theta$ by setting $\Theta=\mathscr{R}'Z(Z'Z)^{-1}$.
\end{itemize}

For the real case with only known the observed response matrix $R$, let $\hat{U}=[\hat{\eta}_{1}, \hat{\eta}_{2}, \ldots,\hat{\eta}_{K}]$, where $\hat{\eta}_{k}$ is the left singular vector of the $k$-th largest singular value of the regularized Laplacian matrix $L_{\tau}$ for $k\in[K]$. Let $\hat{\Xi}$ be an $N\times (K-1)$ matrix such that $\hat{\Xi}(i,k)=\frac{\hat{\eta}_{k+1}(i)}{\hat{\eta}_{1}(i)}$ for $i\in[N], k\in[K]$. The following algorithm extends the Ideal LCA-RSCORS to the real case.
\begin{algorithm}
\caption{\textbf{LCA-RSCORS}}
\label{alg:RSCORS}
\begin{algorithmic}[1]
\Require $R, K$, and $\tau$ (the default $\tau$ is $M\mathrm{max}(N,J)$.).
\Ensure $\hat{Z}$ and $\hat{\Theta}$.
\State Calculate the regularize Laplacian matrix $L_{\tau}$ by Equation (\ref{realL}).
\State Obtain $\hat{U}\hat{\Sigma}\hat{V}'$, the top $K$ SVD of $L_{\tau}$. Then calculate the $N\times (K-1)$ matrix $\hat{\Xi}$ from $\hat{U}$.
\State Run K-means algorithm on all rows of $\hat{\Xi}$ with $K$ clusters to obtain $\hat{Z}$.
\State Obtain $\hat{\Theta}$ using Steps 4-5 of Algorithm \ref{alg:RSC}.
\end{algorithmic}
\end{algorithm}

LCA-RSCORS has the same computational complexity as LCA-RSC and LCA-RSCn. Due to the row-wise ratios matrices $\Xi$ and $\hat{\Xi}$, it is complex to obtain the theoretical bounds of LCA-RSCORS. One possible way to obtain LCA-RSCORS's theoretical guarantees is to combine the theoretical analysis in this article with the proofs of the community detection algorithms provided in \citep{SCORE,wang2020spectral}. The theoretical framework of LCA-RSCORS is of independent interest.
\subsubsection{LCA-PCA}
We name the second alternative method as LCA-PCA, where PCA is short for principle component analysis. Recall that the rank of the $N\times J$ population response matrix $\mathscr{R}$ is also $K$. Without confusion, we let $\mathscr{R}=U\Sigma V'$ be the top-K SVD of $\mathscr{R}$ such that $U'U=I_{K\times K}$ and $V'V=I_{K\times K}$. Following a similar analysis to the 2nd statement of Lemma \ref{SVDPopulationLtau} gives $U=ZU(\mathcal{I},:)$,  which implies that running the K-means procedure to $U$'s rows returns the classification matrix $Z$. This suggests the following method called Ideal LCA-PCA. Input: $\mathscr{R}$ and $K$. Output: $Z$ and $\Theta$.
\begin{itemize}
  \item Obtain $U\Sigma V'$, the top $K$ SVD of $\mathscr{R}$.
  \item Run K-means algorithm on $U$'s rows with $K$ clusters to obtain $Z$.
  \item Recover $\Theta$ by setting $\Theta=\mathscr{R}'Z(Z'Z)^{-1}$.
\end{itemize}

Let $\hat{U}\hat{\Sigma}\hat{V}'$ be $R$'s top-K SVD. Algorithm \ref{alg:PCA} extends Ideal LCA-PCA to the real case immediately.
\begin{algorithm}
\caption{\textbf{LCA-PCA}}
\label{alg:PCA}
\begin{algorithmic}[1]
\Require $R$ and $K$.
\Ensure $\hat{Z}$ and $\hat{\Theta}$.
\State Obtain $\hat{U}\hat{\Sigma}\hat{V}'$, the top $K$ SVD of $R$.
\State Run K-means algorithm on $\hat{U}$'s rows with $K$ clusters to obtain $\hat{Z}$.
\State Obtain $\hat{\Theta}$ using Steps 4-5 of Algorithm \ref{alg:RSC}.
\end{algorithmic}
\end{algorithm}

Compared with LCA-RSC, LCA-RSCn, and LCA-RSCORS, we design LCA-PCA using the SVD of the observed response matrix instead of the regularized Laplacian matrix. Meanwhile, LCA-PCA has the same computational complexity as LCA-RSC, LCA-RSCn, and LCA-RSCORS. Following a similar analysis as Theorem \ref{mainRLCM} gets LCA-PCA's theoretical error rates, which are almost the same as that of LCA-RSC and we omit them here.
\subsubsection{LCA-RMK}
Because $\mathscr{R}=Z\Theta'$ under $\mathrm{LCM}(Z,\Theta)$, we have $\mathscr{R}=Z\mathscr{R}(\mathcal{I},:)$. Thus applying the K-means algorithm on $\mathscr{R}$'s rows with $K$ clusters returns $Z$. This gives the following algorithm called Ideal LCA-RMK, where RMK is short for response matrix by K-means. Input: $\mathscr{R}$ and $K$. Output: $Z$ and $\Theta$.
\begin{itemize}
  \item Run K-means algorithm on $\mathscr{R}$'s rows with $K$ clusters to obtain $Z$.
  \item Recover $\Theta$ by setting $\Theta=\mathscr{R}'Z(Z'Z)^{-1}$.
\end{itemize}

\begin{algorithm}
\caption{\textbf{LCA-RMK}}
\label{alg:RMK}
\begin{algorithmic}[1]
\Require $R$ and $K$.
\Ensure $\hat{Z}$ and $\hat{\Theta}$.
\State Run K-means algorithm on $R$'s rows with $K$ clusters to obtain $\hat{Z}$.
\State Obtain $\hat{\Theta}$ using Steps 4-5 of Algorithm \ref{alg:RSC}.
\end{algorithmic}
\end{algorithm}

Algorithm \ref{alg:RMK} called LCA-RMK extends the Ideal LCA-RMK to the real case naturally. Unlike LCA-RSC, LCA-RSCn, LCA-RSCORS, and LCA-PCA, there is no SVD step in LCA-RMK. The computational cost of LCA-RMK is $O(lNJK)$ with $l$ being the number of iterations in K-means. For the case $J=\beta N$, LCA-RMK's complexity is $O(\beta lKN^{2})$, which is slightly larger than $O(KN^{2})$ when $\beta l>1$. Therefore, LCA-RSC, LCA-RSCn, LCA-RSCORS, and LCA-PCA  run faster than LCA-RMK when $\beta l>1$. This conclusion is verified by the experimental results in this section.
\subsubsection{LCA-RLMK}
Similar to the Ideal-RMK, recall that $\mathscr{L}_{\tau}=\mathscr{D}^{-1/2}_{\tau}Z\Theta'$, we also have $\mathscr{L}_{\tau}=Z\mathscr{L}_{\tau}(\mathcal{I},:)$ by simple algebra, which suggests the following algorithm called Ideal LCA-RLMK, where RLMK is short for regularized Laplacian matrix by K-means. Input: $\mathscr{R},K$, and $\tau$. Output: $Z$ and $\Theta$.
\begin{itemize}
  \item Calculate $\mathscr{L}_{\tau}$ by Equation (\ref{PopulationL}).
  \item Run K-means algorithm on $\mathscr{L}_{\tau}$'s rows with $K$ clusters to obtain $Z$.
  \item Recover $\Theta$ by setting $\Theta=\mathscr{R}'Z(Z'Z)^{-1}$.
\end{itemize}

\begin{algorithm}
\caption{\textbf{LCA-RLMK}}
\label{alg:RLMK}
\begin{algorithmic}[1]
\Require $R, K$, and $\tau$ with default value $M\mathrm{max}(N,J)$.
\Ensure $\hat{Z}$ and $\hat{\Theta}$.
\State Calculate $L_{\tau}$ by Equation (\ref{realL}).
\State Run K-means algorithm on $L_{\tau}$'s rows with $K$ clusters to obtain $\hat{Z}$.
\State Obtain $\hat{\Theta}$ using Steps 4-5 of Algorithm \ref{alg:RSC}.
\end{algorithmic}
\end{algorithm}

Algorithm \ref{alg:RLMK} summarizes the real case of Ideal LCA-RLMK. This algorithm has the same computational complexity as LCA-RMK.
\subsection{Evaluation metrics}
\textbf{For the task of classification when the true classification matrix $\mathbf{Z}$ is known}, four metrics are considered: Clustering error $\hat{f}$ computed by Equation (\ref{clusteringerror}), Hamming error \citep{SCORE}, normalized mutual
information (NMI) \citep{strehl2002cluster,danon2005comparing,bagrow2008evaluating,luo2017community}, and adjusted rand index (ARI) \citep{luo2017community,hubert1985comparing,vinh2009information}. Hamming error is defined as $\frac{\mathrm{min}_{\mathcal{P}\in\mathcal{P}_{K}}|i: \hat{Z}(i,:)\neq (Z\mathcal{P})(i,:)\mathrm{~for~}i\in[N]|}{N}$,
where $\mathcal{P}_{K}$ is the set containing all $K\times K$ permutation matrices. Hamming error ranges in $[0,1]$, and a smaller Hamming error means a better estimation. The exact forms of NMI and ARI can be found in Equations (5) and (6) in \citep{qing2023community}. NMI ranges in [0,1], ARI ranges in [-1,1], and both metrics are the larger the better.

\textbf{For the task of classification when the true classification matrix $\mathbf{Z}$ is unknown}, we use the Newman-Girvan modularity calculated by Equation (\ref{Modularity}) using the adjacency matrix $A=RR'$ to measure the quality of latent class analysis.

\textbf{For the task of estimating $\mathbf{\Theta}$}, we use the Relative $l_{1}$ error defined as $\mathrm{min}_{\mathcal{P}\in\mathcal{P}_{K}}\frac{\|\hat{\Theta}-\Theta\mathcal{P}\|_{1}}{\|\Theta\|_{1}}$ and the Relative $l_{2}$ error defined as $\mathrm{min}_{\mathcal{P}\in\mathcal{P}_{K}}\frac{\|\hat{\Theta}-\Theta\mathcal{P}\|_{F}}{\|\Theta\|_{F}}$ to measure the performance, where $\|\cdot\|_{1}$ for a matrix means its $l_{1}$ norm. Both measures are the smaller the better.

\textbf{For the task of estimating the number of latent classes } $\mathbf{K}$, we use Accuracy rate to measure the performances of all K$\mathcal{M}$ methods. This metric is defined as the fraction of times a method correctly determines $K$. Recall that this article develops six algorithms LCA-RSC, LCA-RSCn, LCA-RSCORS, LCA-PCA, LCA-RMK, and LCA-RLMK for latent class analysis. As a result, there are six algorithms KLCA-RSC, KLCA-RSCn, KLCA-RSCORS, KLCA-PCA, KLCA-RMK, and KLCA-RLMK for determining $K$.
\subsection{Synthetic categorical data}\label{sec6Synthetic}
\begin{figure}
\centering
\resizebox{\columnwidth}{!}{
\subfigure[]{\includegraphics[width=0.2\textwidth]{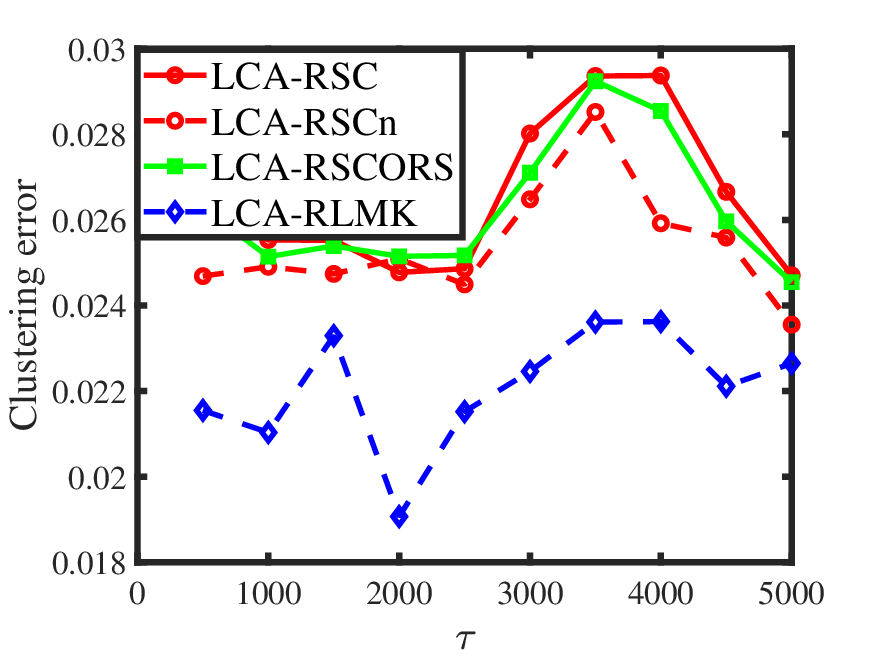}}
\subfigure[]{\includegraphics[width=0.2\textwidth]{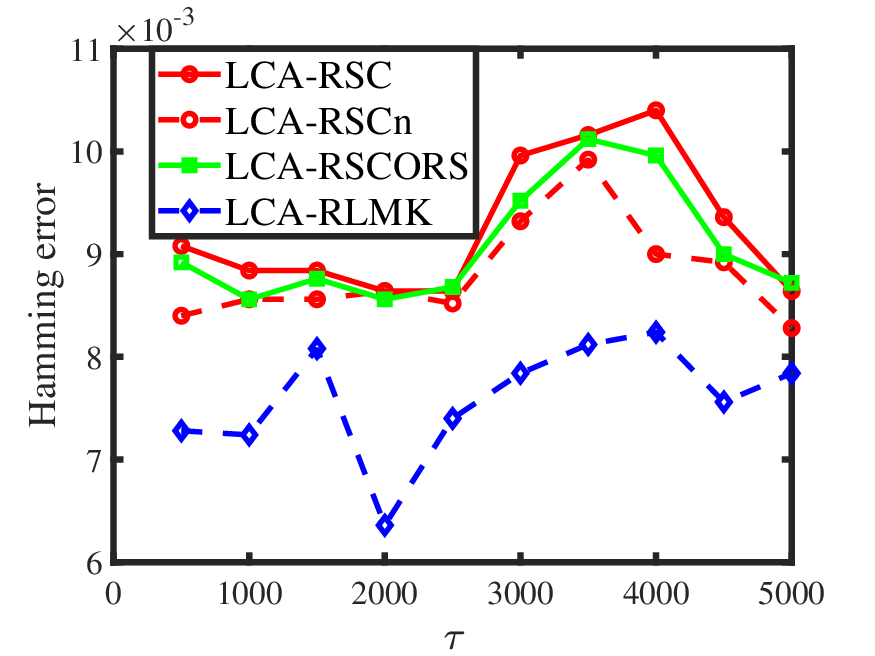}}
\subfigure[]{\includegraphics[width=0.2\textwidth]{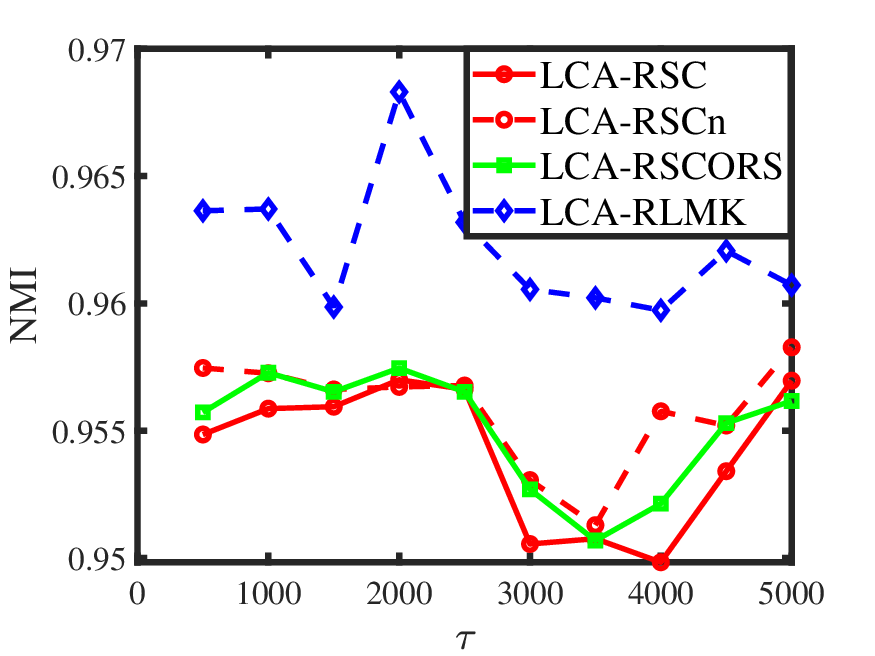}}
\subfigure[]{\includegraphics[width=0.2\textwidth]{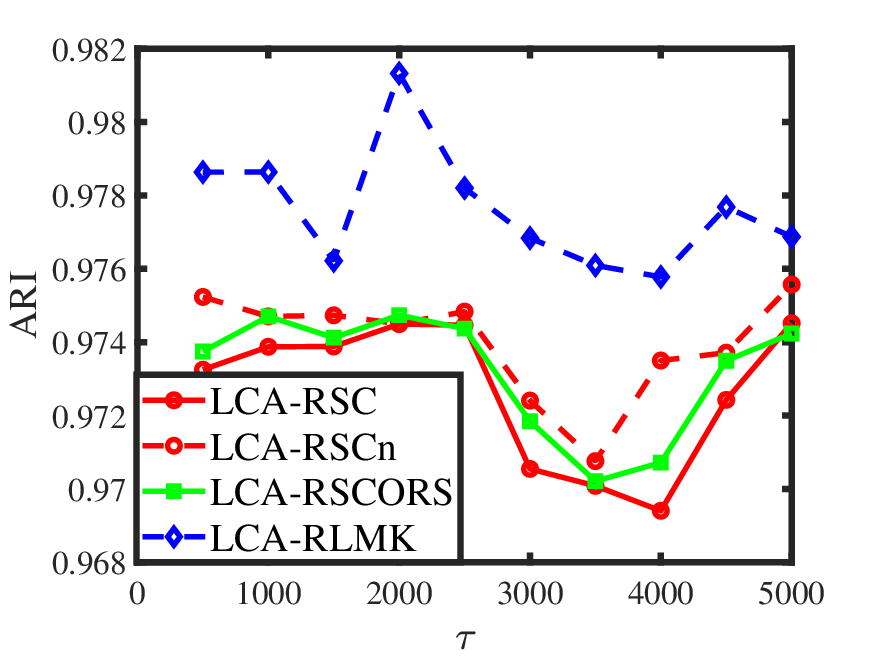}}
}
\resizebox{\columnwidth}{!}{
\subfigure[]{\includegraphics[width=0.2\textwidth]{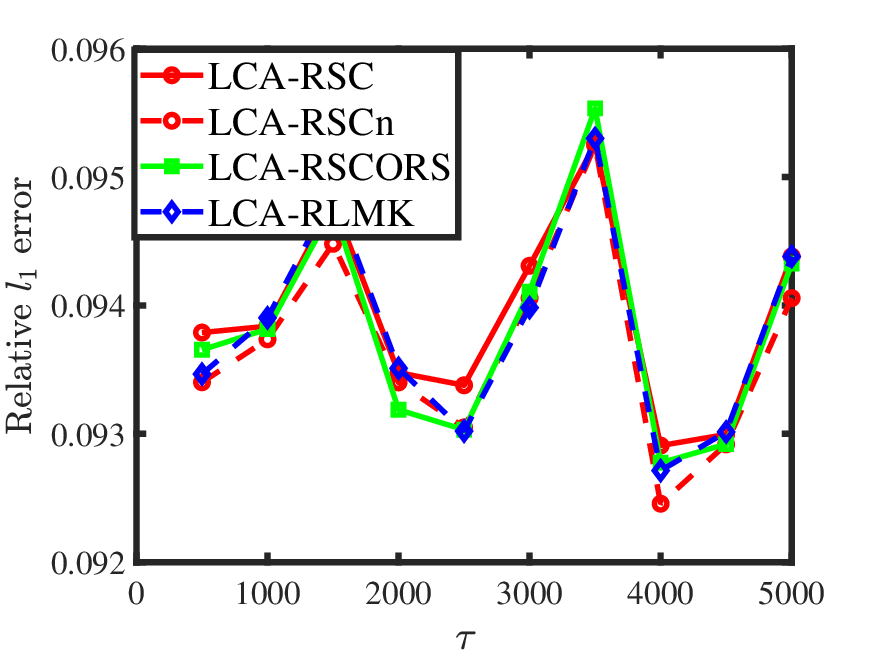}}
\subfigure[]{\includegraphics[width=0.2\textwidth]{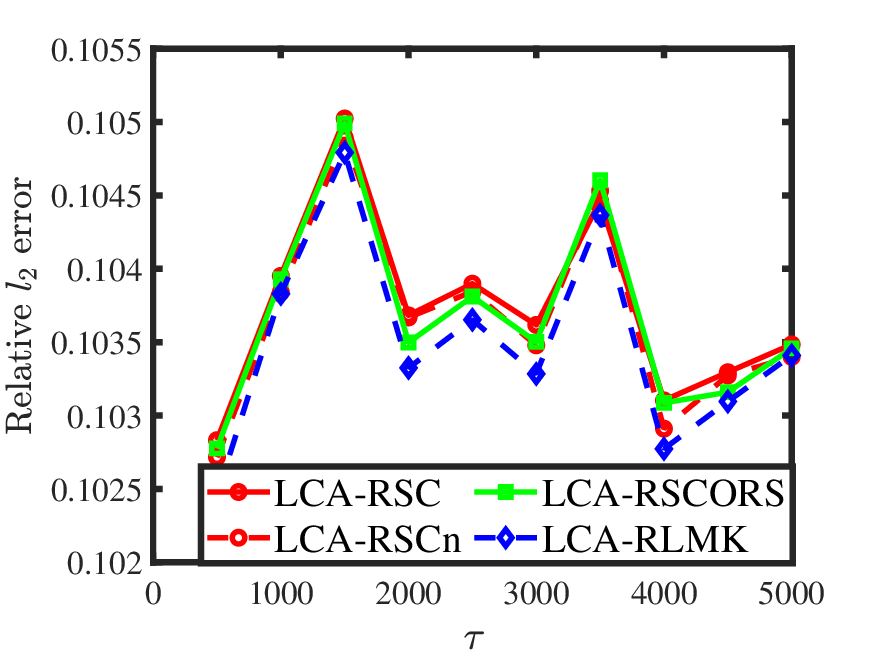}}
\subfigure[]{\includegraphics[width=0.2\textwidth]{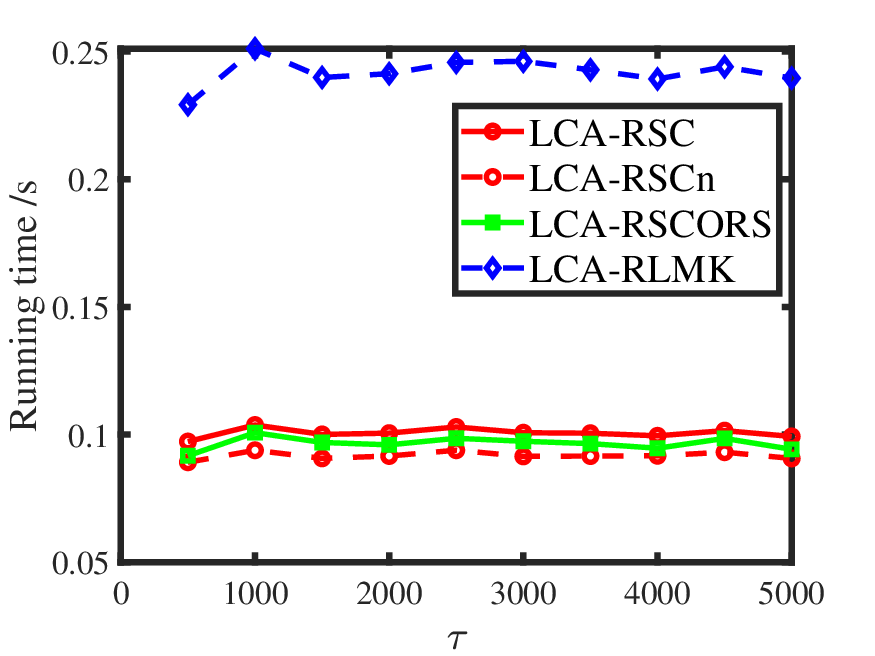}}
\subfigure[]{\includegraphics[width=0.2\textwidth]{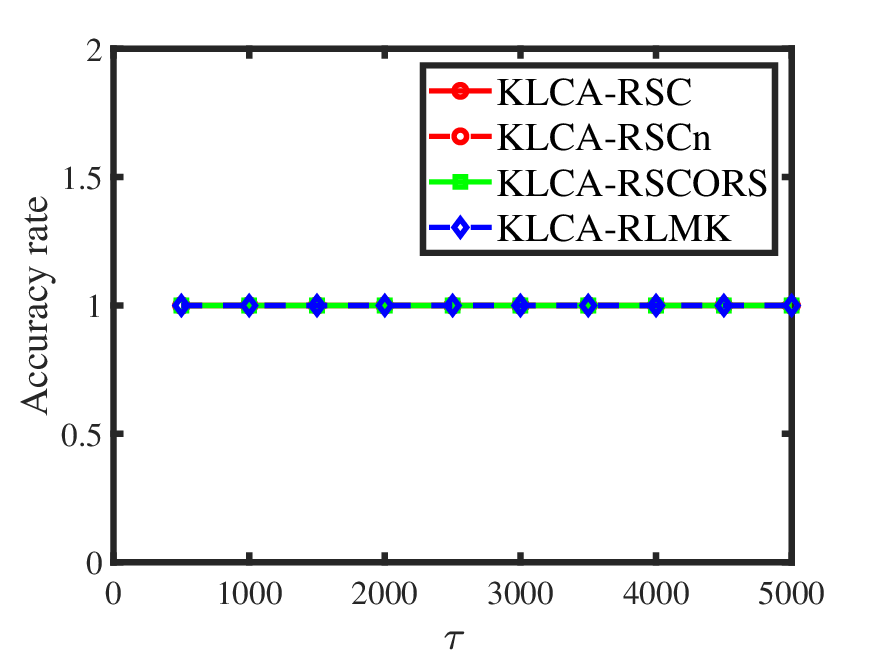}}
}
\caption{Numerical results of Experiment 1.}
\label{Ex1} 
\end{figure}
Our numerical studies focus on three issues: the choice of the regularizer $\tau$, the effectiveness of all aforementioned methods in latent class analysis, and the accuracy of our methods for estimating $K$. Unless specified, we set $K=3, J=\frac{N}{5}$, and $M=5$. Note that since we let $M$ be $5$ in our numerical studies, $R(i,j)$ takes value from the set $\{0,1,2,3,4,5\}$ for $i\in[N], j\in[J]$, which suggests a categorical data case. $Z$ is generated by letting each subject belong to one of the $K$ latent classes with equal probability. Let $\tilde{B}$ be a $J$-by-$K$ matrix such that $\tilde{B}(j,k)=\mathrm{rand}(1)$ for $i\in[J],k\in[K]$, where $\mathrm{rand}(1)$ is a random value generated from a Uniform distribution on $[0,1]$. Let $B=\frac{\tilde{B}}{\mathrm{max}_{j\in[J],k\in[K]}\tilde{B}(j,k)}$ (thus, $\mathrm{max}_{j\in[J],k\in[K]}B(j,k)=1$ is guaranteed). We set $\rho$ and $N$ independently for each experiment. Except for the experiment that studies the choice of $\tau$, we set $\tau$ as the default value $M\mathrm{max}(N, J)$. After setting $Z$ and $\Theta$, each numerical experiment has below steps:
\begin{description}
  \item[(a)] For $i\in[N],j\in[J]$,  generate $R(i,j)$ from a Binomial distribution with $M$ independent trials and probability $\frac{\mathscr{R}(i,j)}{M}$, where $\mathscr{R}=Z\Theta'=\rho ZB'$.
  \item[(b)] Apply our proposed methods to $R$. Record each metric and running time of all methods.
  \item[(c)] Repeat (a)-(b) 100 times and report the mean of each metric.
\end{description}
\begin{figure}
\centering
\resizebox{\columnwidth}{!}{
\subfigure[]{\includegraphics[width=0.2\textwidth]{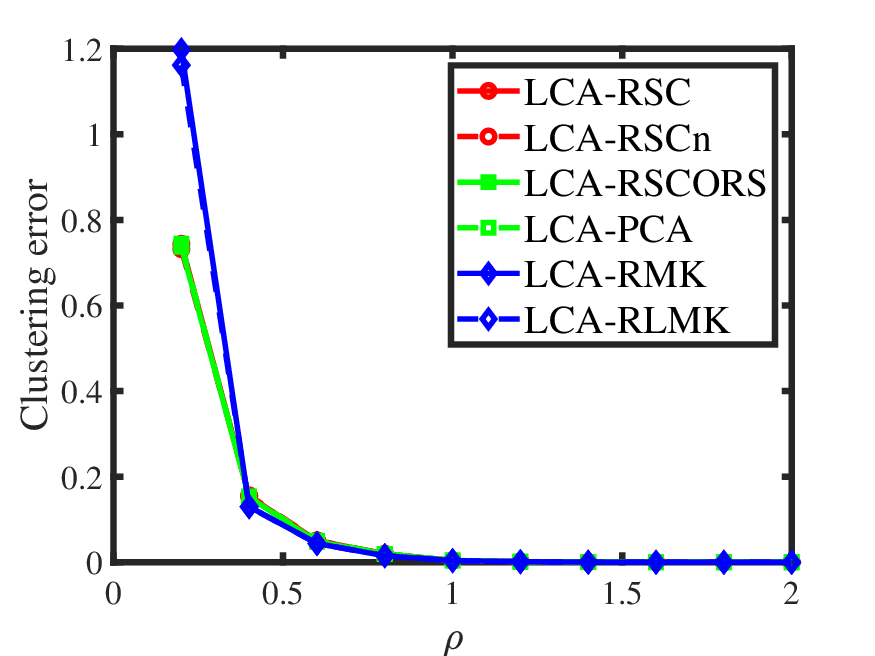}}
\subfigure[]{\includegraphics[width=0.2\textwidth]{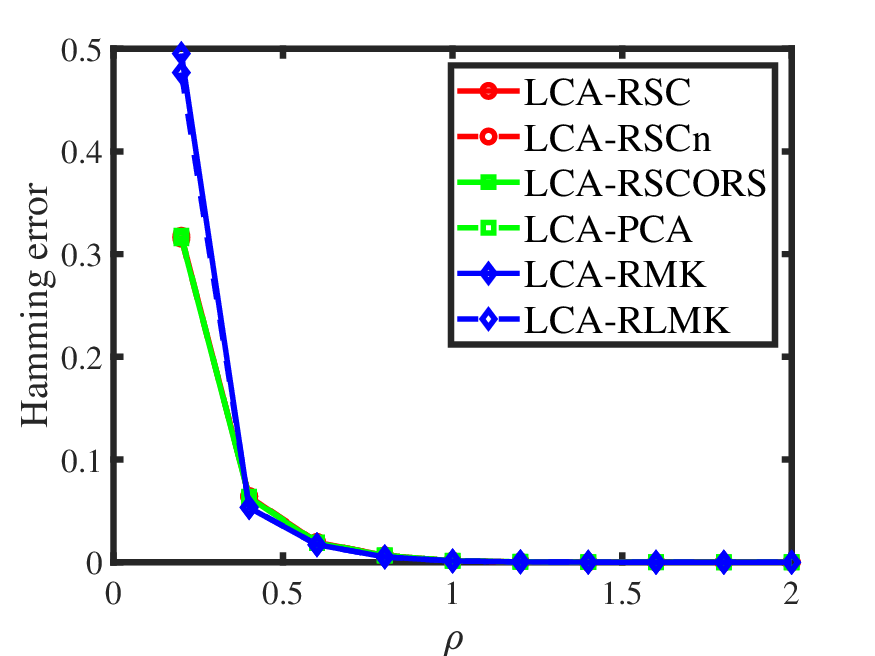}}
\subfigure[]{\includegraphics[width=0.2\textwidth]{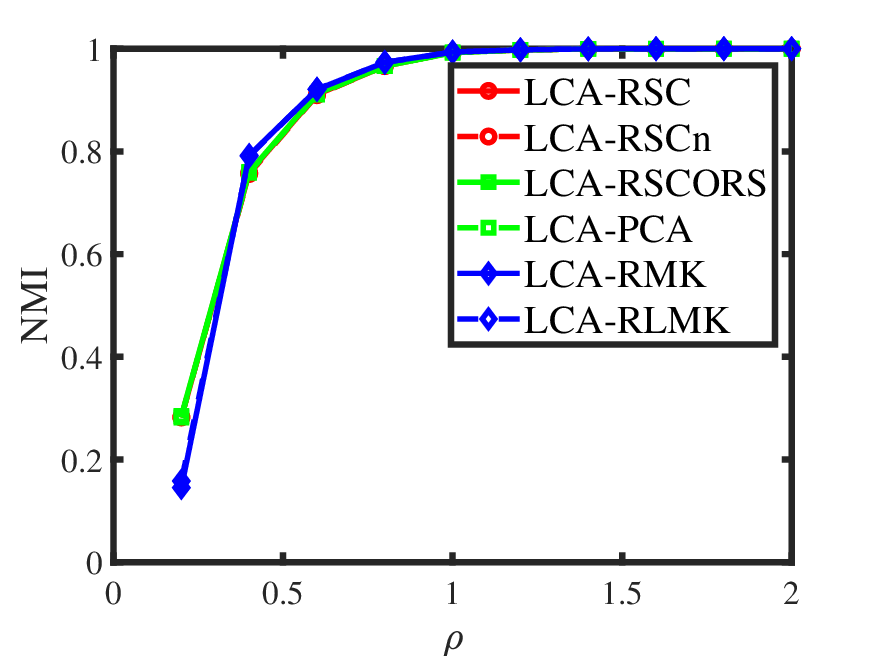}}
\subfigure[]{\includegraphics[width=0.2\textwidth]{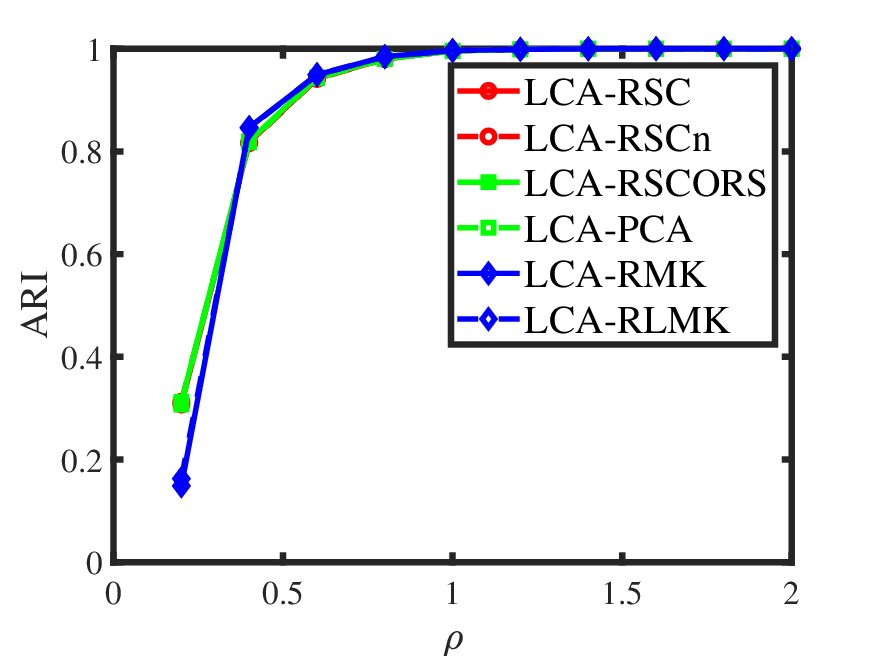}}
}
\resizebox{\columnwidth}{!}{
\subfigure[]{\includegraphics[width=0.2\textwidth]{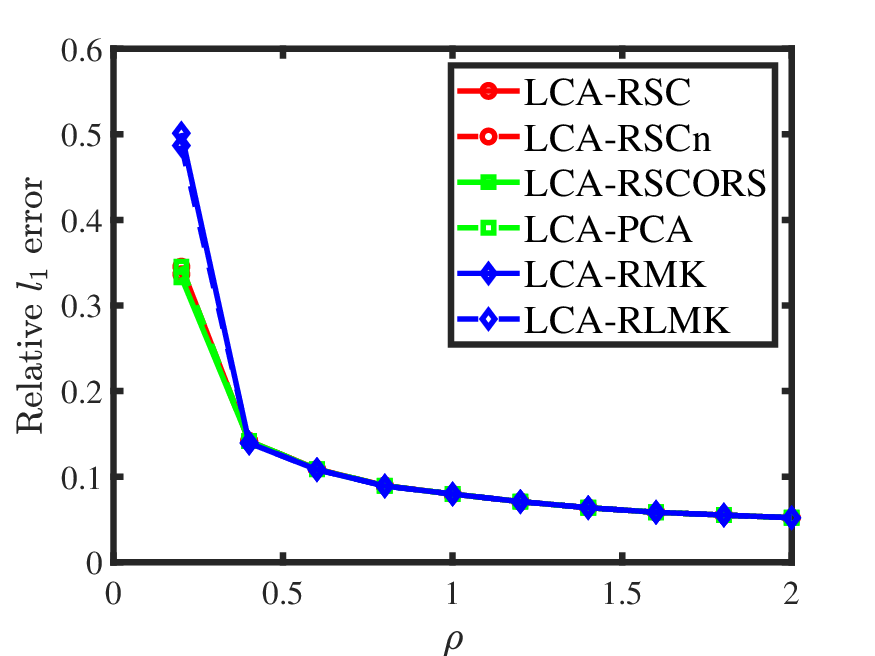}}
\subfigure[]{\includegraphics[width=0.2\textwidth]{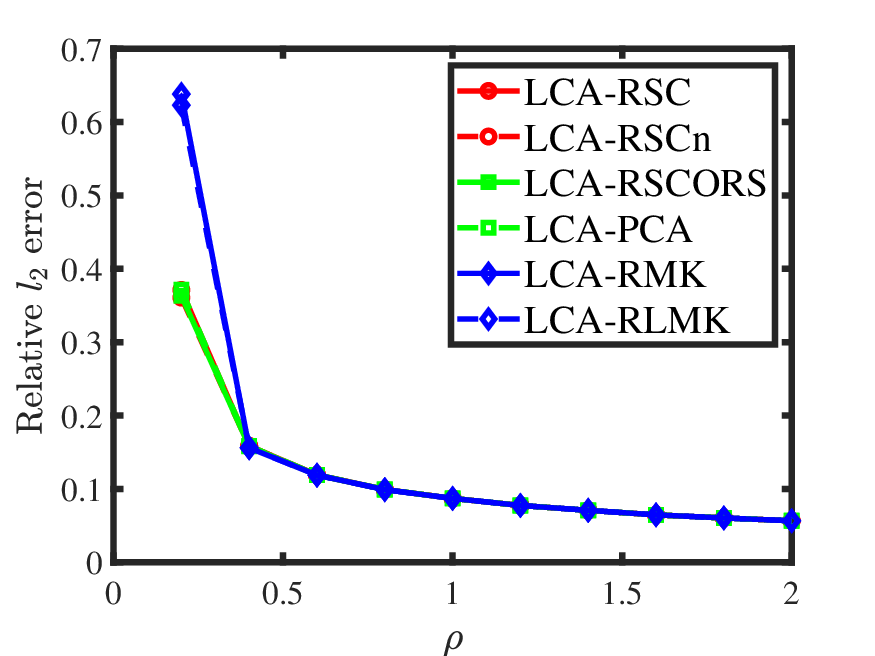}}
\subfigure[]{\includegraphics[width=0.2\textwidth]{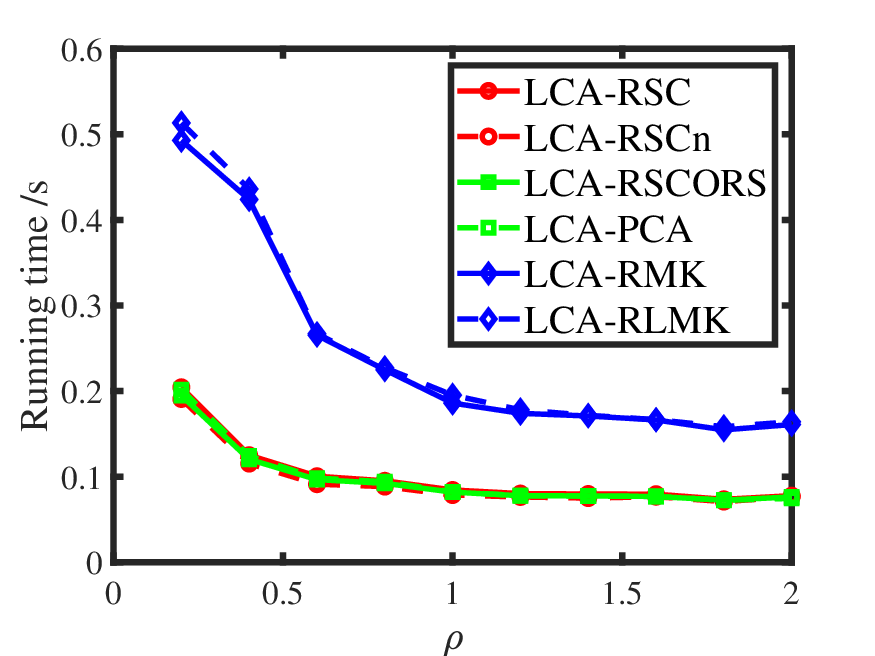}}
\subfigure[]{\includegraphics[width=0.2\textwidth]{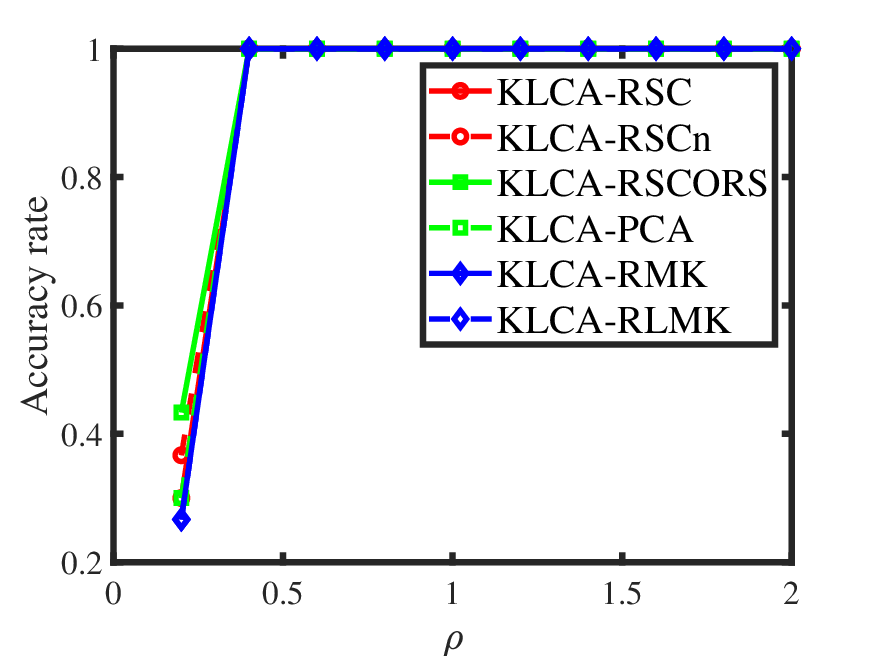}}
}
\caption{Numerical results of Experiment 2.}
\label{Ex2} 
\end{figure}

In this article, we conduct the following four Experiments.

\texttt{Experiment 1: Choice of the regularizer $\tau$.} Recall that the regularization parameter $\tau$ appears in our LCA-RSC, LCA-RSCn, LCA-RSCORS, and LCA-RLMK methods. Here, we study the choice of $\tau$ numerically. For this experiment, we set $\rho=0.8, N=500$, and $\tau=c_{0}M\mathrm{max}(N, J)$. We vary  $c_{0}$ in the range $\{0.2, 0.4, \ldots,2\}$. The results can be found in Figure \ref{Ex1}. From panels (a)-(f) of Figure \ref{Ex1}, we observe that the four methods have the smallest Clustering error, Hamming error, Relative $l_{1}$ error, and Relative $l_{2}$ error when $\tau$ is set around $M\mathrm{max}(N, J)$; they also have the largest NMI and ARI when $\tau$ is around $M\mathrm{max}(N, J)$. This suggests that setting $\tau$ as $M\mathrm{max}(N, J)$ is a good choice for these methods. Meanwhile, we also find that all metrics in panels (a)-(f) do not change significantly when $\tau$ increases and this verifies our analysis after Theorem \ref{mainRLCM} that the error rates are insensitive for large $\tau$.  From Panel (g) of Figure \ref{Ex1}, we see that LCA-RSC, LCA-RSCn, and LCA-RSCORS run faster than LCA-RLMK, which verifies our complexity analysis provided after Algorithm \ref{alg:RMK} and Algorithm \ref{alg:RLMK}. The last panel of Figure \ref{Ex1} says that KLCA-RSC, KLCA-RSCn, KLCA-RSCORS, and K-LCA-RLMK determines the number of classes $K$ correctly in this experiment, which also supports the effectiveness of our idea that using Newman-Girvan modularity to measure the quality of latent class when the adjacency matrix $A$ is set as $RR'$.

\texttt{Experiment 2: Changing the sparsity parameter $\rho$.} For this experiment, we investigate the performances of our methods by changing $\rho$ when $N=500$. Because $\rho$ should be no larger than the number of trials $M$ under Binomial distribution, here we let $\rho$ take value in the set $\{0.2, 0.4, \ldots, 2\}$. Figure \ref{Ex2} presents the results. From the first six panels, we observe that all methods perform similarly and they perform better when the sparsity parameter increases, which confirms our theoretical analysis provided after Theorem \ref{mainRLCM}. From panel (g) of Figure \ref{Ex2}, we observe that LCA-RMK and LCA-RLMK run slower than the other four methods, which supports our analysis after Algorithm \ref{alg:RMK} and Algorithm \ref{alg:RLMK}. Meanwhile, we also see that all methods run faster as the sparsity parameter becomes bigger. Finally, the last panel tells us that our methods for estimating $K$ perform better as $\rho$ increases and they exactly infer $K$ when $\rho$ is no smaller than 0.5 for this experiment. Again, the high accuracy of our methods in estimating $K$ suggests the success of using the Newman-Girvan modularity to evaluate the quality of latent class analysis.

\texttt{Experiment 3: Changing the number of subjects $N$.} For this experiment, we study the performances of all approaches by increasing $N$ when $\rho=0.15$. We let $N$ range in $\{1000, 2000, \ldots, 8000\}$. Figure \ref{Ex3} shows the results. The first six panels say that all methods have similar performances in estimating $Z$ and $\Theta$, and they perform better as $N$ increases, which supports our analysis after Theorem \ref{mainRLCM}. Panel (g) of Figure \ref{Ex3} says that LCA-RSC, LCA-RSCn, LCA-RSCORS, and LCA-PCA run faster than LCA-RMK and LCA-RLMK, which also verifies our analysis after Algorithm \ref{alg:RMK}. Meanwhile, from panel (g), we also observe that LCA-RSC, LCA-RSCn, LCA-RSCORS, and LCA-PCA process a response matrix of up to 8000 subjects and 1600 items within five seconds, which suggests the effectiveness of our methods in latent class analysis. The last panel shows that our methods are accurate in deciding the number of latent classes, which implies the effectiveness of the Newman-Girvan modularity in turn.
\begin{figure}
\centering
\resizebox{\columnwidth}{!}{
\subfigure[]{\includegraphics[width=0.2\textwidth]{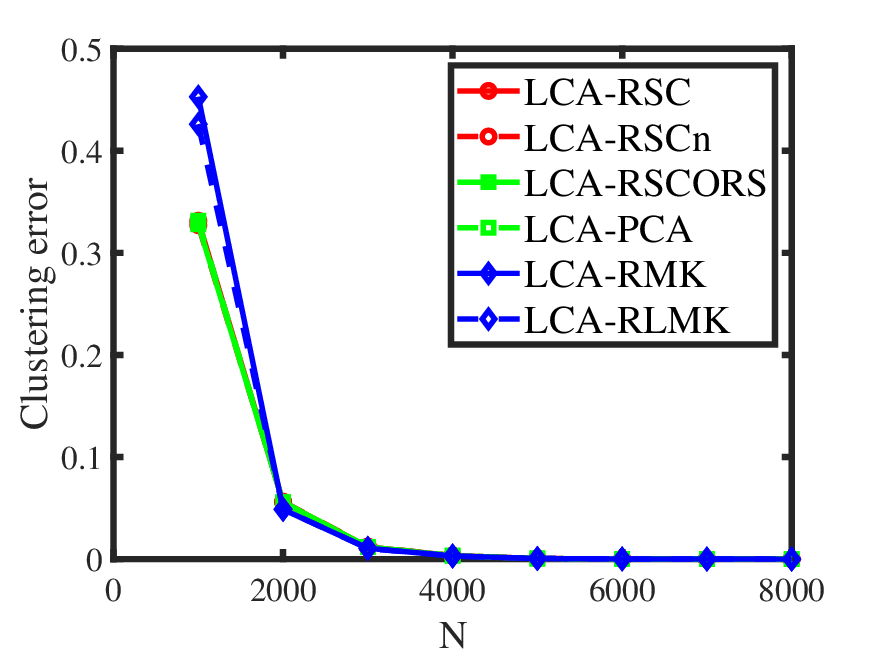}}
\subfigure[]{\includegraphics[width=0.2\textwidth]{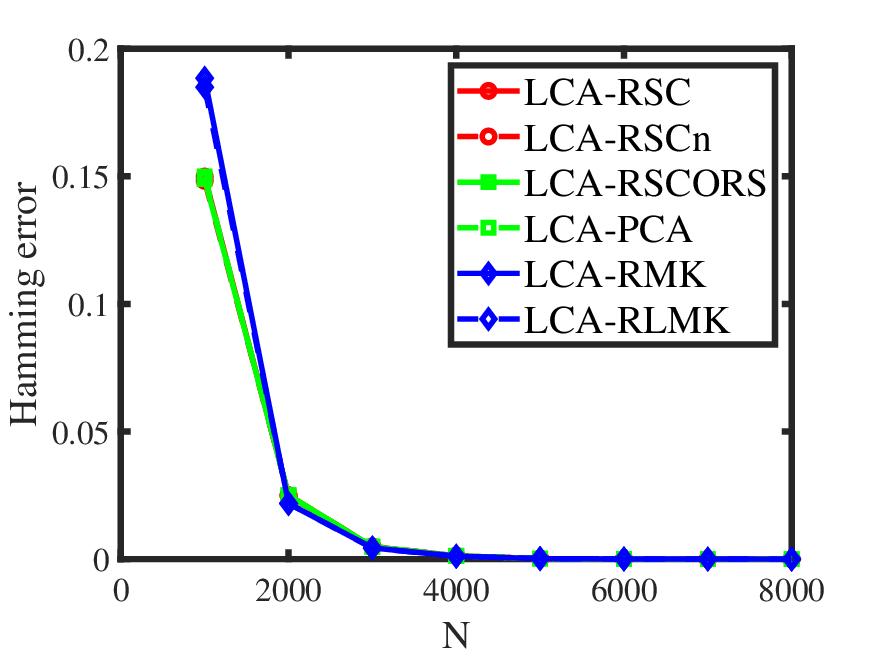}}
\subfigure[]{\includegraphics[width=0.2\textwidth]{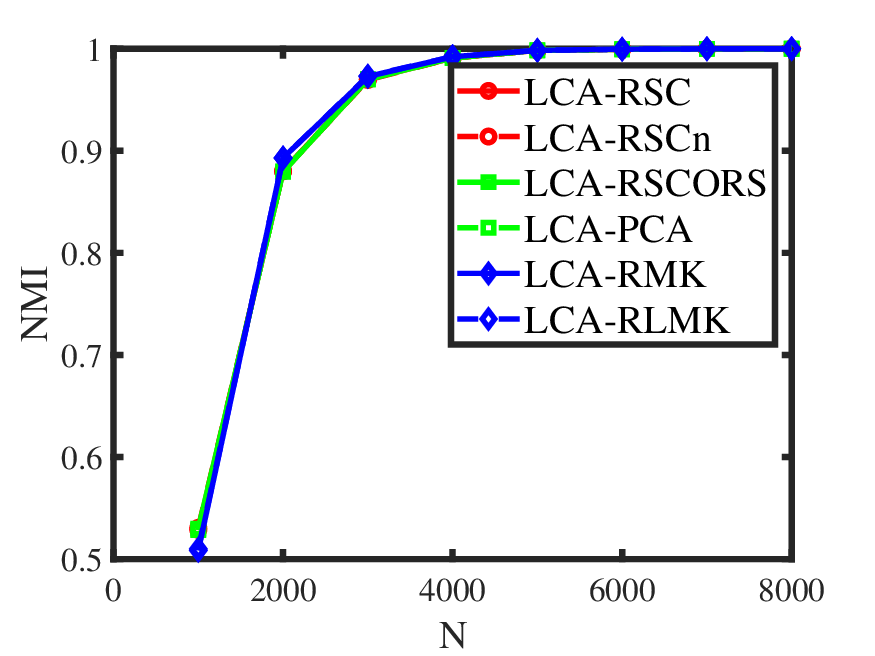}}
\subfigure[]{\includegraphics[width=0.2\textwidth]{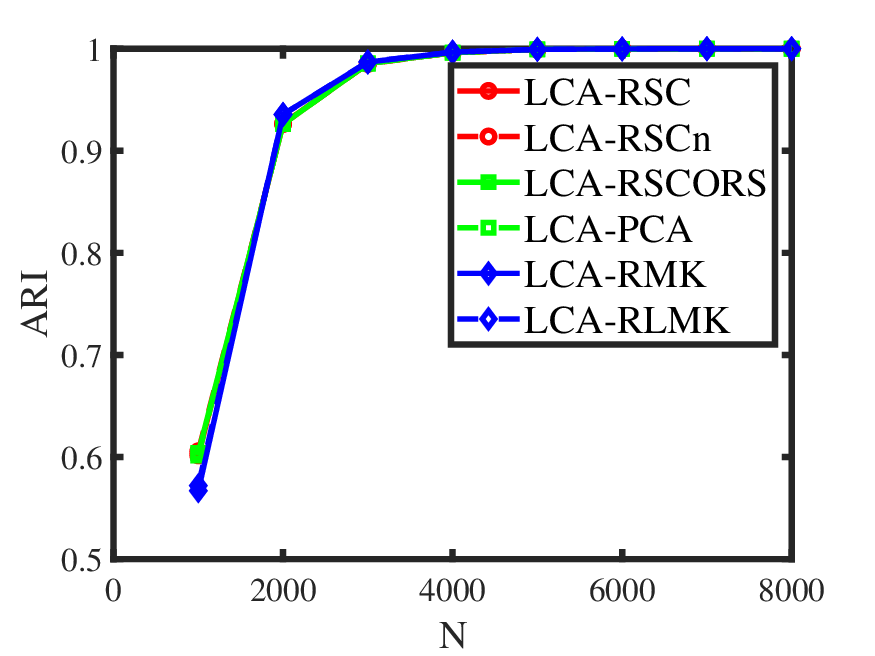}}
}
\resizebox{\columnwidth}{!}{
\subfigure[]{\includegraphics[width=0.2\textwidth]{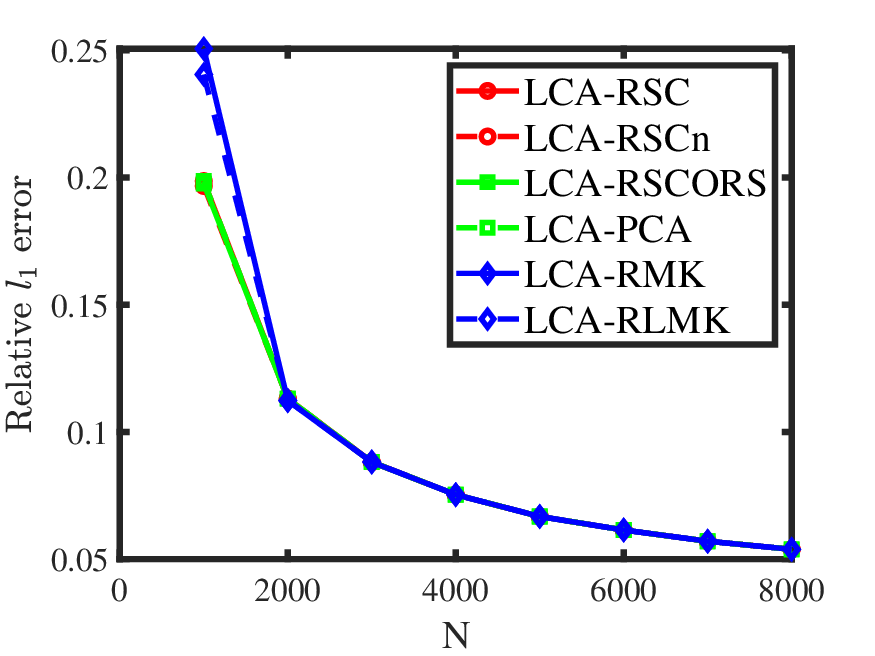}}
\subfigure[]{\includegraphics[width=0.2\textwidth]{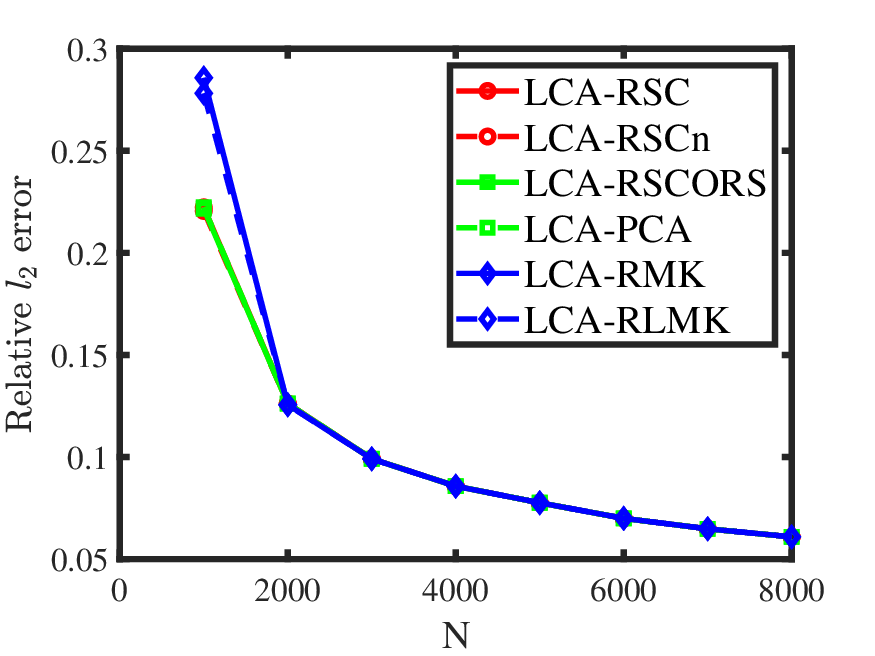}}
\subfigure[]{\includegraphics[width=0.2\textwidth]{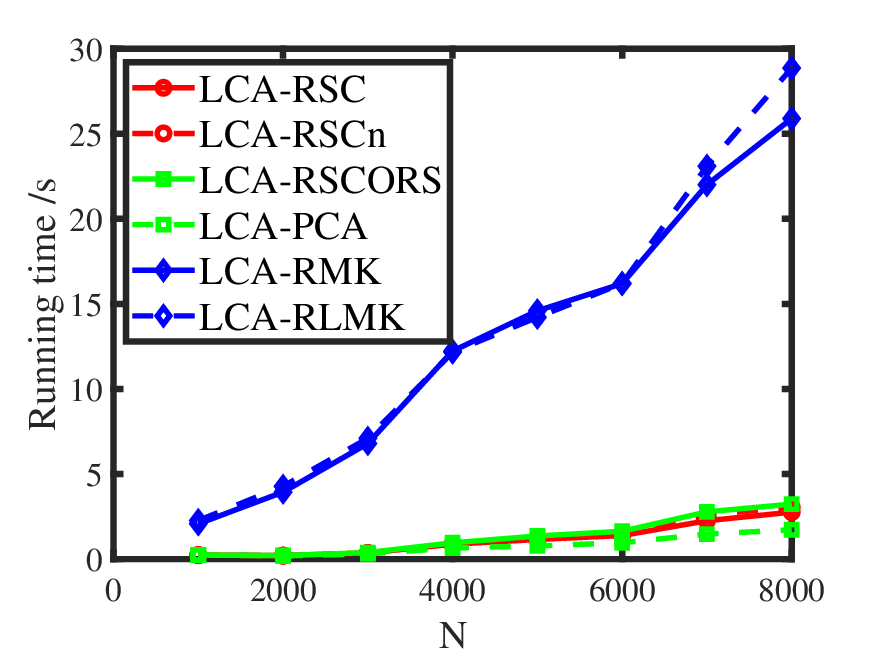}}
\subfigure[]{\includegraphics[width=0.2\textwidth]{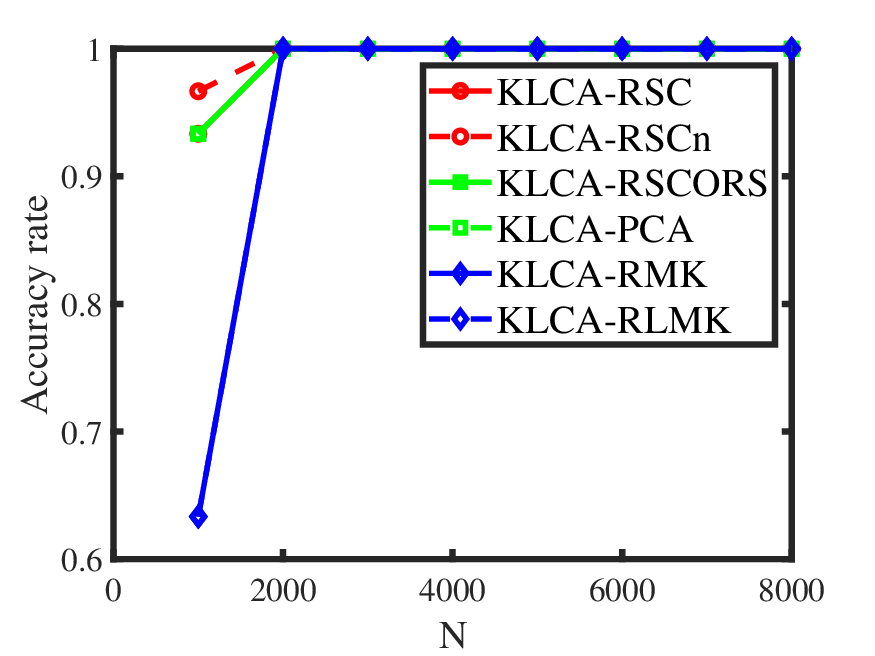}}
}
\caption{Numerical results of Experiment 3.}
\label{Ex3} 
\end{figure}

\texttt{Experiment 4: A simple example.} Here, we provide a toy example by generating only one response matrix. For this experiment, we set $K=2, M=3, N=16$, and $J=10$. In Figure  \ref{Ex4ZThetaRThetahat}, the 1st, 2nd, and 3rd matrices show $Z, \Theta$, and a response matrix $R$  generated from a Binomial distribution with expectation $Z\Theta'$ and 3 independent trials. Applying our methods to the observed response matrix $R$ (i.e., the 3rd matrix in Figure \ref{Ex4ZThetaRThetahat}) obtains their respective metrics, which are shown in Table \ref{ErrorRatesSimulatedR}. We see that all methods exactly recover $Z$ and exactly infer $K$ for this example. All methods also return the same $\hat{\Theta}$ which is the 4th matrix in Figure \ref{Ex4ZThetaRThetahat}. Because $Z$ and $\Theta$ are provided in the first two matrices of Figure \ref{Ex4ZThetaRThetahat}, readers can run our algorithms or some other algorithms to the $R$ in the 3rd matrix of Figure \ref{Ex4ZThetaRThetahat} to verify their accuracies in estimating $Z, \Theta$, and $K$.
\begin{figure}
\centering
\includegraphics[width=1\textwidth]{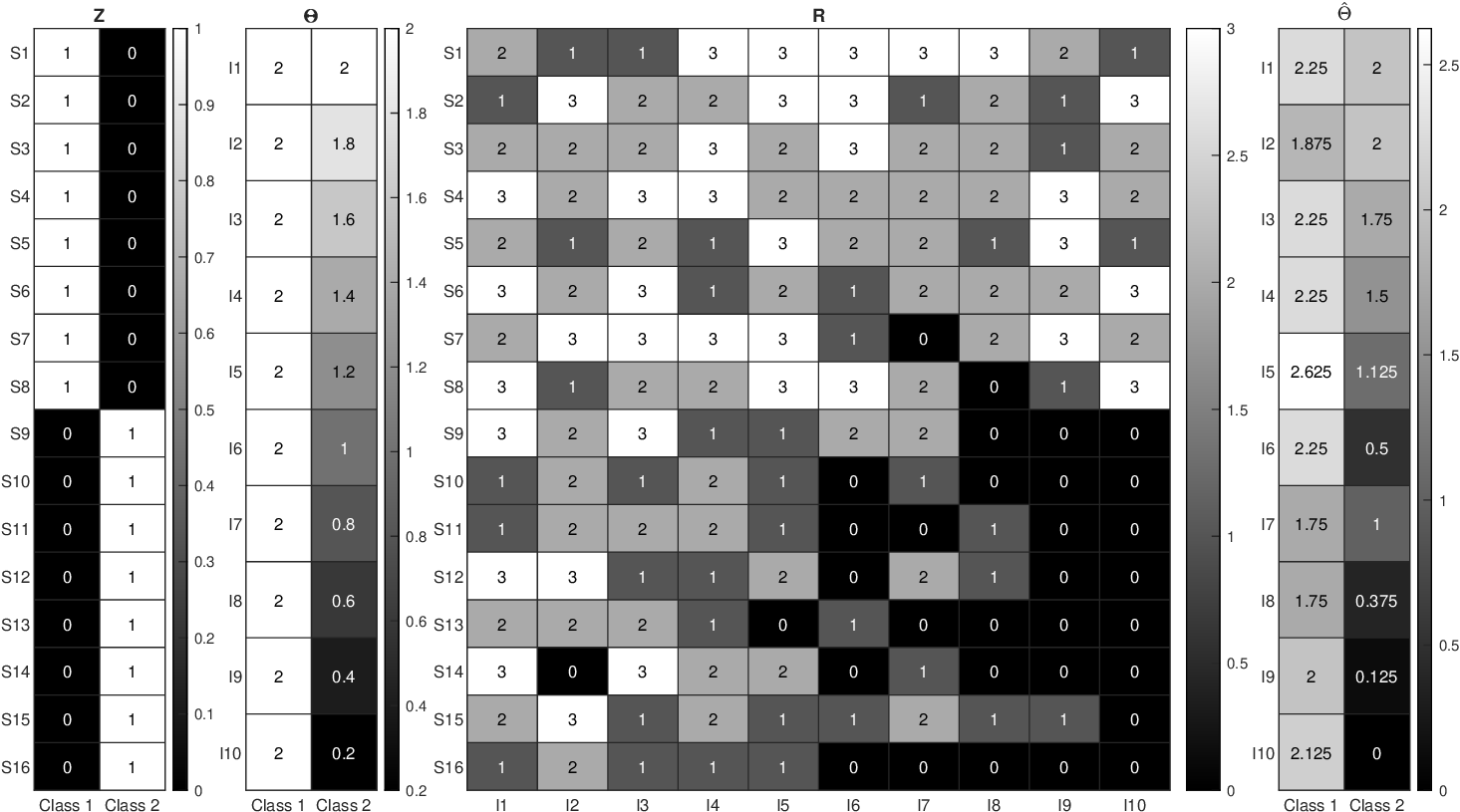}
\caption{The 1st and 2nd matrices are $Z$ and $\Theta$ for Experiment 4, respectively. The 3rd matrix is an observed response matrix $R$ generated from LCM using $Z$ and $\Theta$ in the first two matrices when the number of independent trials $M=3$ in a Binomial distribution. The last matrix is the estimated item parameter matrix $\hat{\Theta}$ returned by all algorithms. For all matrices, S$i$ represents subject $i$ for $i\in[16]$ and I$j$ represents item $j$ for $j\in[10]$.}
\label{Ex4ZThetaRThetahat} 
\end{figure}

\begin{table}[h!]
\footnotesize
	\centering
	\caption{Metrics and $\hat{K}$ of our methods for $R$ in the 3rd matrix of Figure \ref{Ex4ZThetaRThetahat}, where $\hat{K}$ is the output of Equation (\ref{EstimateK}) for each method.}
	\label{ErrorRatesSimulatedR}
	\begin{tabular}{cccccccccccc}
\hline\hline&Clustering error&Hamming error&NMI&ARI&Relative $l_{1}$ error&Relative $l_{2}$ error&$\hat{K}$\\
\hline
LCA-RSC&0&0&1&1&0.1188&0.1550&2\\
LCA-RSCn&0&0&1&1&0.1188&0.1550&2\\
LCA-RSCORS&0&0&1&1&0.1188&0.1550&2\\
LCA-PCA&0&0&1&1&0.1188&0.1550&2\\
LCA-RMK&0&0&1&1&0.1188&0.1550&2\\
LCA-RLMK&0&0&1&1&0.1188&0.1550&2\\
\hline\hline
\end{tabular}
\end{table}
\subsection{Real data applications}\label{sec6realdata}
For empirical studies, we apply algorithms developed in this article to real-world categorical data.
\begin{table}[h!]
\footnotesize
	\centering
	\caption{Estimated number of latent classes and the respective modularity of all methods for real data considered in this paper. The 1st value in the bracket represents $\hat{K}_{\mathcal{M}}$ and the 2nd value in the bracket represents $Q_{\hat{K}_{\mathcal{M}}}$ for method $\mathcal{M}$.}
	\label{RealDataModularityK}
	\begin{tabular}{cccccccccccc}
\hline\hline
Dataset&LCA-RSC&LCA-RSCn&LCA-RSCORS&LCA-PCA&LCA-RMK&LCA-RLMK\\
\hline
MovieLens 100k&(3, 0.0941)&(3, 0.0990)&(3, 0.0729)&(3, 0.0933)&(4, 0.0667)&(4, 0.0673)\\
IPIP&(2, 0.0077)&(2, 0.0077)&(2, 0.0073)&(2, 0.0077)&(2, 0.0071)&(2, 0.0071)\\
\hline\hline
\end{tabular}
\end{table}
\subsubsection{MovieLens 100k data}
The MovieLens 100k \citep{kunegis2013konect} is a user-movie ratings data set that is available at the link \url{http://konect.cc/networks/movielens-100k_rating/}. This data consists of 943 users and 1682 movies. For this data, the rating scores range in $\{0,1,2,3,4,5\}$, where 0 means no rating, and the higher the user's rating for a movie, the more he/she likes it. Therefore we have $R\in\{0,1,2,3,4,5\}^{943\times1682}$. To choose how many latent classes should be used for the MovieLens 100k data,
we apply our methods to its observed response matrix $R$ using Equation (\ref{Modularity}). The results reported in Table \ref{RealDataModularityK} suggest that $K=3$ is most appropriate for the MovieLens 100k data because LCA-RSCn returns the largest modularity value 0.0990 when $K=3$. For this reason, the LCA-RSCn method should be applied to the MovieLens 100k data with 3 latent classes. Applying LCA-RSCn to this data when $K=3$, we get the $943\times 3$ matrix $\hat{Z}$ and the $1682\times 3$ matrix $\hat{\Theta}$. For convenience, we denote the estimated latent classes as Class 1, Class 2, and Class 3. Based on $\hat{Z}$, we find that Class 1, 2, and 3 have 237, 253, and 453 users, respectively. For visualization, we plot the estimated latent classes for the top 30 users and the estimated item parameter matrix for the top 30 movies in Figure \ref{100KZTheta}. We can find the estimated latent class for each class clearly from the 1st matrix in Figure \ref{100KZTheta}. The strength of preference on each movie for users in the same latent class can be found from the 2nd matrix in Figure \ref{100KZTheta}, where Remark \ref{WHYS} explains why $\hat{\Theta}(j,k)$ reflects the strength of preference for users in the $k$-th latent class to the $j$-th movie for $j\in[1682], k\in[3]$. For example, users in Class 1 tend to give higher rating scores than that of users in Classes 2 and 3 for Movies 1-5, 7, 11, 15, 17, 21, 22, 24, 25, 27, and 29; users in Class 2 give higher rating scores than that of users in Class 3 for Movies 1, 2, 4-23, 25-26, and 27-30. In fact, by analyzing the $1682\times 3$ matrix $\hat{\Theta}$, we find that $\sum_{j=1}^{1682}\hat{\Theta}(j,1)=604.9283, \sum_{j=1}^{1682}\hat{\Theta}(j,2)=502.6364$, and $\sum_{j=1}^{1682}\hat{\Theta}(j,3)=182.0110$, which suggests that users in Class 1 tend to give higher rating scores than users in Class 2 and Class 3 while users in Class 2 tend to give higher values than users in Class 3. Based on this finding, we interpret Class 1 as users who hold an optimistic view of movies, Class 2 as users who hold a neutral view of movies, and Class 3 as users who hold a passive view of movies.
\begin{rem}\label{WHYS}
Let $\hat{N}_{k}=\sum_{i=1}^{N}\hat{Z}(i,k)$ for $k\in[K]$, i.e., $\hat{N}_{k}$ denotes the size of the $k$-th estimated latent class. By the 4th and 5th steps of Algorithm \ref{alg:RSC}, we have $\hat{\Theta}(j,k)$ roughly equals to $\frac{\sum_{i: \hat{Z}(i,k)==1\mathrm{~for~}i\in[N]}R(i,j)}{\hat{N}_{k}}$ for $j\in[J], k\in[K]$. Therefore, similar to $\Theta(j,k)$, $\hat{\Theta}(j,k)$ also denotes the averaged response for subjects in the $k$-th estimated latent class to the $j$-th item.
\end{rem}
\begin{figure}
\centering
\includegraphics[width=0.7\textwidth]{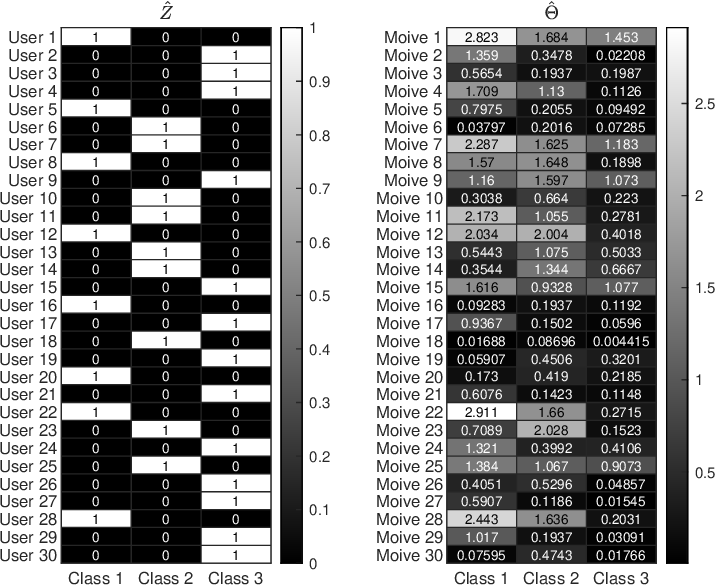}
\caption{Left matrix: Heatmap of $\hat{Z}$ of a subset of 30 users for the MovieLens 100k data. Right matrix: Heatmap of $\hat{\Theta}$ of a subset of 30 movies for the MovieLens 100k data.}
\label{100KZTheta} 
\end{figure}
\subsection{International Personality Item Pool (IPIP) personality test data}
Our methods are also applied to deal with a psychological test dataset, the International Personality Item Pool (IPIP) personality test data. It can be downloaded from \url{https://openpsychometrics.org/_rawdata/} and it contains responses of 1005 individuals to 40 personality test questions. After removing one individual that does not respond to any test question, we get $N=1004$. For this data, 0, 1, 2, 3, 4, and 5 denote no response, strongly disagree, disagree, neither agree not disagree, agree, and strongly agree, respectively. Therefore, $R\in\{0,1,2,3,4,5\}^{1004\times40}$. Questions 1-10, 11-20, 21-30, and 31-40 measure the personality factors Assertiveness (``AS”), Social confidence ( “SC”),  Adventurousness (``AD”), and Dominance (``DO”), respectively. These questions can be found in the 2nd matrix of Figure \ref{IPIPZTheta}. Table \ref{RealDataModularityK} reports the estimated number of latent classes and the respective modularity for each method. We see that all methods suggest $K=2$ for this data. Here, we also consider LCA-RSCn for the IPIP data because LCA-RSCn returns the largest value of modularity. Applying LCA-RSCn to IPIP with $K=2$, we get the $1004\times2$ matrix $\hat{Z}$ and the $40\times 2$ matrix $\hat{\Theta}$. We also use Class 1 and Class 2 to denote the two estimated latent classes for the IPIP data. By analyzing $\hat{Z}$, we find that there are 475 individuals in Class 1 and 529 individuals in Class 2. For visualization, we plot the estimated latent classes for the top 40 individuals and $\hat{\Theta}$ in Figure \ref{IPIPZTheta}. By carefully analyzing the heatmap of $\hat{\Theta}$, Class 1 can be interpreted as individuals who are socially passive/negative and Class 2 can be interpreted as individuals who are socially active/positive.  Elements of $\hat{\Theta}$ imply that individuals in Class 2 are more expressional\&active\&ambitious\&confident\&creative\&dominant\&social\&open than individuals in Class 1.

\begin{figure}
\centering
\includegraphics[width=1\textwidth]{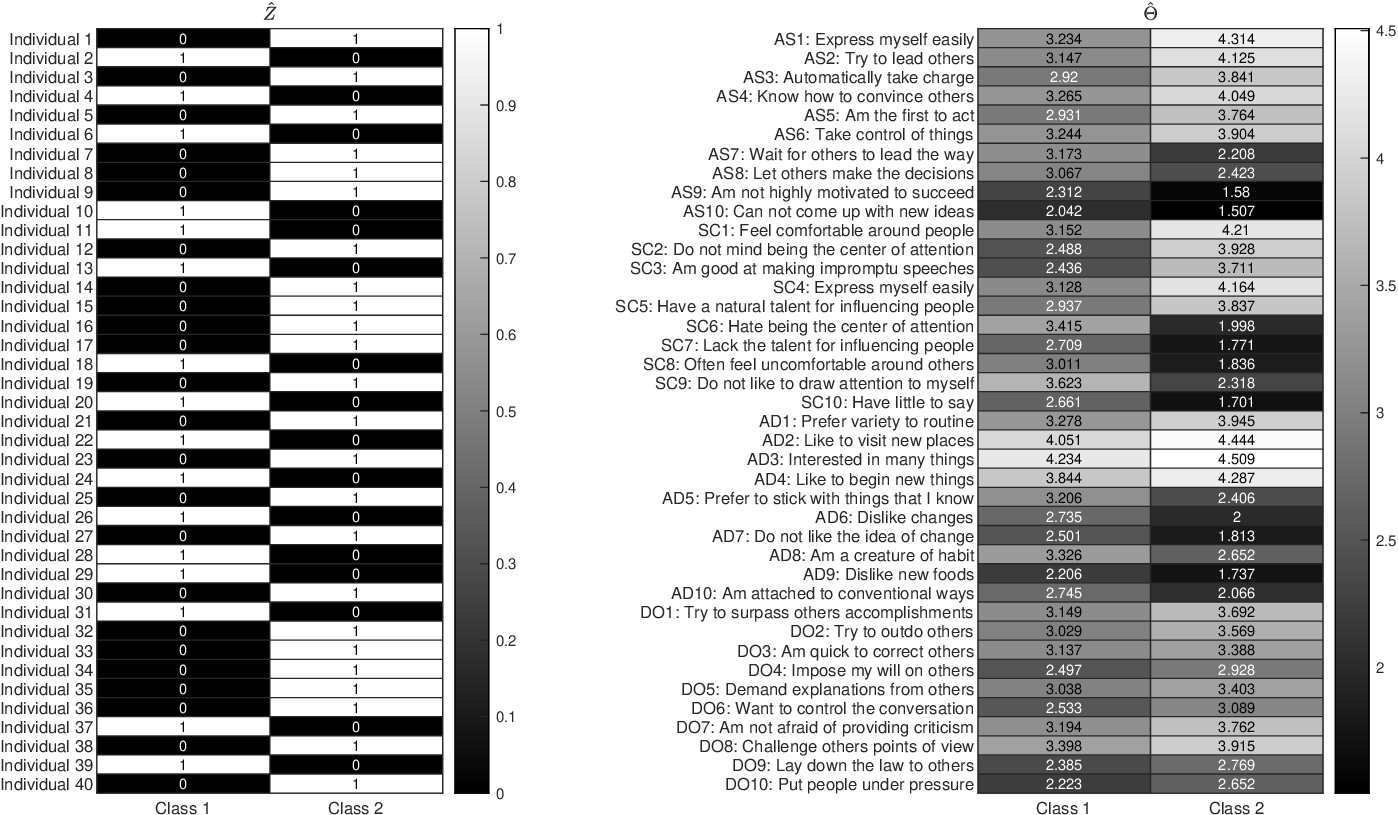}
\caption{Left matrix: Heatmap of $\hat{Z}$ of a subset of 40 individuals for the IPIP data. Right matrix: Heatmap of $\hat{\Theta}$ for the IPIP data.}
\label{IPIPZTheta} 
\end{figure}
\section{Conclusion}\label{sec7}
In this article, two efficient regularized spectral clustering algorithms are proposed for the latent class model in categorical data. This is the first time to design algorithms based on the newly defined regularized Laplacian matrix in latent class analysis for categorical data. We establish the theoretical guarantees for our algorithms by considering the sparsity parameter. We show that our algorithms can consistently recover the hidden classes and the item parameter matrix under a mild condition on the sparsity parameter. We also use a well-known modularity to evaluate the quality of latent class analysis. A strategy combining modularity with our algorithms is provided to determine how many latent classes are there for real-world categorical data. We conduct substantial experiments to demonstrate that our algorithms for latent class analysis are efficient and accurate, and our methods for estimating $K$ have satisfactory performances, which also supports the effectiveness of the modularity in quantifying the quality of latent class analysis in turn. We hope that our regularized spectral clustering algorithms, our idea of using modularity to measure the quality of latent class analysis, and our methods for estimating $K$ will have broad applications for categorical data in diverse fields like social, psychological, and behavioral sciences.

For future research, first, theoretical guarantees of LCA-RSCORS, LCA-RMK, and LCA-RLMK should be developed. Second, rigorous approaches should be proposed to determine $K$ under the latent class model for categorical data. Third, the grade of membership (GoM) model \citep{woodbury1978mathematical,erosheva2005comparing} allows a subject to have partial memberships in different classes and it is more general than LCM. Thus, following ideas proposed in this article, designing regularized spectral clustering algorithms to estimate a grade of membership model is meaningful. Fourth, developing more efficient algorithms for latent class analysis to handle large-scale categorical data is an appealing topic.
\section*{CRediT authorship contribution statement}
\textbf{Huan Qing} is the sole author of this article.
\section*{Declaration of competing interest}
The author declares no competing interests.
\section*{Data availability}
Data and code will be made available on request.

\appendix
\section{Proofs under LCM}\label{SecProofs}
\subsection{Proof of Lemma \ref{SVDPopulationLtau}}
\begin{proof}
For the 1st statement, $\mathscr{R}=Z\Theta'$ gives $\Theta Z'=\mathscr{R}'\Rightarrow \Theta Z'Z=\mathscr{R}'Z\Rightarrow \Theta=\mathscr{R}'Z(Z'Z)^{-1}$, where $Z'Z$ is a $K\times K$ nonsingular matrix since $Z$'s rank is $K$.

For the 2nd statement, since $\mathscr{D}_{\tau}(i,i)=\tau+\sum_{j=1}^{J}\mathscr{R}(i,j)$ and $\mathscr{R}(i,j)=Z(i,:)\Theta'(j,:)=\Theta(j,\ell(i))$, we get $\mathscr{D}_{\tau}(i,i)=\mathscr{D}_{\tau}(\bar{i},\bar{i})$ if $\ell(i)=\ell(\bar{i})$. Thus, the diagonal entries of $\mathscr{D}_{\tau}$ only have $K$ distinct values since all subjects belong to the $K$ latent classes.

Since $\mathscr{L}_{\tau}=\mathscr{D}^{-1/2}_{\tau}\mathscr{R}=\mathscr{D}^{-1/2}_{\tau}Z\Theta'=U\Sigma V'$, we have $U=\mathscr{D}^{-1/2}_{\tau}Z\Theta'V\Sigma^{-1}$, which gives that $U(i,:)=\mathscr{D}^{-1/2}_{\tau}(i,i)Z(i,:)\Theta'V\Sigma^{-1}$. Thus, we have $U(i,:)=U(\bar{i},:)$ if $\ell(i)=\ell(\bar{i})$ and $U$ has $K$ distinct rows since all subjects belong to the $K$ latent classes. Meanwhile, since $\mathscr{D}_{\tau}(i,i)=\mathscr{D}_{\tau}(\bar{i},\bar{i})$ when $\ell(i)=\ell(\bar{i})$, we have $U=\mathscr{D}^{-1/2}_{\tau}Z\Theta'V\Sigma^{-1}=Z\mathscr{D}^{-1/2}_{\tau}(\mathcal{I},\mathcal{I})\Theta V'\Sigma^{-1}$, i.e., $U=ZX$ with $X$ being $\mathscr{D}^{-1/2}_{\tau}(\mathcal{I},\mathcal{I})\Theta V'\Sigma^{-1}\equiv U(\mathcal{I},:)$.

For the 3rd statement, we have $U_{*}(i,:)=\frac{U(i,:)}{\|U(i,:)\|_{F}}=\frac{Z(i,:)X}{\|Z(i,:)X\|_{F}}=\frac{X(\ell(i),:)}{\|X(\ell(i),:)\|_{F}}=U_{*}(\bar{i},:)$ if $\ell(i)=\ell(\bar{i})$, which gives that $U_{*}$ has $K$ distinct rows and $U_{*}=ZY$ with $Y$ being $U_{*}(\mathcal{I},:)$.

For the 4th statement, since $U'U=I_{K\times K}$ and $U=ZX$, we have $X'Z'ZX=I_{K\times K}$, which gives that $XX'=(Z'Z)^{-1}=\mathrm{diag}(\frac{1}{N_{1}}, \frac{1}{N_{2}}, \ldots, \frac{1}{N_{K}})$. Then we get $X(k,:)X'(k,:)=\frac{1}{N_{k}}$ and $X(k,:)X'(l,:)=0$ for $k\neq l\in[K]$, which gives that $\|X(k,:)-X(l,:)\|_{F}=\sqrt{(X(k,:)-X(l,:))(X(k,:)-X(l,:))'}=\sqrt{\frac{1}{N_{k}}+\frac{1}{N_{l}}}$ when $k\neq l$.

Since $Y=U_{*}(\mathcal{I},:)$ and $X=U(\mathcal{I},:)$, we have $Y(k,:)=\frac{X(k,:)}{\|X(k,:)\|_{F}}=\frac{X(k,:)}{\sqrt{X(k,:)X'(k,:)}}=\sqrt{N_{k}}X(k,:)$ for $k\in[K]$. Thus, we have $Y(k,:)Y'(l,:)=\sqrt{N_{k}N_{l}}X(k,:)X'(l,:)=0$ when $k\neq l$ and $Y(k,:)Y'(k,:)=N_{k}X(k,:)X'(k,:)=1$ for $k\in[K]$. Finally, we have $\|Y(k,:)-Y(l,:)\|_{F}=\sqrt{(Y(k,:)-Y(l,:))(Y(k,:)-Y(l,:))'}=\sqrt{2}$ when $k\neq l$. This completes the proof.
\end{proof}
\subsection{Proof of Lemma \ref{boundLLCM}}
\begin{proof}
First, the following result holds.
\begin{align*}
\|L_{\tau}-\mathscr{L}_{\tau}\|&=\|D_{\tau}^{-1/2}R-\mathscr{D}^{-1/2}_{\tau}\mathscr{R}\|=\|D_{\tau}^{-1/2}R-\mathscr{D}^{-1/2}_{\tau}R+\mathscr{D}^{-1/2}_{\tau}R-\mathscr{D}^{-1/2}_{\tau}\mathscr{R}\|\leq\|\mathscr{D}^{-1/2}_{\tau}R-\mathscr{D}^{-1/2}_{\tau}\mathscr{R}\|+\|D_{\tau}^{-1/2}R-\mathscr{D}^{-1/2}_{\tau}R\|.
\end{align*}
Next, we bound the two terms $\|\mathscr{D}^{-1/2}_{\tau}R-\mathscr{D}^{-1/2}_{\tau}\mathscr{R}\|$ and $\|D_{\tau}^{-1/2}R-\mathscr{D}^{-1/2}_{\tau}R\|$ separately.

We bound the first term using Theorem 1.6 in \cite{tropp2012user}. We describe this theorem below
\begin{thm}\label{Bern}
(Theorem 1.6 of \citep{tropp2012user}) Let $\{X_{n}\}$ be a finite sequence that are independent, random matrices with dimensions $N\times J$. Assumes that
\begin{align*}
\mathbb{E}(X_{n})=0, \mathrm{and~}\|X_{n}\|\leq r~\mathrm{almost~surely}.
\end{align*}
Then, for any $t\in[0,+\infty)$,
\begin{align*}
\mathbb{P}(\|\sum_{n}X_{n}\|\geq t)\leq (N+J)\cdot \mathrm{exp}(\frac{-t^{2}/2}{\sigma^{2}+rt/3}),
\end{align*}
where $\sigma^{2}:=\mathrm{max}\{\|\sum_{n}\mathbb{E}(X_{n}X'_{n})\|,\|\sum_{n}\mathbb{E}(X'_{n}X_{n})\|\}$.
\end{thm}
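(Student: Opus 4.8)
The final statement is the rectangular matrix Bernstein inequality of \citep{tropp2012user}, so the plan is to follow the matrix Laplace transform method of Ahlswede--Winter and Tropp: reduce the rectangular problem to a Hermitian one by self-adjoint dilation, control the upper tail by a matrix Chernoff argument, and defeat the non-commutativity with Lieb's concavity theorem. First I would pass from the rectangular summands $X_{n}\in\mathbb{R}^{N\times J}$ to Hermitian matrices via the dilation
\[
\mathcal{H}(X)=\begin{bmatrix} 0 & X \\ X' & 0 \end{bmatrix}\in\mathbb{R}^{(N+J)\times(N+J)}.
\]
The map $\mathcal{H}$ is linear, so $\mathcal{H}(\sum_{n}X_{n})=\sum_{n}\mathcal{H}(X_{n})$, and since the eigenvalues of $\mathcal{H}(X)$ are precisely $\pm$ the singular values of $X$, one has $\lambda_{\max}(\mathcal{H}(A))=\|A\|$ for every $A$. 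Hence the event $\|\sum_{n}X_{n}\|\geq t$ is identical to $\lambda_{\max}(\sum_{n}Y_{n})\geq t$ with $Y_{n}:=\mathcal{H}(X_{n})$. These $Y_{n}$ are independent, Hermitian, have $\mathbb{E}(Y_{n})=\mathcal{H}(\mathbb{E}(X_{n}))=0$, and satisfy $\|Y_{n}\|\leq r$ almost surely. Moreover $\mathcal{H}(X_{n})^{2}=\mathrm{diag}(X_{n}X_{n}',\,X_{n}'X_{n})$, so $\|\sum_{n}\mathbb{E}(Y_{n}^{2})\|=\max\{\|\sum_{n}\mathbb{E}(X_{n}X_{n}')\|,\|\sum_{n}\mathbb{E}(X_{n}'X_{n})\|\}=\sigma^{2}$, matching the statement, and the ambient dimension becomes $N+J$, which will supply the prefactor.

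Second, I would bound the upper tail of $\lambda_{\max}$ by the matrix Chernoff inequality. Using $e^{\theta\lambda_{\max}(Y)}=\lambda_{\max}(e^{\theta Y})\leq\mathrm{tr}\,e^{\theta Y}$ for $\theta>0$ and Hermitian $Y$, Markov's inequality yields, for every $\theta>0$,
\[
\mathbb{P}\!\left(\lambda_{\max}\!\left(\sum_{n}Y_{n}\right)\geq t\right)\leq e^{-\theta t}\,\mathbb{E}\!\left[\mathrm{tr}\,\exp\!\left(\theta\sum_{n}Y_{n}\right)\right].
\]

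Third, and this is the crux, I would control the trace of the matrix moment generating function of the sum. Because the $Y_{n}$ do not commute, $\exp(\theta\sum_{n}Y_{n})$ does not factor, and the argument rests on Lieb's concavity theorem: the map $A\mapsto\mathrm{tr}\,\exp(H+\log A)$ is concave on positive-definite $A$. Peeling off one independent summand at a time, taking conditional expectations, and applying Jensen's inequality through this concavity gives the subadditivity of matrix cumulants,
\[
\mathbb{E}\!\left[\mathrm{tr}\,\exp\!\left(\theta\sum_{n}Y_{n}\right)\right]\leq\mathrm{tr}\,\exp\!\left(\sum_{n}\log\mathbb{E}\bigl[e^{\theta Y_{n}}\bigr]\right).
\]
I expect this to be the main obstacle: it is the single place where the non-commutativity is genuinely overcome, and it is where the deep ingredient (Lieb) enters, whereas everything before and after is scalar bookkeeping lifted to matrices through the transfer rule and operator monotonicity.

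Finally, I would bound each per-summand cumulant. Since the scalar map $x\mapsto(e^{\theta x}-1-\theta x)/x^{2}$ is increasing, it is at most $g(\theta):=(e^{\theta r}-1-\theta r)/r^{2}$ for $x\leq r$; the transfer rule then gives $e^{\theta Y_{n}}\preceq I+\theta Y_{n}+g(\theta)Y_{n}^{2}$. Taking expectations, using $\mathbb{E}(Y_{n})=0$ and $I+A\preceq e^{A}$, yields $\log\mathbb{E}[e^{\theta Y_{n}}]\preceq g(\theta)\,\mathbb{E}(Y_{n}^{2})$. Summing, invoking monotonicity of the trace exponential together with $\|\sum_{n}\mathbb{E}(Y_{n}^{2})\|=\sigma^{2}$, and bounding $\mathrm{tr}\,\exp$ by $(N+J)$ times its largest eigenvalue gives
\[
\mathbb{P}\!\left(\lambda_{\max}\!\left(\sum_{n}Y_{n}\right)\geq t\right)\leq(N+J)\,\exp\!\left(-\theta t+g(\theta)\sigma^{2}\right).
\]
It remains to estimate $g(\theta)\leq(\theta^{2}/2)/(1-\theta r/3)$ on $0<\theta<3/r$ and optimize by choosing $\theta=t/(\sigma^{2}+rt/3)$, which produces the claimed bound $(N+J)\exp(-(t^{2}/2)/(\sigma^{2}+rt/3))$ once the dilation identities of the first step are collected.
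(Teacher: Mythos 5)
The paper does not actually prove this statement---it is imported verbatim as Theorem 1.6 of \citep{tropp2012user} and used as a black-box tool in the proof of Lemma \ref{boundLLCM}---so the only meaningful benchmark is Tropp's original argument, and your sketch reconstructs it faithfully and correctly: the self-adjoint dilation with the identities $\lambda_{\mathrm{max}}(\mathcal{H}(A))=\|A\|$ and $\mathcal{H}(X)^{2}=\mathrm{diag}(XX',X'X)$ (which deliver the variance proxy $\sigma^{2}$ and the dimensional prefactor $N+J$), the Laplace-transform/Chernoff step, subadditivity of matrix cumulant generating functions via Lieb's concavity theorem, the per-summand bound $\log\mathbb{E}[e^{\theta Y_{n}}]\preceq g(\theta)\,\mathbb{E}(Y_{n}^{2})$ through the transfer rule, and the estimate $g(\theta)\leq(\theta^{2}/2)/(1-\theta r/3)$ with the optimizing choice $\theta=t/(\sigma^{2}+rt/3)$ (which indeed satisfies $\theta<3/r$) are exactly the components of the published proof, and each step checks out. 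In short: correct, and essentially the same approach as the cited source; the only cosmetic caveat is that when $N\neq J$ the dilation also carries $|N-J|$ zero eigenvalues, which does not affect the identity $\lambda_{\mathrm{max}}(\mathcal{H}(A))=\|A\|$ that you actually use.
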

Set $e_{i}$ as an $N$-by-$1$ vector with $e_{i}(i)=1$ and $0$ elsewhere for $i\in[N]$, and $\tilde{e}_{j}$ as a $J$-by-$1$ vector with $\tilde{e}_{j}(j)=1$ and $0$ elsewhere for $j\in[J]$.
Let $W=\mathscr{D}^{-1/2}_{\tau}R-\mathscr{D}^{-1/2}_{\tau}\mathscr{R}$. Then we can rewrite $W$ as $W=\sum_{i\in[N]}\sum_{j\in[J]}W(i,j)e_{i}\tilde{e}'_{j}=\sum_{i\in[N]}\sum_{j\in[J]}W^{(i,j)}$, where $W^{(i,j)}=W(i,j)e_{i}\tilde{e}'_{j}$. Under $\mathrm{LCM}(Z,\Theta)$, for $i\in[N],j\in[J]$, $\mathbb{E}(W(i,j))=0$ holds and
\begin{align*}
\|W^{(i,j)}\|&=\frac{1}{\sqrt{\tau+\mathscr{D}(i,i)}}\|(R(i,j)-\mathscr{R}(i,j))e_{i}\tilde{e}'_{j}\|\leq\frac{1}{\sqrt{\tau+\delta_{\mathrm{min}}}}|R(i,j)-\mathscr{R}(i,j)|\|e_{i}\tilde{e}'_{j}\|\leq\frac{M}{\sqrt{\tau+\delta_{\mathrm{min}}}},
\end{align*}
where the last inequality holds since $R(i,j)\in\{0,1,2,\ldots, M\}, 0\leq \mathscr{R}(i,j)\leq \rho\leq M$, and $\|e_{i}\tilde{e}'_{j}\|=1$. Thus we have $r=\frac{M}{\sqrt{\tau+\delta_{\mathrm{min}}}}$.

Then we aim at bounding the variance term $\sigma^{2}=\mathrm{max}\{\|\sum_{i\in[N]}\sum_{j\in[J]}\mathbb{E}(W^{(i,j)}(W^{(i,j)})')\|,\|\sum_{i\in[N]}\sum_{j\in[J]}\mathbb{E}((W^{(i,j)})'W^{(i,j)})\|\}$. Because $R(i,j)\sim\mathrm{Binomial}(M,\frac{\mathscr{R}(i,j)}{M})$, we have $\mathbb{E}(W^{2}(i,j))=\frac{1}{\tau+\mathscr{D}(i,i)}\mathbb{E}((R(i,j)-\mathscr{R}(i,j))^{2})\leq\frac{1}{\tau+\delta_{\mathrm{min}}}\mathrm{Var}(R(i,j))=\frac{1}{\tau+\delta_{\mathrm{min}}}M\frac{\mathscr{R}(i,j)}{M}(1-\frac{\mathscr{R}(i,j)}{M})=\frac{1}{\tau+\delta_{\mathrm{min}}}\mathscr{R}(i,j)(1-\frac{\mathscr{R}(i,j)}{M})\leq \frac{1}{\tau+\delta_{\mathrm{min}}}\mathscr{R}(i,j)\leq \frac{\rho}{\tau+\delta_{\mathrm{min}}}$, which gives
\begin{align*}
\|\sum_{i=1}^{N}\sum_{j=1}^{J}\mathbb{E}(W^{(i,j)}(W^{(i,j)})')\|=\|\sum_{i=1}^{N}\sum_{j=1}^{J}\mathbb{E}(W^{2}(i,j))e_{i}\tilde{e}'_{j}\tilde{e}_{j}e'_{i}\|=\|\sum_{i=1}^{N}\sum_{j=1}^{J}\mathbb{E}(W^{2}(i,j))e_{i}e'_{i}\|\leq \frac{\rho J}{\tau+\delta_{\mathrm{min}}}.
\end{align*}
Similarly, we have $\|\sum_{i=1}^{N}\sum_{j=1}^{J}\mathbb{E}((W^{(i,j)})'W^{(i,j})\|\leq \frac{\rho N}{\tau+\delta_{\mathrm{min}}}$. Hence, we have
\begin{align*}
\sigma^{2}\leq \frac{\rho\mathrm{~max}(N,J)}{\tau+\delta_{\mathrm{min}}}.
\end{align*}
Let $t=\frac{\alpha+1+\sqrt{\alpha^{2}+20\alpha+19}}{3\sqrt{\tau+\delta_{\mathrm{min}}}}\sqrt{\rho \mathrm{~max}(N,J)\mathrm{log}(N+J)}$, where $\alpha$ is any positive value. Based on Theorem \ref{Bern} and Assumption \ref{Assum1}, we obtain the following result
\begin{align*}
&\mathbb{P}(\|W\|\geq t)\leq (N+J)\mathrm{exp}(-\frac{t^{2}/2}{\sigma^{2}+\frac{rt}{3}})\leq (N+J)\mathrm{exp}(-\frac{t^{2}/2}{\frac{\rho\mathrm{~max}(N,J)}{\tau+\delta_{\mathrm{min}}}+\frac{tM}{3\sqrt{\tau+\delta_{\mathrm{min}}}}})\\
&=(N+J)\mathrm{exp}(-(\alpha+1)\mathrm{log}(N+J)\cdot \frac{1}{\frac{2(\alpha+1)\rho\mathrm{~max}(N,J)\mathrm{log}(N+J)}{t^{2}(\tau+\delta_{\mathrm{min}})}+\frac{2(\alpha+1)M}{3\sqrt{\tau+\delta_{\mathrm{min}}}}\frac{\mathrm{log}(N+J)}{t}})\\
&=(N+J)\mathrm{exp}(-(\alpha+1)\mathrm{log}(N+J)\cdot \frac{1}{\frac{18}{(\sqrt{\alpha+19}+\sqrt{\alpha+1})^{2}}+\frac{2\sqrt{\alpha+1}}{\sqrt{\alpha+19}+\sqrt{\alpha+1}}\sqrt{\frac{M^{2}\mathrm{log}(N+J)}{\rho\mathrm{~max}(N,J)}}})\\
&\leq (N+J)\mathrm{exp}(-(\alpha+1)\mathrm{log}(N+J))=\frac{1}{(N+J)^{\alpha}}.
\end{align*}
Therefore, with probability at least $1-o(\frac{1}{(N+J)^{\alpha}})$, we have
\begin{align*}
\|\mathscr{D}^{-1/2}_{\tau}R-\mathscr{D}^{-1/2}_{\tau}\mathscr{R}\|\leq \frac{\alpha+1+\sqrt{\alpha^{2}+20\alpha+19}}{3\sqrt{\tau+\delta_{\mathrm{min}}}}\sqrt{\rho \mathrm{~max}(N,J)\mathrm{log}(N+J)}.
\end{align*}

For the second term $\|D_{\tau}^{-1/2}R-\mathscr{D}^{-1/2}_{\tau}R\|$, let $\tilde{D}$ be a $J\times J$ diagonal matrix with $\tilde{D}(j,j)=\sum_{i=1}^{N}R(i,j)$ for $j\in[J]$. Since $R(i,j)\in\{0,1,2,\ldots,M\}$, we see that $0\leq\tilde{D}(j,j)\leq NM$, i.e., $\|\tilde{D}^{1/2}\|\leq\sqrt{MN}$. Let $I$ be the $N\times N$ identity matrix. Based on the fact that $\|D^{-1/2}R\tilde{D}^{-1/2}\|=1$ and $\|D^{-1/2}_{\tau}D^{1/2}\|\leq1$ provided that $\tau\geq0$, we have
\begin{align*}
\|D_{\tau}^{-1/2}R-\mathscr{D}^{-1/2}_{\tau}R\|&=\|(D^{-1/2}_{\tau}-\mathscr{D}^{-1/2}_{\tau})D^{1/2}_{\tau}D^{-1/2}_{\tau}D^{1/2}D^{-1/2}R\tilde{D}^{-1/2}\tilde{D}^{1/2}\|\\
&\leq\|I-D^{1/2}_{\tau}\mathscr{D}^{-1/2}_{\tau}\|\|D^{-1/2}_{\tau}D^{1/2}\|\|D^{-1/2}R\tilde{D}^{-1/2}\|\|\tilde{D}^{1/2}\|\leq\sqrt{MN}\|I-D^{1/2}_{\tau}\mathscr{D}^{-1/2}_{\tau}\|\\
&=\sqrt{MN}\mathrm{~max}_{i\in[N]}|\sqrt{\frac{D(i,i)+\tau}{\mathscr{D}(i,i)+\tau}}-1|\leq\sqrt{MN}\mathrm{~max}_{i\in[N]}|\frac{D(i,i)+\tau}{\mathscr{D}(i,i)+\tau}-1|=\sqrt{MN}\mathrm{~max}_{i\in[N]}|\frac{D(i,i)-\mathscr{D}(i,i)}{\tau+\mathscr{D}(i,i)}|\\
&\leq\frac{\sqrt{MN}}{\tau+\delta_{\mathrm{min}}}\mathrm{max}_{i\in[N]}|D(i,i)-\mathscr{D}(i,i)|=\frac{\sqrt{MN}}{\tau+\delta_{\mathrm{min}}}\mathrm{max}_{i\in[N]}|\sum_{j=1}^{J}(R(i,j)-\mathscr{R}(i,j))|.
\end{align*}

To bound $|\sum_{j=1}^{J}(R(i,j)-\mathscr{R}(i,j))|$ for $i\in[N]$, we use Theorem 1.4 in \cite{tropp2012user}. This theorem is stated below
\begin{thm}\label{BernSquare}
(Theorem 1.4 of \citep{tropp2012user}) Let $\{\tilde{X}_{\tilde{n}}\}$ be a finite sequence that are independent, random matrices with dimension $d$. Assume that
\begin{align*}
\mathbb{E}(\tilde{X}_{\tilde{n}})=0, \mathrm{and~}\|\tilde{X}_{\tilde{n}}\|\leq \tilde{r}~\mathrm{almost~surely}.
\end{align*}
Then, for any $\tilde{t}\in[0,+\infty)$,
\begin{align*}
\mathbb{P}(\|\sum_{\tilde{n}}\tilde{X}_{\tilde{n}}\|\geq \tilde{t})\leq d\cdot \mathrm{exp}(\frac{-\tilde{t}^{2}/2}{\tilde{\sigma}^{2}+\tilde{r}\tilde{t}/3}) \mathrm{~where~}\tilde{\sigma}^{2}:=\|\sum_{\tilde{n}}\mathbb{E}(\tilde{X}^{2}_{\tilde{n}})\|.
\end{align*}
\end{thm}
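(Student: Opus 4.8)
The plan is to prove the stated inequality by the matrix Laplace transform (Chernoff) method, which is the standard route to matrix Bernstein bounds. Since the $\tilde{X}_{\tilde{n}}$ are self-adjoint, write $\|\sum_{\tilde{n}}\tilde{X}_{\tilde{n}}\|=\mathrm{max}(\lambda_{\mathrm{max}}(\sum_{\tilde{n}}\tilde{X}_{\tilde{n}}),\lambda_{\mathrm{max}}(-\sum_{\tilde{n}}\tilde{X}_{\tilde{n}}))$; it therefore suffices to bound $\mathbb{P}(\lambda_{\mathrm{max}}(\sum_{\tilde{n}}\tilde{X}_{\tilde{n}})\ge\tilde{t})$, because the family $\{-\tilde{X}_{\tilde{n}}\}$ has the identical almost-sure norm bound $\tilde{r}$ and the identical variance proxy $\tilde{\sigma}^{2}$ (as $(-\tilde{X}_{\tilde{n}})^{2}=\tilde{X}_{\tilde{n}}^{2}$), so the two one-sided estimates combine into the stated form. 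First I would record the exponential Markov step: for any $\theta>0$, since $e^{\theta\lambda_{\mathrm{max}}(Y)}=\lambda_{\mathrm{max}}(e^{\theta Y})\le\mathrm{tr}\,e^{\theta Y}$ for self-adjoint $Y$, Markov's inequality yields $\mathbb{P}(\lambda_{\mathrm{max}}(\sum_{\tilde{n}}\tilde{X}_{\tilde{n}})\ge\tilde{t})\le e^{-\theta\tilde{t}}\,\mathbb{E}\,\mathrm{tr}\,\exp(\theta\sum_{\tilde{n}}\tilde{X}_{\tilde{n}})$.

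The two core analytic inputs come next. The first is subadditivity of the matrix cumulant generating function: by Lieb's concavity theorem applied iteratively over the independent summands (peeling one off at a time via the tower property of conditional expectation), $\mathbb{E}\,\mathrm{tr}\,\exp(\theta\sum_{\tilde{n}}\tilde{X}_{\tilde{n}})\le\mathrm{tr}\,\exp(\sum_{\tilde{n}}\log\mathbb{E}\,e^{\theta\tilde{X}_{\tilde{n}}})$. The second is a semidefinite moment bound for each centered summand: the scalar function $f(x)=(e^{\theta x}-\theta x-1)/x^{2}$, extended by continuity at $0$, is increasing, so on the eigenvalue range $[-\tilde{r},\tilde{r}]$ one has $e^{\theta x}\le 1+\theta x+g(\theta)x^{2}$ with $g(\theta)=(e^{\theta\tilde{r}}-\theta\tilde{r}-1)/\tilde{r}^{2}$. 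Transferring this to matrices through the spectral calculus, taking expectations, and using $\mathbb{E}\tilde{X}_{\tilde{n}}=0$ together with $I+A\preceq e^{A}$ gives $\mathbb{E}\,e^{\theta\tilde{X}_{\tilde{n}}}\preceq\exp(g(\theta)\mathbb{E}(\tilde{X}_{\tilde{n}}^{2}))$, hence $\log\mathbb{E}\,e^{\theta\tilde{X}_{\tilde{n}}}\preceq g(\theta)\mathbb{E}(\tilde{X}_{\tilde{n}}^{2})$.

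Combining the two inputs with monotonicity of the trace exponential, and using $\mathrm{tr}\,\exp(M)\le d\,\lambda_{\mathrm{max}}(\exp(M))=d\,\exp(\lambda_{\mathrm{max}}(M))$ together with $\lambda_{\mathrm{max}}(\sum_{\tilde{n}}\mathbb{E}(\tilde{X}_{\tilde{n}}^{2}))=\|\sum_{\tilde{n}}\mathbb{E}(\tilde{X}_{\tilde{n}}^{2})\|=\tilde{\sigma}^{2}$ (valid since each $\mathbb{E}(\tilde{X}_{\tilde{n}}^{2})\succeq0$), I would arrive at $\mathbb{P}(\lambda_{\mathrm{max}}(\sum_{\tilde{n}}\tilde{X}_{\tilde{n}})\ge\tilde{t})\le d\,\exp(-\theta\tilde{t}+g(\theta)\tilde{\sigma}^{2})$ for every $\theta>0$. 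The last step is the scalar optimization over $\theta$: the choice $\theta=\tilde{r}^{-1}\log(1+\tilde{r}\tilde{t}/\tilde{\sigma}^{2})$ produces the sharp exponent $-\tilde{\sigma}^{2}\tilde{r}^{-2}h(\tilde{r}\tilde{t}/\tilde{\sigma}^{2})$ with $h(u)=(1+u)\log(1+u)-u$, and the elementary bound $h(u)\ge(u^{2}/2)/(1+u/3)$ converts this into the claimed form $d\,\exp(-(\tilde{t}^{2}/2)/(\tilde{\sigma}^{2}+\tilde{r}\tilde{t}/3))$.

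I expect the main obstacle to be the subadditivity step, whose honest justification rests on Lieb's concavity theorem, namely the concavity of the map $A\mapsto\mathrm{tr}\,\exp(H+\log A)$ on positive-definite $A$; invoking this correctly and chaining it across the independent summands is the genuinely deep part of the argument. By comparison, the semidefinite transfer inequality, the trace-exponential monotonicity, and the terminal scalar optimization are routine, and the reduction from the two-sided operator norm to $\lambda_{\mathrm{max}}$ is mere bookkeeping.
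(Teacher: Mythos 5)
This statement is not proved in the paper at all: it is quoted verbatim as an external tool (Theorem 1.4 of \citep{tropp2012user}) inside the proof of Lemma \ref{boundLLCM}, so there is no in-paper argument to compare against. What you have written is, in substance, Tropp's own proof of the matrix Bernstein inequality, and it is correct in its essentials: the Laplace-transform/Markov step, the subadditivity of the matrix cumulant generating function via Lieb's concavity theorem (you are right that this is the genuinely deep ingredient, and that it must be chained over the summands by conditioning), the semidefinite moment bound obtained from the monotonicity of $x\mapsto(e^{\theta x}-\theta x-1)/x^{2}$ together with $I+A\preceq e^{A}$ and operator monotonicity of $\log$, and the terminal optimization through the Bennett function $h(u)=(1+u)\log(1+u)-u$ with $h(u)\geq (u^{2}/2)/(1+u/3)$. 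Tropp optimizes slightly differently (bounding $g(\theta)$ directly and choosing $\theta=\tilde{t}/(\tilde{\sigma}^{2}+\tilde{r}\tilde{t}/3)$), but your Bennett-then-Bernstein route yields the same exponent and is standard.

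One concrete caveat: your final reduction step does not deliver the statement exactly as quoted. Tropp's Theorem 1.4 bounds $\mathbb{P}(\lambda_{\mathrm{max}}(\sum_{\tilde{n}}\tilde{X}_{\tilde{n}})\geq\tilde{t})$ with prefactor $d$, for self-adjoint summands; the paper's restatement silently replaces $\lambda_{\mathrm{max}}$ by the spectral norm while keeping $d$, and also omits the self-adjointness hypothesis (which you correctly reinstate, and without which $\tilde{X}_{\tilde{n}}^{2}$ need not even be positive semidefinite). Combining your two one-sided tails by a union bound gives prefactor $2d$, not $d$ --- already at $d=1$ the two-sided scalar Bernstein bound carries the factor $2$ --- so "the two one-sided estimates combine into the stated form" is slightly too quick. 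This is a defect of the paper's quotation rather than of your argument: your proof establishes the correct two-sided bound with $2d$, and the discrepancy is immaterial where the theorem is applied in the paper (the application is to scalars, $d=1$, with constants absorbed into probabilities of the form $1-o((N+J)^{-\alpha})$).
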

To use Theorem \ref{BernSquare}, we treat $R(i,j)-\mathscr{R}(i,j)$ as a random matrix with dimension 1. Set $\tilde{W}=\sum_{j=1}^{J}(R(i,j)-\mathscr{R}(i,j))=\sum_{j=1}^{J}\tilde{W}_{j}$, where $\tilde{W}_{j}=R(i,j)-\mathscr{R}(i,j)$. Under LCM, we have $\mathbb{E}(\tilde{W}_{j})=0$, $\|\tilde{W}_{j}\|=|\tilde{W}_{j}|=|R(i,j)-\mathscr{R}(i,j)|\leq M$ (i.e., $\tilde{r}=M$), and $\mathbb{E}(\tilde{W}^{2}_{j})=\mathbb{E}((R(i,j)-\mathscr{R}(i,j))^{2})=\mathrm{Var}(R(i,j))=M\frac{\mathscr{R}(i,j)}{M}(1-\frac{\mathscr{R}(i,j)}{M})\leq\rho$ which gives that $\tilde{\sigma}^{2}=\|\sum_{j=1}^{J}\mathbb{E}(\tilde{W}^{2}_{j})\|\leq\rho J$.

Set $\tilde{t}=\frac{\alpha+1+\sqrt{\alpha^{2}+20\alpha+19}}{3}\sqrt{\rho \mathrm{max}(N,J)\mathrm{log}(N+J)}$. According to Theorem \ref{BernSquare} and Assumption \ref{Assum1}, we get
\begin{align*}
\mathbb{P}(\|\tilde{W}\|\geq\tilde{t})&\leq \mathrm{exp}(-\frac{\tilde{t}^{2}/2}{\tilde{\sigma}^{2}+\frac{\tilde{r}\tilde{t}}{3}})\leq \mathrm{exp}(-\frac{\tilde{t}^{2}/2}{\rho J+\frac{\tilde{t}M}{3}})\\
&=\mathrm{exp}(-(\alpha+1)\mathrm{log}(N+J)\cdot \frac{1}{\frac{2(\alpha+1)\rho J\mathrm{log}(N+J)}{\tilde{t}^{2}}+\frac{2(\alpha+1)M}{3}\frac{\mathrm{log}(N+J)}{\tilde{t}}})\\
&\leq\mathrm{exp}(-(\alpha+1)\mathrm{log}(N+J)\cdot \frac{1}{\frac{2(\alpha+1)\rho \mathrm{max}(N,J)\mathrm{log}(N+J)}{\tilde{t}^{2}}+\frac{2(\alpha+1)M}{3}\frac{\mathrm{log}(N+J)}{\tilde{t}}})\\
&=\mathrm{exp}(-(\alpha+1)\mathrm{log}(N+J)\cdot \frac{1}{\frac{18}{(\sqrt{\alpha+19}+\sqrt{\alpha+1})^{2}}+\frac{2\sqrt{\alpha+1}}{\sqrt{\alpha+19}+\sqrt{\alpha+1}}\sqrt{\frac{M^{2}\mathrm{log}(N+J)}{\rho\mathrm{max}(N,J)}}})\\
&\leq\mathrm{exp}(-(\alpha+1)\mathrm{log}(N+J))=\frac{1}{(N+J)^{\alpha+1}}.
\end{align*}

Therefore, with probability as least $1-o(\frac{1}{(N+J)^{\alpha}})$, we have
\begin{align*}
\|D^{-1/2}_{\tau}R-\mathscr{D}^{-1/2}_{\tau}R\|\leq\frac{\alpha+1+\sqrt{\alpha^{2}+20\alpha+19}}{3}\frac{\sqrt{MN}}{\tau+\delta_{\mathrm{min}}}\sqrt{\rho\mathrm{max}(N,J)\mathrm{log}(N+J)}.
\end{align*}

Combining the two parts gives
\begin{align*}
\|L_{\tau}-\mathscr{L}_{\tau}\|=O((1+\sqrt{\frac{MN}{\tau+\delta_{\mathrm{min}}}})\sqrt{\frac{\rho\mathrm{max}(N,J)\mathrm{log}(N+J)}{\tau+\delta_{\mathrm{min}}}}),
\end{align*}
with probability at least $1-o(\frac{1}{(N+J)^{3}})$ by setting $\alpha=3$.
\end{proof}
\subsection{Proof of Theorem \ref{mainRLCM}}
\begin{proof}
The following lemma is useful for our analysis.
\begin{lem}\label{boundUVWLCM}
Under $\mathrm{LCM}(Z,\Theta)$, we have
\begin{align*}	\|\hat{U}\hat{O}-U\|_{F}\leq\frac{2\sqrt{2K(\tau+\delta_{\mathrm{max}})}\|L_{\tau}-\mathscr{L}_{\tau}\|}{\rho\sigma_{K}(B)\sqrt{N_{\mathrm{min}}}} \mathrm{~and~}\|\hat{U}_{*}\hat{O}-U_{*}\|_{F}\leq\frac{4\sqrt{2K(\tau+\delta_{\mathrm{max}})N_{\mathrm{max}}}\|L_{\tau}-\mathscr{L}_{\tau}\|}{\rho\sigma_{K}(B)\sqrt{N_{\mathrm{min}}}},
\end{align*}
where $\hat{O}$ is an orthogonal matrix.
\end{lem}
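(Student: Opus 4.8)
The plan is to treat this as a standard singular-subspace perturbation problem. By construction the $N\times J$ matrix $\mathscr{L}_{\tau}$ has exactly rank $K$, so its $(K+1)$-th singular value vanishes and the spectral gap separating the top-$K$ left singular subspace from its orthogonal complement equals $\sigma_{K}(\mathscr{L}_{\tau})$. A Davis--Kahan type $\sin\Theta$ bound (in its Frobenius-norm form, with the perturbation measured in spectral norm and the rank-$K$ factor $\sqrt{2K}$) applied to the pair $(L_{\tau},\mathscr{L}_{\tau})$ then produces an orthogonal matrix $\hat{O}$ with
\begin{align*}
\|\hat{U}\hat{O}-U\|_{F}\le \frac{2\sqrt{2K}\,\|L_{\tau}-\mathscr{L}_{\tau}\|}{\sigma_{K}(\mathscr{L}_{\tau})}.
\end{align*}
Everything then reduces to producing an explicit lower bound for $\sigma_{K}(\mathscr{L}_{\tau})$ in terms of the quantities appearing on the right-hand side of the claim.

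For that lower bound I would write $\mathscr{L}_{\tau}=\mathscr{D}^{-1/2}_{\tau}\mathscr{R}=\rho\,\mathscr{D}^{-1/2}_{\tau}ZB'$, using $\Theta=\rho B$ and $\mathscr{R}=Z\Theta'$. Since $\mathscr{D}^{-1/2}_{\tau}$ is diagonal and positive definite with smallest diagonal entry $1/\sqrt{\tau+\delta_{\mathrm{max}}}$, the product inequality for the smallest nonzero singular value gives
\begin{align*}
\sigma_{K}(\mathscr{L}_{\tau})\ge \sigma_{\mathrm{min}}(\mathscr{D}^{-1/2}_{\tau})\,\sigma_{K}(\mathscr{R})=\frac{\sigma_{K}(\mathscr{R})}{\sqrt{\tau+\delta_{\mathrm{max}}}}.
\end{align*}
Applying the same product inequality to $\mathscr{R}=\rho ZB'$, together with $Z'Z=\mathrm{diag}(N_{1},\dots,N_{K})$ so that $\sigma_{K}(Z)=\sqrt{N_{\mathrm{min}}}$, yields $\sigma_{K}(\mathscr{R})\ge \rho\sqrt{N_{\mathrm{min}}}\,\sigma_{K}(B)$. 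Substituting back produces $\sigma_{K}(\mathscr{L}_{\tau})\ge \rho\sqrt{N_{\mathrm{min}}}\,\sigma_{K}(B)/\sqrt{\tau+\delta_{\mathrm{max}}}$, and feeding this into the Davis--Kahan bound recovers the first inequality exactly, including the constant $2\sqrt{2}$.

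For the normalized vectors I would invoke the standard row-normalization lemma: for two nonzero rows $u=(U\hat{O})(i,:)$ and $\hat{u}=\hat{U}(i,:)$ one has $\|\hat{u}/\|\hat{u}\|_{F}-u/\|u\|_{F}\|_{F}\le 2\|\hat{u}-u\|_{F}/\|u\|_{F}$, so summing over $i$ gives $\|\hat{U}_{*}\hat{O}-U_{*}\|_{F}\le 2\big(\min_{i}\|U(i,:)\|_{F}\big)^{-1}\|\hat{U}\hat{O}-U\|_{F}$. By statement (4) of Lemma~\ref{SVDPopulationLtau} every row of $U$ equals some $X(k,:)$ with $\|X(k,:)\|_{F}=1/\sqrt{N_{k}}$, hence $\min_{i}\|U(i,:)\|_{F}=1/\sqrt{N_{\mathrm{max}}}$, which contributes the extra factor $2\sqrt{N_{\mathrm{max}}}$ that upgrades the first bound into the second.

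The main obstacle is the $\sigma_{K}(\mathscr{L}_{\tau})$ lower bound: the product-of-singular-values inequality must be applied in the correct direction to the rank-$K$ products $\mathscr{D}^{-1/2}_{\tau}(ZB')$ and $Z(B')$, and one must verify that the diagonal rescaling contributes the factor $1/\sqrt{\tau+\delta_{\mathrm{max}}}$ rather than $1/\sqrt{\tau+\delta_{\mathrm{min}}}$, since the smallest singular value of $\mathscr{D}^{-1/2}_{\tau}$ arises from the \emph{largest} diagonal entry of $\mathscr{D}$. The row-normalization step is also slightly delicate because it requires a uniform lower bound on the row norms of $U$, which is precisely what Lemma~\ref{SVDPopulationLtau}(4) supplies; once that geometry is in hand, the remaining manipulations are routine.
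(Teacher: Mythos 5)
Your proposal is correct and follows essentially the same route as the paper's proof: a rank-$K$ subspace perturbation bound with the $2\sqrt{2K}\,\|L_{\tau}-\mathscr{L}_{\tau}\|$ numerator (the paper gets the factor $2$ from $\|\hat{L}_{\tau}-\mathscr{L}_{\tau}\|\leq 2\|L_{\tau}-\mathscr{L}_{\tau}\|$ via the best rank-$K$ approximation, where you cite a Davis--Kahan/Wedin form directly), the identical lower bound $\sigma_{K}(\mathscr{L}_{\tau})\geq \rho\sqrt{N_{\mathrm{min}}}\,\sigma_{K}(B)/\sqrt{\tau+\delta_{\mathrm{max}}}$ (the paper phrases it through $\lambda_{K}(\mathscr{L}_{\tau}\mathscr{L}_{\tau}')$, you through products of singular values), and the same row-normalization lemma with $\min_{i}\|U(i,:)\|_{F}=1/\sqrt{N_{\mathrm{max}}}$ yielding the extra $2\sqrt{N_{\mathrm{max}}}$. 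No gaps of substance.
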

\begin{proof}
Let $\hat{L}_{\tau}=\hat{U}\hat{\Sigma}\hat{V}'$, an $N\times J$ matrix with rank $K$. The following inequality holds by the proof of Lemma 3 in \citep{zhou2019analysis},
\begin{align*}	\|\hat{U}\hat{O}-U\|_{F}\leq\frac{\sqrt{2K}\|\hat{L}_{\tau}-\mathscr{L}_{\tau}\|}{\sqrt{\lambda_{K}(\mathscr{L}_{\tau}\mathscr{L}'_{\tau})}},
\end{align*}
where $\lambda_{k}(\cdot)$ denotes the $k$-th largest eigenvalue in magnitude for a matrix.

Due to the fact that $\hat{L}_{\tau}$ is the top $K$ SVD of $L_{\tau}$ and $\mathscr{L}_{\tau}$ is also a rank $K$ matrix, we obtain $\|L_{\tau}-\hat{L}_{\tau}\|\leq\|L_{\tau}-\mathscr{L}_{\tau}\|$, which gives that $\|\hat{L}_{\tau}-\mathscr{L}_{\tau}\|=\|\hat{L}_{\tau}-L_{\tau}+L_{\tau}-\mathscr{L}_{\tau}\|\leq 2\|L_{\tau}-\mathscr{L}_{\tau}\|$. Thus we have
\begin{align*} \|\hat{U}\hat{O}-U\|_{F}\leq\frac{2\sqrt{2K}\|L_{\tau}-\mathscr{L}_{\tau}\|}{\sqrt{\lambda_{K}(\mathscr{L}_{\tau}\mathscr{L}'_{\tau})}}.
\end{align*}
Next we obtain a lower bound of $\lambda_{K}(\mathscr{L}_{\tau}\mathscr{L}'_{\tau})$. Recall that $\mathscr{R}=Z\Theta'=\rho ZB'$ and $\mathscr{L}_{\tau}=\mathscr{D}^{-1/2}_{\tau}\mathscr{R}$, we have
\begin{align*}
\lambda_{K}(\mathscr{L}_{\tau}\mathscr{L}'_{\tau})&=\lambda_{K}(\mathscr{D}^{-1/2}_{\tau}\rho ZB'BZ'\rho\mathscr{D}^{-1/2}_{\tau})=\rho^{2}\lambda_{K}(ZB'BZ'\mathscr{D}^{-1}_{\tau})\geq\rho^{2}\lambda_{K}(ZB'BZ')\lambda_{K}(\mathscr{D}^{-1}_{\tau})\\
&=\rho^{2}\lambda_{K}(B'BZ'Z)\lambda_{K}(\mathscr{D}^{-1}_{\tau})\geq\rho^{2}\lambda_{K}(B'B)\lambda_{K}(Z'Z)\lambda_{K}(\mathscr{D}^{-1}_{\tau})=\frac{\rho^{2}N_{\mathrm{min}}\sigma^{2}_{K}(B)}{\tau+\delta_{\mathrm{max}}}.
\end{align*}
Therefore, we have
\begin{align*}	\|\hat{U}\hat{O}-U\|_{F}\leq\frac{2\sqrt{2K}\|L_{\tau}-\mathscr{L}_{\tau}\|\sqrt{\tau+\delta_{\mathrm{max}}}}{\rho\sigma_{K}(B)\sqrt{N_{\mathrm{min}}}}.
\end{align*}
For $i\in[N]$, Lemma F.2 \citep{mao2018overlapping} gives
\begin{align*}
\|\hat{U}_{*}(i,:)\hat{O}-U_{*}(i,:)\|_{F}\leq\frac{2\|\hat{U}(i,:)\hat{O}-U(i,:)\|_{F}}{\|U(i,:)\|_{F}}.
\end{align*}
Let $\mu=\mathrm{min}_{i\in[N]}\|U(i,:)\|_{F}$. Then the above inequality gives
\begin{align*}
\|\hat{U}_{*}\hat{O}-U_{*}\|_{F}=\sqrt{\sum_{i=1}^{N}\|\hat{U}_{*}(i,:)\hat{O}-U_{*}(i,:)\|^{2}_{F}}\leq\frac{2\|\hat{U}\hat{O}-U\|_{F}}{\mu}.
\end{align*}
By the proof of Lemma \ref{SVDPopulationLtau}, we know that  $\|U(i,:)\|_{F}=\frac{1}{\sqrt{N_{\ell(i)}}}\geq\frac{1}{\sqrt{N_{\mathrm{max}}}}$, i.e., $\mu\geq\frac{1}{\sqrt{N_{\mathrm{max}}}}$. Therefore, we have
\begin{align*}
\|\hat{U}_{*}\hat{O}-U_{*}\|_{F}\leq2\sqrt{N_{\mathrm{max}}}\|\hat{U}\hat{O}-U\|_{F}\leq\frac{4\sqrt{2K(\tau+\delta_{\mathrm{max}})N_{\mathrm{max}}}\|L_{\tau}-\mathscr{L}_{\tau}\|}{\rho\sigma_{K}(B)\sqrt{N_{\mathrm{min}}}}.
\end{align*}
\end{proof}
Next, we obtain the theoretical error bounds of LCA-RSC and LCA-RSCn separately.
\begin{itemize}
  \item (error rates of LCA-RSC) When $\hat{Z}$ and $\hat{\Theta}$ are obtained from Algorithm \ref{alg:RSC}: Let $\varsigma_{LCA-RSC}>0$ be a small value. Based on Lemma 2 of \citep{joseph2016impact} and the 1st equality in the 4th statement of Lemma \ref{SVDPopulationLtau}, we have the following result: if
      \begin{align}\label{RSCerror}
      \frac{\sqrt{K}}{\varsigma_{LCA-RSC}}\|U-\hat{U}\hat{O}\|_{F}(\frac{1}{\sqrt{N_{k}}}+\frac{1}{\sqrt{N_{l}}})\leq\sqrt{\frac{1}{N_{k}}+\frac{1}{N_{l}}}, \mathrm{~for~}1\leq k\neq l\leq K,
      \end{align}
      then the Clustering error $\hat{f}_{LCA-RSC}=O(\varsigma^{2}_{LCA-RSC})$. We find that setting $\varsigma_{LCA-RSC}=\sqrt{\frac{2KN_{\mathrm{max}}}{N_{\mathrm{min}}}}\|U-\hat{U}\hat{O}\|_{F}$ makes Equation (\ref{RSCerror}) hold. Thus we have $\hat{f}_{LCA-RSC}=O(\varsigma^{2}_{LCA-RSC})=O(\frac{KN_{\mathrm{max}}\|U-\hat{U}\hat{O}\|^{2}_{F}}{N_{\mathrm{min}}})$. Then by the 1st inequality in Lemma \ref{boundUVWLCM}, we have
      \begin{align*}
      \hat{f}_{LCA-RSC}=O(\frac{K^{2}(\tau+\delta_{\mathrm{max}})N_{\mathrm{max}}\|L_{\tau}-\mathscr{L}_{\tau}\|^{2}}{\rho^{2}\sigma^{2}_{K}(B)N^{2}_{\mathrm{min}}}).
      \end{align*}
      By Lemma \ref{boundLLCM}, we have
      \begin{align*}
      \hat{f}_{LCA-RSC}=O((1+\frac{MN}{\tau+\delta_{\mathrm{min}}}+2\sqrt{\frac{MN}{\tau+\delta_{\mathrm{min}}}})\frac{K^{2}(\tau+\delta_{\mathrm{max}})N_{\mathrm{max}}\mathrm{max}(N,J)\mathrm{log}(N+J)}{(\tau+\delta_{\mathrm{min}})\rho\sigma^{2}_{K}(B)N^{2}_{\mathrm{min}}}).
      \end{align*}

      Next, we bound $\frac{\|\hat{\Theta}-\Theta\|_{F}}{\|\Theta\|_{F}}$. Because $\tilde{\Theta}$ is almost always the same as $\hat{\Theta}$ in practice for both algorithms, to simplify our theoretical analysis, we set $\hat{\Theta}$ as $\tilde{\Theta}$ directly. Following a similar proof of Lemma \ref{boundLLCM} gives that $\|R-\mathscr{R}\|=O(\sqrt{\rho~\mathrm{max}(N,J)\mathrm{log}(N+J)})$ holds with probability at least $1-o(\frac{1}{(N+J)^{3}})$ when Assumption \ref{Assum1} is satisfied. Thus, we have
\begin{align*}
\|\hat{\Theta}-\Theta\|&=\|R'\hat{Z}(\hat{Z}'\hat{Z})^{-1}-\mathscr{R}'Z(Z'Z)^{-1}\|=\|(R'-\mathscr{R}')\hat{Z}(\hat{Z}'\hat{Z})^{-1}+\mathscr{R}'(\hat{Z}(\hat{Z}'\hat{Z})^{-1}-Z(Z'Z)^{-1})\|\\
&\leq\|(R'-\mathscr{R}')\hat{Z}(\hat{Z}'\hat{Z})^{-1}\|+\|\mathscr{R}'(\hat{Z}(\hat{Z}'\hat{Z})^{-1}-Z(Z'Z)^{-1}\mathcal{P})\|\leq\|R-\mathscr{R}\|\|\hat{Z}(\hat{Z}'\hat{Z})^{-1}\|+\|\rho ZB'\|\|\hat{Z}(\hat{Z}'\hat{Z})^{-1}-Z(Z'Z)^{-1}\|\\
&\leq\|R-\mathscr{R}\|\|\hat{Z}(\hat{Z}'\hat{Z})^{-1}\|+\rho \|Z\|\|B\|\|\hat{Z}(\hat{Z}'\hat{Z})^{-1}-Z(Z'Z)^{-1}\|=O(\sqrt{\frac{\rho~\mathrm{max}(N,J)\mathrm{log}(N+J)}{N_{\mathrm{min}}}})+O(\frac{\rho\sqrt{N_{\mathrm{max}}}\sigma_{1}(B)}{\sqrt{N_{\mathrm{min}}}}),
\end{align*}
where we use $\frac{1}{\sqrt{N_{\mathrm{min}}}}$ to roughly approximate $\|\hat{Z}(\hat{Z}'\hat{Z})^{-1}\|$ and $\|\hat{Z}(\hat{Z}'\hat{Z}^{-1})-Z(Z'Z)^{-1}\|$ since $\|Z(Z'Z)^{-1}\|=\frac{1}{\sqrt{N_{\mathrm{min}}}}$ and $\hat{Z}$ is close to $Z$. Since $\|\hat{\Theta}-\Theta\|_{F}\leq\sqrt{K}\|\hat{\Theta}-\Theta\|$, we have
\begin{align*}
\|\hat{\Theta}-\Theta\|_{F}=O(\sqrt{\frac{\rho K~\mathrm{max}(N,J)\mathrm{log}(N+J)}{N_{\mathrm{min}}}})+O(\frac{\rho\sqrt{KN_{\mathrm{max}}}\sigma_{1}(B)}{\sqrt{N_{\mathrm{min}}}}).
\end{align*}
Since $\|\Theta\|_{F}\geq\|\Theta\|=\rho\sigma_{1}(B)$, we have
\begin{align*}
\frac{\|\hat{\Theta}-\Theta\|_{F}}{\|\Theta\|_{F}}\leq\frac{\|\hat{\Theta}-\Theta\|_{F}}{\rho\sigma_{1}(B)}=O(\sqrt{\frac{K~\mathrm{max}(N,J)\mathrm{log}(N+J)}{\rho N_{\mathrm{min}}\sigma^{2}_{1}(B)}})+O(\sqrt{\frac{KN_{\mathrm{max}}}{N_{\mathrm{min}}}}).
\end{align*}
Because $\rho$ does not appear in the term $O(\sqrt{KN_{\mathrm{max}}/N_{\mathrm{min}}})$ and this term is $O(1)$ if we further let $K=O(1), N_{\mathrm{max}}=O(N/K)$, and $N_{\mathrm{min}}=O(N/K)$. Thus this term can be removed from the error bound and we have
\begin{align*}
\frac{\|\hat{\Theta}-\Theta\|_{F}}{\|\Theta\|_{F}}\leq O(\sqrt{\frac{K~\mathrm{max}(N,J)\mathrm{log}(N+J)}{\rho N_{\mathrm{min}}\sigma^{2}_{1}(B)}})\leq O(\sqrt{\frac{K~\mathrm{max}(N,J)\mathrm{log}(N+J)}{\rho N_{\mathrm{min}}\sigma^{2}_{K}(B)}}).
\end{align*}
  \item (error rates of LCA-RSCn) When $\hat{Z}$ and $\hat{\Theta}$ are obtained from Algorithm \ref{alg:RSCn}: Let $\varsigma_{LCA-RSCn}>0$ be a small value. By Lemma 2 of \citep{joseph2016impact} and the 2nd equality in the 4th statement of Lemma \ref{SVDPopulationLtau}, if
      \begin{align}\label{RSCnerror}
      \frac{\sqrt{K}}{\varsigma_{LCA-RSCn}}\|U_{*}-\hat{U}_{*}\hat{O}\|_{F}(\frac{1}{\sqrt{N_{k}}}+\frac{1}{\sqrt{N_{l}}})\leq\sqrt{2}, \mathrm{~for~}1\leq k\neq l\leq K,
      \end{align}
      then $\hat{f}_{LCA-RSCn}=O(\varsigma^{2}_{LCA-RSCn})$. Letting $\varsigma_{LCA-RSCn}=\sqrt{\frac{2K}{N_{\mathrm{min}}}}\|U_{*}-\hat{U}_{*}\hat{O}\|_{F}$ makes Equation (\ref{RSCerror}) hold. Thus $\hat{f}_{LCA-RSCn}=O(\varsigma^{2}_{LCA-RSCn})=O(\frac{K\|U_{*}-\hat{U}_{*}\hat{O}\|^{2}_{F}}{N_{\mathrm{min}}})$. Then the 2nd inequality in Lemma \ref{boundUVWLCM} gives
      \begin{align*}
      \hat{f}_{LCA-RSCn}=O(\frac{K^{2}(\tau+\delta_{\mathrm{max}})N_{\mathrm{max}}\|L_{\tau}-\mathscr{L}_{\tau}\|^{2}}{\rho^{2}\sigma^{2}_{K}(B)N^{2}_{\mathrm{min}}}).
      \end{align*}
We see that $\hat{f}_{LCA-RSCn}$ equals to $\hat{f}_{LCA-RSC}$. Meanwhile, the bound for $\frac{\|\hat{\Theta}-\Theta\|_{F}}{\|\Theta\|_{F}}$ is the same as that of LCA-RSC.
\end{itemize}
\end{proof}

\bibliographystyle{model5-names}\biboptions{authoryear}
\bibliography{refRSCLCA}
\end{document}